\newcommand{\tabincell}[2]{\begin{tabular}{@{}#1@{}}#2\end{tabular}}
\theoremstyle{definition}
\newtheorem{definition}{Definition}
\theoremstyle{definition}
\theoremstyle{definition}
\newtheorem{lemma}{Lemma}
\theoremstyle{definition}
\newtheorem{theorem}{Theorem}
\theoremstyle{definition}
\theoremstyle{remark}
\newtheorem{remark}{Remark}
\newcommand{\set}{\mathcal}
\def\x{{\bm x}}
\begin{document}

\title{Compressed Subspace Learning Based on Canonical Angle Preserving Property}

\author{Yuchen~Jiao, Gen~Li, and~Yuantao~Gu%
\thanks{ 
The authors are with Department of Electronic Engineering, Tsinghua University, Beijing 100084, China. 
The corresponding author of this paper is Y. Gu (gyt@tsinghua.edu.cn).}
}
\date{Manuscript submitted July 14, 2019.}

\maketitle

\begin{abstract}
Union of Subspaces (UoS) is a popular model to describe the underlying low-dimensional structure of data.
The fine details of UoS structure can be described
in terms of canonical angles (also known as principal angles) between subspaces,
which is a well-known characterization for relative subspace positions.
In this paper, we prove that random projection with the so-called Johnson-Lindenstrauss (JL) property
 approximately preserves canonical angles between subspaces with overwhelming probability.
This result indicates that random projection approximately preserves the UoS structure.
Inspired by this result, we propose a framework of Compressed Subspace Learning (CSL),
which enables to extract useful information from the UoS structure of data in a greatly reduced dimension.
We demonstrate the effectiveness of CSL in various subspace-related tasks
such as subspace visualization, active subspace detection, and subspace clustering.

{\bf Keywords:} 
Dimensionality reduction, random projection, Union of Subspaces, canonical angles, Johnson-Lindenstrauss property
\end{abstract}

\section{Introduction}


Many data analysis tasks in machine learning and data mining deal with
real-world data that are presented in high-dimensional spaces,
which often brings prohibitively high computational complexity.
Attempting to resolve this problem led data scientists to the
discovery that many high-dimensional real-world data sets
possess some low-dimensional structure that makes them easier to handle.
Various models have been proposed to describe such structures,
among which the Union of Subspaces (UoS) model is a popular one.
It assumes that in a data set with high ambient dimension,
the data points actually lie on a few low-dimensional linear subspaces.
This model has successfully characterized the intrinsic low-dimensional structure of many data sets,
including face images from multiple individuals,
marker trajectories from multiple rigid objectives,
hyperspectral images,
and gene expression data \citep{Elhamifar2009Sparse,Ehsan2013Sparse,Mcwilliams2014Subspace,Zhai2017Anew}.

The task of subspace learning%
\footnote{Here the subspace learning has a different meaning from it is in
 some other literatures.}
 is then to extract useful information
from UoS structure of data.
For example, subspace clustering
seeks to simultaneously segment data with the same underlying structure and
estimate the latent low-dimensional subspaces,
active subspace detection assigns category labels to newly-encountered data points
by identifying the subspace they lie in,
subspace visualization helps to discover the correlation and irregularity
in a data set, like outliers distributing in the whole space.
Many
algorithms of the above subspace learning tasks have been proposed.
The performance of these algorithms has been found closely related to
the concept of \emph{subspace structure}%
\footnote{Here \emph{subspace structure} is a rough concept describing relative subspace positions.
It may represent affinity, subspace distances, and canonical angles between subspaces.}
\citep{Noisy2016Wang,Heckel2015dimensionality,Meng2018Ageneral,Lodhi2017Detection}.

There is, however, a natural question that gets unnoticed in the design of these classical algorithms, to cite \citet{donoho2006compressed}, ``why go to so much effort to acquire all the data when most of what we get will be thrown away?''
In our case, this translates to the following:
 since UoS structure involves only a collection of low-dimensional subspaces that cost much less to describe than the original high-dimensional representation of all data points,
why do we go to so much effort to process the redundant high-dimensional representation?
This motivates us to propose in this paper the framework of Compressed Subspace Learning (CSL),
which significantly reduces the sampling and processing complexity of subspace learning
by utilizing random projection to map the original data to a space with dimension
$O(d)$, where $d$ is the maximal dimension of underlying subspaces in UoS model.
At most interesting applications, $d$ is indeed extremely low
compared with the ambient dimension of the data \citep{Noisy2016Wang}.
It is obvious that this bound cannot be improved:
a $d$-dimensional subspace cannot be embedded in a space with dimension less than $d$,
thus there is no way to preserve the UoS structure if the data is to be mapped into
a space with dimension less than $d$.

To analytically characterize the impact exerted by random projection on UoS structure,
we restrict our attention to a class of random projections with so-called
Johnson-Lindenstrauss (JL) property \citep{Foucart2017Mathematical}.
This choice is advantageous in that JL property is a strong concentration property
yet satisfied by a very wide range of random matrices,
such as Gaussian matrices, Bernoulli matrices, other sub-Gaussian matrices,
and some matrices with fascinating fast algorithms.
For such random projection we prove that
the UoS structure, described in terms of canonical angles,
is approximately preserved after being projected onto a space of dimension $O(d)$.
We call this property \emph{Canonical Angle Preserving} (CAP) property.
CAP property forms the theoretical foundation of our CSL framework.
We test the performance preserving property of our framework on several subspace-related tasks,
including subspace visualization, active subspace detection, and subspace clustering.

\subsection{Random Projection and Its Structure Preserving Property}
\label{subsec:intro-preserving-property-GP}
Among numerous dimensionality reduction methods,
linear methods are widely used in practice
for their simple geometric interpretations and
computational efficiency.
The most famous one in this category may be Principal Component Analysis (PCA),
which projects the original data onto a low-dimensional space such
that the dimensionality-reduced training data has the maximized variance.
Random projection is another famous family of linear methods,
which reduces the dimension of original data by multiplying it with a fat random matrix.
Random projection has the advantage of high computational efficiency
and being data-free.

More technically speaking,
random projection uses a randomly generated matrix
to map the original high-dimensional
data in $\mathbb{R}^N$ to a low-dimensional space $\mathbb{R}^n$, $n<N$.
It is, of course, impossible to undertake a comprehensive study on all types of random projections,
and practice indicates there are only a few random matrices
that are interesting enough to be used for random projection.
Typical examples include Gaussian matrices, Bernoulli matrices, other sub-Gaussian matrices,
partial Fourier matrices, and partial Hadamard matrices.
Though many previous works on random projection and random matrix theory
focus on sub-Gaussian matrices,
structured random matrices like partial Fourier matrices and partial Hadamard matrices are also important due to computational convenience.
In fact,
the computational complexity of random projection is $O(Nn)$ for sub-Gaussian matrices,
and $O(N\log N)$ for partial Fourier matrices and partial Hadamard matrices.
Note that the $O(Nn)$ complexity is at the same level with PCA,
while the $O(N\log N)$ complexity is close to optimal
since computing the dimensionality-reduced image of a generic $N$-dimensional vector
requires at least $O(N)$ time (to read the input).

It turns out that there is a systematic scheme to treat
most of the aforementioned random matrices
via JL property,
most notably the ones with fast algorithms \citep{Foucart2017Mathematical, xingyu}.
We will study under this scheme the distortion of subspace structure brought by random projection.

\begin{figure}[t]
  \centering
  \centerline{\includegraphics[width=\textwidth]{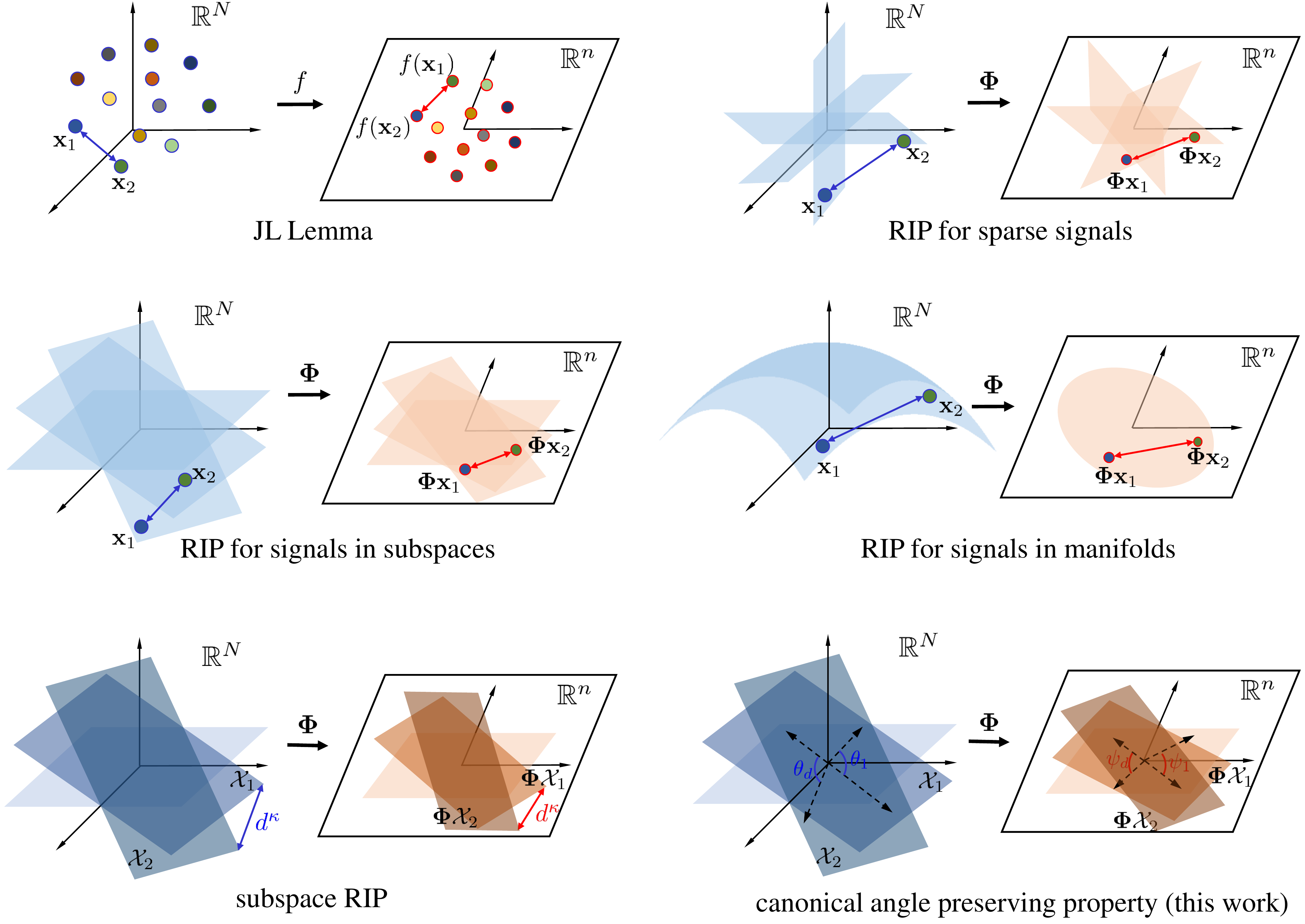}}
\caption{JL Lemma, RIP for sparse signals,
RIP for signals in subspaces,
RIP for signals in manifolds,
subspace RIP, and canonical angle preserving property.
Notation $f$, ${\bm \Phi}$ denotes the linear map and random projection
from $\mathbb{R}^N$ to $\mathbb{R}^n$, respectively, $n<N$.
Notation ${\bm x}_i$ and $\mathcal{X}_i$ denotes the point and $d$-dimensional subspace
in Euclidean space, respectively.
Notation $d^{\kappa}$ denotes the projection Frobenius-norm distance,
and $\theta_k$, $\psi_k$ denotes the $k$-th canonical angle before and after random projection, respectively, $k=1,\cdots,d$.}
\label{fig:JL}
\end{figure}

The investigation of distortion on subspace structure induced by random projection
fits into the long history of researches on structure preserving property of random projection.
Figure~\ref{fig:JL} depicts some results in this vein.
The story begins with the classical Johnson-Lindenstrauss Lemma,
which considers the structure of point sets in Euclidean space described by pairwise distance.
JL Lemma states that
for any a set $S$ consisting of $L$ points in Euclidean space $\mathbb{R}^N$,
there is a map $f:\mathbb R^N\to\mathbb R^n$
where $n=O(\varepsilon^{-2}\log L)$,
such that all pairwise Euclidean distances in $S$ are
preserved up to a factor of $(1\pm\varepsilon)$.
This result is originally proved by choosing $f$ to be Gaussian random projection \citep{Johnson1984Extensions}.
JL Lemma has now become a fundamental lemma in the theory of machine learning.
Another notion related to JL Lemma is the classical Restricted Isometry Property (RIP) for sparse signals,
which states that all sparse vectors in $\mathbb{R}^N$ with sparsity no more than $k$
can be embedded into $O(k\log(N/k))$ dimensions with the
pairwise Euclidean distances preserved up to $(1\pm\varepsilon_{2k})$
\citep{Cand2008The, Baraniuk2015A}.
It has been proved that sub-Gaussian random matrices and some sparse random matrices satisfy RIP for sparse signals
with probability $1-{\rm e}^{-O(n)}$ \citep{Candes2006Compressive, Eftekhari2015Therestricted}.
This conclusion has remarkably fascinated the researches in compressed sensing.
More researches show that sub-Gaussian random matrices are able to preserve some other low-dimensional structures, for instance, pairwise distance of data points on subspaces and manifolds \citep{Dirksen2007Dimensionality}.
These results are named as RIP for signals in subspaces and manifolds.

In the recent decade, the powerful UoS model
leads to a new point of view that the structure of many real-world data sets
is in fact the structure of a collection of subspaces where the data points reside
\citep{Eldar2009Robust, Complex2014Zhu}.
In spite of the extensive study in the literature on the distance preserving property for data points,
it was not clear whether random projection preserves the distance or more refined structure
of subspaces until the emergence of \citet{LiGenTSP} and \citet{optimal}.
In these two papers it is proved that Gaussian random projection
can approximately preserve the affinity between two subspaces.
These two papers also proved that the so-called projection Frobenius-norm distance
of subspaces are approximately preserved
and named this property \emph{subspace RIP}.
More precisely,
in \citet{optimal} it is stated that
any $L$ given subspaces with dimensions at most $d$
can be embedded by Gaussian random matrices
into $O(\max\{d,\log L,\log(1/\delta)\})$ dimensions
with probability $1-\delta$,
such that their pairwise projection Frobenius-norm distances
are preserved up to a factor of $(1\pm\varepsilon)$.

Subspace structure plays an essential role in many algorithms based on UoS model.
For example, it has been proved that subspace affinity
or canonical angles influence
the performance of subspace clustering algorithms,
including Sparse Subspace Clustering (SSC) in \citet{Ehsan2013Sparse}, thresholding-based subspace clustering (TSC) in \citet{Heckel2015robust}, and SSC via Orthogonal Matching Pursuit (SSC-OMP) in \citet{Dyer2013Greedy} and \citet{You2016Scalable}.
When applying these algorithms on dimensionality-reduced data sets,
subspace structure preserving property turns out to be a useful tool in analyzing their performance \citep{Meng2018Ageneral}.


\subsection{Canonical Angles}
\label{subsec:intro-principal-angles}

It is well-known that subspace structure is perfectly described by canonical angles,
also known as principal angles \citep{jordan1875essai,Wong1967Differential}.
These are a sequence of acute angles
that provide a complete characterization of the relative subspace positions in the following sense:
\begin{theorem}\citep{Wong1967Differential}
If the canonical angles between subspaces $\mathcal{S}_1,\mathcal{S}_2$
are identical with the canonical angles between subspaces $\mathcal{S}_1^{\prime},\mathcal{S}_2^{\prime}$,
then there exists an orthogonal transform $\bm T$ such that
${\bm T}\mathcal{S}_1=\mathcal{S}_1^{\prime}$, ${\bm T}\mathcal{S}_2=\mathcal{S}_2^{\prime}$.
\end{theorem}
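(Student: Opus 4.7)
The plan is to construct the orthogonal transform $\bm T$ from matched bases of principal vectors, so that $\bm T$ realizes the canonical-angle data on both subspaces simultaneously. I take as given the standard SVD characterization of canonical angles: for two subspaces $\mathcal{S}_1, \mathcal{S}_2$ of equal dimension $d$, the singular value decomposition of $\bm U_1^\top \bm U_2$ (with $\bm U_i$ an orthonormal-column matrix spanning $\mathcal{S}_i$) yields orthonormal bases $\{\bm u_i\}_{i=1}^d$ of $\mathcal{S}_1$ and $\{\bm v_i\}_{i=1}^d$ of $\mathcal{S}_2$, the \emph{principal vectors}, satisfying $\langle \bm u_i, \bm v_j\rangle = \cos\theta_i\,\delta_{ij}$. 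Implicit in the statement is that $\dim\mathcal{S}_i = \dim\mathcal{S}_i'$ for $i=1,2$ (otherwise the angle lists have different lengths, or no orthogonal map can match the subspaces); I treat the equal-dimension case, since the $\dim\mathcal{S}_1 \neq \dim\mathcal{S}_2$ case only adds bookkeeping for extra directions of the larger subspace lying in $\mathcal{S}_1^\perp$.

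For every index $i$ with $\sin\theta_i > 0$, define
\[
\bm w_i := (\bm v_i - \cos\theta_i\,\bm u_i)/\sin\theta_i .
\]
A direct computation from the diagonal inner-product relation shows that $\bm w_i$ is a unit vector perpendicular to $\mathcal{S}_1$, that $\{\bm w_i\}$ is orthonormal, and that
\[
\bm v_i = \cos\theta_i\,\bm u_i + \sin\theta_i\,\bm w_i .
\]
Performing the identical construction on the primed pair $(\mathcal{S}_1', \mathcal{S}_2')$ produces vectors $\{\bm u_i'\}, \{\bm v_i'\}, \{\bm w_i'\}$ obeying the same relation with the same angles $\theta_i$, by hypothesis. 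The concatenation $\{\bm u_i\}\cup\{\bm w_i\}$ is an orthonormal basis of $\mathcal{S}_1+\mathcal{S}_2$, and analogously on the primed side; the two bases have matching cardinality precisely because the canonical-angle lists coincide.

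Finally, extend both orthonormal sets to orthonormal bases of the ambient space $\mathbb{R}^N$, and let $\bm T$ be the unique linear map sending each element of the first extended basis to the corresponding element of the second. Then $\bm T$ is orthogonal by construction, $\bm T\bm u_i = \bm u_i'$ gives $\bm T\mathcal{S}_1 = \mathcal{S}_1'$, and
\[
\bm T\bm v_i = \cos\theta_i\,\bm u_i' + \sin\theta_i\,\bm w_i' = \bm v_i'
\]
gives $\bm T\mathcal{S}_2 = \mathcal{S}_2'$. The main obstacle I anticipate is careful bookkeeping in the degenerate indices: when $\theta_i = 0$ one has $\bm v_i = \bm u_i$ and no $\bm w_i$ is produced, which shrinks $\dim(\mathcal{S}_1+\mathcal{S}_2)$ by $\dim(\mathcal{S}_1\cap\mathcal{S}_2)$, but the same degeneracy appears at the same index on the primed side, so the matching of bases still goes through. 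Everything else in the argument is immediate from the principal-vector construction.
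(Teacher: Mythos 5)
Your proof is correct. Note that the paper itself offers no argument for this statement --- it is quoted as a classical result of \citet{Wong1967Differential} --- so there is no in-paper proof to compare against; what you have written is essentially the standard textbook argument, built on the SVD characterization of principal vectors (Lemma~\ref{singular-computing-PA}). The computations you leave as ``direct'' do check out: from $\langle \bm u_i,\bm v_j\rangle=\cos\theta_i\,\delta_{ij}$ and the orthonormality of $\{\bm u_i\}$ and $\{\bm v_i\}$ one gets $\|\bm v_i-\cos\theta_i\,\bm u_i\|^2=\sin^2\theta_i$, orthogonality of $\bm v_i-\cos\theta_i\,\bm u_i$ to every $\bm u_j$ (hence to $\mathcal{S}_1$), and mutual orthogonality of the $\bm w_i$, so the map sending $\bm u_i\mapsto\bm u_i'$, $\bm w_i\mapsto\bm w_i'$ (extended arbitrarily on orthonormal completions of matching cardinality) is orthogonal and carries $\bm v_i$ to $\bm v_i'$. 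You are also right to flag the implicit hypothesis $\dim\mathcal{S}_i=\dim\mathcal{S}_i'$, without which the claim fails, and your handling of $\theta_i=0$ is fine since then $\bm v_i=\bm u_i$ on both sides. The only assertion you do not spell out is the unequal-dimension bookkeeping, but it is indeed routine: if $d_1<d_2$, any $\bm z\in\mathcal{S}_2$ orthogonal to the principal vectors $\bm v_1,\dots,\bm v_{d_1}$ satisfies $\bm U_1^{\rm T}\bm z=\bm Q_1\bm\Sigma\bm Q_2^{\rm T}\bm c=\bm 0$, so the extra directions of the larger subspace are automatically orthogonal to $\mathcal{S}_1$ and to every $\bm w_i$, and they can be matched across the two pairs by any orthonormal identification.
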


It is thus obvious that any other quantity describing relative subspace positions
is a function of canonical angles,
for example, the affinity between two subspaces \citep{soltanolkotabi2012geometric},
any notion of rotation-invariant subspace distance,
including the aforementioned projection Frobenius-norm distance
and the widely-used geodesic distance \citep{YeLim},
other definitions of subspace angles, including
product angle \citep{Miao1992Onprincipal, Miao1996product},
Friedrichs angle, and Dixmier angle \citep{Deutsch1995The}.
See Appendix \ref{app:subspace-distance} for a discussion on different definitions of subspace distance, and the advantage of canonical angles over these subspace distances
in characterizing relative subspace positions.

\subsection{Contributions}

This work first studies the distortion of canonical angles
induced by random projection with JL property.
To be precise,
it is proved that for any $L$ given subspaces with dimensions at most $d$,
they can be mapped to a low-dimensional space $\mathbb{R}^n$
with each canonical angle
preserved up to $(1\pm\varepsilon)$
with probability $1-\delta$.
The requirement on dimension $n$ is given by $n=O(\varepsilon^{-2}\max\{d,\log L,\log(1/\delta)\})$.
This result indicates that each canonical angle is approximately preserved
by random projection with JL property,
and thus is called canonical angle preserving (CAP) property.
As canonical angles best characterize the relative subspace positions,
CAP property implies that subspace structure also remains almost unchanged after dimensionality reduction.
Based on CAP property,
some other important concepts on subspace structure,
such as various notions of subspace distance,
are also proven to be almost invariant.

With CAP property as the theoretical foundation,
we propose the Compressed Subspace Learning (CSL) framework,
which enables to process data
in a space with reduced dimension that is much lower than the ambient dimension
without deteriorating the performance.
We verify the effectiveness of this framework
on three concrete subspace-learning tasks, namely subspace visualization, active subspace detection, and subspace clustering.
The experiments and theoretical analyses show that
the performance of all of these three algorithms
are almost preserved.
Another observation on subspace clustering is that
applying CSL framework successfully circumvents the curse of dimensionality
for it significantly reduces the dimension of the data by JL random projection.
Considering that CAP property is independent of algorithms,
we infer that CSL is a universally effective framework
for subspace-related tasks.

\subsection{Organization and Notations}

The rest of this paper is organized as follows.
In Section~\ref{sec:preliminary}, definitions and basic properties about canonical angles and JL property are provided.
In Section~\ref{sec:principal-angle-preserving},
we precisely state our main theoretical result, i.e., CAP property,
and use it to establish a general subspace RIP.
With these theoretical results as foundation,
we formulate the CSL framework in Section~\ref{sec:framework},
and give some description.
Section~\ref{sec:proof-theorem-multi-angle} is devoted to a full proof of CAP property.
In Section~\ref{sec:app-compression},
we empirically show the effectiveness of CSL framework
on three subspace-related tasks.
The corresponding performance analysis are deferred to appendix.
Finally, in Section~\ref{sec:conclusion} we conclude the paper.

Throughout this paper,
bold upper and lower case letters are used to denote matrices and vectors, respectively.
Notation ${\bm V}^{\rm T}$ denotes the transposition of matrix ${\bm V}$.
Notation $\left\|{\bm v}\right\|$ denotes the $\ell_2$-norm of vector ${\bm v}$.
$\sigma_k({\bm V})$ denotes the $k$-th singular value of matrix ${\bm V}$ in descending order.
Letters in calligraphy denote subspaces, such as ${\set X}$, ${\set Y}$ and ${\set S}$.
The orthogonal  complement of subspace ${\set S}$ is denoted by ${\set S}^{\perp}$.
The projection of vector ${\bm v}$ onto subspace $\mathcal{S}$ is denoted as ${\bm P}_{\mathcal S}({\bm v})$.
The $(k-1)$-dimensional unit sphere is denoted by $\bm{S}^{k-1}$.

\section{Preliminary}
\label{sec:preliminary}

We now give the precise definition of canonical angles discussed in Section~\ref{subsec:intro-principal-angles}
as below.

\begin{definition}\citep{PrincipalAngles2006}\label{principal-angle}
Assume there are two subspaces ${\set X}_1$, ${\set X}_2$ with dimensions $d_1\le d_2$.
There are $d_1$ canonical angles $\theta_1$, $\cdots$, $\theta_{d_1}$ between them, which
are recursively defined as
$$
\cos{\theta_k}=\max\limits_{{\bm x}_1 \in {\set X}_1}\max\limits_{{\bm x}_2 \in {\set X}_2}\frac{{\bm x}_1^{\rm T}{\bm x}_2}{\Vert {\bm x}_1\Vert \Vert {\bm x}_2\Vert}=:\frac{{\bm x}_{1,k}^{\rm T}{\bm x}_{2,k}}{\Vert {\bm x}_{1,k}\Vert\Vert {\bm x}_{2,k}\Vert}, \quad k=1,\cdots,d_1,
$$
where the maximization is with the constraints ${\bm x}_l^{\rm T}{\bm x}_{li}=0$,
 $i=1,\cdots,k-1$, $l=1,2$.
The vectors $({\bm x}_{1,k},{\bm x}_{2,k})$ are the corresponding pairs of principal vectors, $k=1,\cdots, d_1$.
\end{definition}

Clearly $0\le\theta_1\le\cdots\le\theta_{d_1}\le \pi/2$.
We remark that canonical angles are uniquely defined, while principal vectors are not.

An alternative definition of canonical angles and principal vectors via singular values is stated as below, which is equivalent to Definition \ref{principal-angle}.

\begin{lemma}\citep{PrincipalAngles1973}\label{singular-computing-PA}
Let ${{\bm X}}_l\in\mathbb{R}^{m\times d_l}$ be an orthonormal basis for the subspace $\mathcal{X}_l$ with dimension $d_l$, $l=1,2$ and suppose $d_1 \le d_2$.
If we apply singular decomposition on ${\bm X}_1^{\rm T}{\bm X}_2$ and get the thin SVD
${\bm X}_1^{\rm T}{\bm X}_2={\bm Q}_1{\bm \Sigma}{\bm Q}_2^{\rm T}$,
where ${\bm \Sigma}={\rm diag}(\sigma_1,\cdots,\sigma_{d_1})$ with $\sigma_1\ge\cdots\ge\sigma_{d_1}$.
Then the cosine of the $k$-th canonical angle $\theta_k$ between $\mathcal{X}_1$, $\mathcal{X}_2$ is defined as
$$
\cos\theta_k = \sigma_k, \quad \forall 1\le k\le d_1.
$$
Columns of ${\bm X}_1{\bm Q}_1$ and ${\bm X}_2{\bm Q}_2$ are principal vectors.
\end{lemma}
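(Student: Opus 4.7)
The plan is to translate the recursive variational definition into coordinates given by the orthonormal bases $\bm{X}_1,\bm{X}_2$, so that the problem reduces to the classical variational characterization of singular values of the compressed matrix $\bm{M}:=\bm{X}_1^{\rm T}\bm{X}_2\in\mathbb{R}^{d_1\times d_2}$. Since every $\bm{x}_l\in\mathcal{X}_l$ has a unique representation $\bm{x}_l=\bm{X}_l\bm{a}_l$ with $\bm{a}_l\in\mathbb{R}^{d_l}$, and orthonormality of $\bm{X}_l$ gives $\|\bm{x}_l\|=\|\bm{a}_l\|$ as well as $\bm{x}_1^{\rm T}\bm{x}_2=\bm{a}_1^{\rm T}\bm{M}\bm{a}_2$, each step of the recursion becomes a maximization of the bilinear form $\bm{a}_1^{\rm T}\bm{M}\bm{a}_2$ over unit vectors $\bm{a}_l$ subject to some orthogonality constraints that I will describe below.

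For the base case ($k=1$), no constraint is active, and the standard Courant–Fischer-type characterization yields $\max_{\|\bm{a}_1\|=\|\bm{a}_2\|=1}\bm{a}_1^{\rm T}\bm{M}\bm{a}_2=\sigma_1$, attained by the leading singular vectors $\bm{q}_{1,1},\bm{q}_{2,1}$; so $\cos\theta_1=\sigma_1$, and the corresponding principal vectors are $\bm{X}_1\bm{q}_{1,1}$, $\bm{X}_2\bm{q}_{2,1}$. For the inductive step, suppose that for all $i<k$ the claim already gives $\cos\theta_i=\sigma_i$ and principal vectors of the form $\bm{x}_{l,i}=\bm{X}_l\bm{q}_{l,i}$. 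The orthogonality constraint $\bm{x}_l^{\rm T}\bm{x}_{l,i}=0$ then translates, again via orthonormality of $\bm{X}_l$, into $\bm{a}_l^{\rm T}\bm{q}_{l,i}=0$, i.e.\ $\bm{a}_l$ is constrained to the orthogonal complement of $\mathrm{span}\{\bm{q}_{l,1},\dots,\bm{q}_{l,k-1}\}$ inside $\mathbb{R}^{d_l}$.

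Now I expand $\bm{M}$ in its SVD as $\bm{M}=\sum_{j=1}^{d_1}\sigma_j\bm{q}_{1,j}\bm{q}_{2,j}^{\rm T}$. Under the inductive constraints, the coefficients $(\bm{q}_{1,j}^{\rm T}\bm{a}_1)(\bm{q}_{2,j}^{\rm T}\bm{a}_2)$ vanish for $j<k$, so
$$
\bm{a}_1^{\rm T}\bm{M}\bm{a}_2=\sum_{j\ge k}\sigma_j(\bm{q}_{1,j}^{\rm T}\bm{a}_1)(\bm{q}_{2,j}^{\rm T}\bm{a}_2)\le \sigma_k\,\|\bm{a}_1\|\|\bm{a}_2\|=\sigma_k,
$$
by Cauchy–Schwarz together with $\sigma_j\le\sigma_k$ for $j\ge k$ and Parseval. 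Equality is attained by choosing $\bm{a}_l=\bm{q}_{l,k}$, which also satisfies the constraints since columns of $\bm{Q}_l$ are orthonormal. This both closes the induction and shows that $\bm{X}_l\bm{q}_{l,k}$ is a valid choice of the $k$-th principal vector, proving the claim for every $k\le d_1$.

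The main subtlety I anticipate is that when $\bm{M}$ has repeated singular values, the SVD factors $\bm{Q}_1,\bm{Q}_2$ are not unique on the corresponding eigenspaces, so the principal vectors $\bm{X}_l\bm{q}_{l,k}$ are likewise non-unique. This is consistent with the remark right after Definition~\ref{principal-angle} (principal vectors are not uniquely defined while canonical angles are), and does not affect the equality $\cos\theta_k=\sigma_k$; one just has to make sure the inductive construction picks a coherent basis within each degenerate block. All other ingredients — change of variables, Cauchy–Schwarz, and the SVD expansion — are routine.
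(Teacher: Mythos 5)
Your proof is correct: the change of coordinates that reduces the recursive definition to maximizing the bilinear form $\bm{a}_1^{\rm T}(\bm{X}_1^{\rm T}\bm{X}_2)\bm{a}_2$ over unit vectors with orthogonality constraints, followed by the SVD variational characterization, is precisely the classical Bj\"orck--Golub argument that the paper cites for this lemma (the paper itself gives no proof, only the citation). Two cosmetic points: for $\bm{a}_2$ the bound $\sum_{j\le d_1}(\bm{q}_{2,j}^{\rm T}\bm{a}_2)^2\le\|\bm{a}_2\|^2$ is Bessel rather than Parseval when $d_1<d_2$ (the inequality is all you need), and in the case of repeated singular values your induction fixes one admissible choice of principal vectors, which suffices here because the remark after Definition~\ref{principal-angle} already grants that the canonical angles themselves are uniquely defined independently of that choice.
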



It follows from Lemma~\ref{singular-computing-PA} that
\begin{align}
\cos\theta_{1}=\max_{{{\bm x}_1\in\mathcal{X}_1}\atop{\left\|{\bm x}_1\right\|=1}}\left\|{\bm P}_{\mathcal{X}_2}({\bm x}_1)\right\|,\label{eq-PA-proj-1}\\
\cos\theta_{d_1}=\min_{{{\bm x}_1\in\mathcal{X}_1}\atop{\left\|{\bm x}_1\right\|=1}}\left\|{\bm P}_{\mathcal{X}_2}({\bm x}_1)\right\|.\label{eq-PA-proj-2}
\end{align}

Another key concept
in the statement of our main result is JL property,
which is defined as below.

\begin{definition}\citep{Foucart2017Mathematical}
A random matrix ${\bm \Phi}\in\mathbb{R}^{n\times N}$ is said to satisfy Johnson-Lindenstrauss property,
if there exists some constant $c>0$,
such that for any $\varepsilon\in(0,1/2)$ and for any ${\bm x}\in\mathbb{R}^N$,
$$
\mathbb{P}\left(\left|\left\|{\bm \Phi}{\bm x}\right\|^2-\left\|{\bm x}\right\|^2\right|>\varepsilon\left\|{\bm x}\right\|^2\right)\le 2{\rm e}^{-c\varepsilon^2n}.
$$
\end{definition}

JL property is a mild condition satisfied by many random matrices,
e.g., sub-Gaussian random matrices, partial Fourier matrices, and partial Hadamard matrices.
In addition, \citet{xingyu} asserts that JL property is implied by classical RIP for sparse signals
with sufficiently small restricted isometry constant.
Random projection with JL property is called \emph{JL random projection} in this paper.

\begin{remark}
Our analysis is based on the assumption that the dimension of a low-dimensional subspace ${\set X}_l$
remains unchanged after random projection.
For any random matrix with JL property,
this assumption is true with probability at least $1-{\rm e}^{-O(n)}$ \citep{xingyu}.
In some special cases,
such as for Gaussian random matrices,
this assumption holds almost surely.
We will use this assumption implicitly in all theorems in this paper.
\end{remark}

\section{Subspace Structure Preserving Property of JL random projection}
\label{sec:principal-angle-preserving}

In this section, we will address our main problem,
i.e., the distortion of subspace structure, or equivalently, canonical angles,
induced by JL random projection.
Based on this result, we establish a general subspace RIP
that works for any notion of subspace distance.
In addition, we compare our results with some well-known conclusions including JL Lemma
and some other similar works.

\subsection{Main Result}

Our main result, i.e.,
canonical angle preserving property of JL random projection,
 is stated in the following theorem.

\begin{theorem}\label{theorem-angle-multi}
Suppose ${\bm \Phi}\in\mathbb{R}^{n\times N}$ is a random matrix with Johnson-Lindenstrauss property, $n<N$.
Suppose ${\set X}_1, \cdots , {\set X}_L\subset\mathbb{R}^N$ are $L$ subspaces with dimensions $d_1,\cdots,d_L\le d$.
Denote by $\set{Y}_l\subset\mathbb{R}^{n}$ the image of ${\set X}_l$ under ${\bm \Phi}$, $l=1,\cdots, L$.
The $k$-th canonical angle between ${\set X}_i$, ${\set X}_j$, and ${\set Y}_i$, ${\set Y}_j$ is denoted as $\theta_k^{i,j}$ and $\psi_k^{i,j}$, respectively.
There exist positive universal constants $c_1$, $c_2$,
such that for any $1\le i,j\le L$,
any $\varepsilon\in(0, 1/2)$,
and any $n > c_1\varepsilon^{-2} \max\{d, \log L\}$, with probability at least $1- {\rm e}^{-c_2\varepsilon^2 n}$, we have
\begin{align}\label{theorem-bound-angle-multi}
\left(1 - \varepsilon\right) \theta_{k}^{i,j} \le \psi_{k}^{i,j} \le \left(1+ \varepsilon\right) \theta_{k}^{i,j},\quad \forall 1\le k\le \min\{d_i,d_j\}.
\end{align}
\end{theorem}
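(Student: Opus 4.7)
The plan is to reduce the theorem to a uniform approximate-isometry statement on each pair sum $\mathcal{X}_i+\mathcal{X}_j$, then to extract two complementary estimates (additive control on cosines and multiplicative control on sines) and combine them to obtain the claimed multiplicative bound on each canonical angle. The guiding intuition is that cosines control angles close to $\pi/2$ while sines control angles close to $0$; together they interpolate to a multiplicative bound for every $\theta_k^{i,j}\in(0,\pi/2]$.

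First, I would establish a subspace-level JL statement on each sum $\mathcal{X}_i+\mathcal{X}_j$. Fixing a pair $(i,j)$, the sum has dimension at most $2d$; a standard $\eta$-net argument on the unit sphere of $\mathcal{X}_i+\mathcal{X}_j$ (of cardinality at most $(1+2/\eta)^{2d}$) combined with the single-vector JL tail bound and the usual telescoping extension from net to sphere yields, with failure probability $\exp(-c\varepsilon^2 n)$, the uniform isometry
\[
(1-\varepsilon)\|x\|^2 \le \|\Phi x\|^2 \le (1+\varepsilon)\|x\|^2, \qquad \forall\, x\in\mathcal{X}_i+\mathcal{X}_j,
\]
whenever $n \gtrsim \varepsilon^{-2} d$. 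A union bound over the $\binom{L}{2}\le L^2$ pairs absorbs the extra $\log L$ slack in the hypothesis $n\gtrsim \varepsilon^{-2}\max\{d,\log L\}$. Polarization then supplies the additive inner-product preservation $|\langle \Phi u,\Phi v\rangle - \langle u,v\rangle| \le C\varepsilon$ for all unit $u,v\in\mathcal{X}_i+\mathcal{X}_j$.

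Next I would use this uniform isometry in two complementary ways. For the cosines, orthonormalise via thin QRs $\Phi X_\ell = Y_\ell R_\ell$; the isometry gives $\|R_\ell R_\ell^{\rm T} - I\| \le \varepsilon$, hence $\sigma(R_\ell)\in[\sqrt{1-\varepsilon},\sqrt{1+\varepsilon}]$, and also $\|X_i^{\rm T}\Phi^{\rm T}\Phi X_j - X_i^{\rm T} X_j\| \le \varepsilon$ in operator norm. Writing $Y_i^{\rm T} Y_j = R_i^{-{\rm T}}(X_i^{\rm T}\Phi^{\rm T}\Phi X_j)R_j^{-1}$ and applying Weyl's inequality to the singular values of $Y_i^{\rm T} Y_j$ versus those of $X_i^{\rm T} X_j$, which are exactly the cosines of $\psi_k^{i,j}$ and $\theta_k^{i,j}$ by Lemma~\ref{singular-computing-PA}, yields the additive cosine estimate $|\cos\psi_k^{i,j} - \cos\theta_k^{i,j}| \le C\varepsilon$. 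For the sines, a Courant--Fischer computation applied to $I - X_j^{\rm T} X_i X_i^{\rm T} X_j$ gives the min-max characterisation
\[
\sin\theta_k^{i,j} = \max_{\substack{W\subset\mathcal{X}_j\\\dim W=d_j-k+1}}\ \min_{\substack{w\in W\\\|w\|=1}}\ \mathrm{dist}(w,\mathcal{X}_i),
\]
and an identical formula on the image side. Each inner $\mathrm{dist}(w,\mathcal{X}_i)$ is the norm of a vector in $\mathcal{X}_i+\mathcal{X}_j$, which the uniform JL preserves multiplicatively; since $\Phi$ also yields a dimension-preserving bijection $W\mapsto \Phi W$ between $(d_j-k+1)$-dimensional subspaces of $\mathcal{X}_j$ and those of $\mathcal{Y}_j$, the min and the max both transport cleanly across, delivering the multiplicative sine estimate $(1-C\varepsilon)\sin\theta_k^{i,j} \le \sin\psi_k^{i,j} \le (1+C\varepsilon)\sin\theta_k^{i,j}$.

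The main obstacle is the final step, converting cosine-and-sine control into a multiplicative bound on the angle itself, since $\arccos$ has infinite slope at $1$ (so additive cosine control alone fails for small angles) and $\arcsin$ has infinite slope at $1$ (so multiplicative sine control alone fails for angles near $\pi/2$). The two estimates combine cleanly through the identity
\[
\sin(\psi_k^{i,j}-\theta_k^{i,j}) = (\sin\psi_k^{i,j}-\sin\theta_k^{i,j})\cos\theta_k^{i,j} + \sin\theta_k^{i,j}(\cos\theta_k^{i,j}-\cos\psi_k^{i,j}),
\]
whose right-hand side is bounded by $C\varepsilon\sin\theta_k^{i,j}$ using the multiplicative sine bound on the first term and the additive cosine bound on the second. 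Inverting via $|\sin x|\ge(2/\pi)|x|$ for $|x|\le\pi/2$ together with $\sin\theta\le\theta$ on $[0,\pi/2]$ gives $|\psi_k^{i,j}-\theta_k^{i,j}|\le C'\varepsilon\,\theta_k^{i,j}$. Rescaling $\varepsilon$ by a universal constant to absorb $C'$ yields the statement $(1-\varepsilon)\theta_k^{i,j}\le\psi_k^{i,j}\le(1+\varepsilon)\theta_k^{i,j}$, with $c_1,c_2$ collecting the constants from the net size and the JL concentration.
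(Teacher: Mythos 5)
Your proposal is correct, and it gets to the theorem by a genuinely different route than the paper. The paper reduces to $L=2$, brackets $\sin\psi_k$ between the extreme canonical angles of spans of projected principal vectors via a von Neumann min--max argument (Lemma~\ref{lemma-step1}), proves the multiplicative sine bound by a covering argument that invokes per-vector probabilistic lemmas from prior work (Lemmas~\ref{lemma:orthonormal-preserving} and \ref{lemma-line}) inside the net, and then converts sines to angles by a case split at $\pi/4$, needing a separate cosine-preserving estimate (Appendix~\ref{app:proof-cosine-preserving}) to retain $n=O(\varepsilon^{-2})$ for angles near $\pi/2$. You instead condition on a single uniform near-isometry event on each pair-sum $\mathcal{X}_i+\mathcal{X}_j$ (dimension at most $2d$, so the net and union bound give the same order of $n$) and make every subsequent step deterministic: the additive cosine bound via thin QR, polarization and Weyl applied to $Y_i^{\rm T}Y_j=R_i^{-{\rm T}}(X_i^{\rm T}\Phi^{\rm T}\Phi X_j)R_j^{-1}$, and the multiplicative sine bound via the Courant--Fischer distance characterization, where your observation that $\mathrm{dist}(\Phi w,\mathcal{Y}_i)=\min_{v\in\mathcal{X}_i}\|\Phi(w-v)\|$ is uniformly multiplicatively preserved (with the harmless renormalization by $\|\Phi w\|$ and the dimension-preserving bijection $W\mapsto\Phi W$) does the work of the paper's Lemmas~\ref{lemma-step1}--\ref{lemma-sin} in one stroke. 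Your final combination through the identity $\sin(\psi-\theta)=(\sin\psi-\sin\theta)\cos\theta+\sin\theta(\cos\theta-\cos\psi)$, together with $|\sin x|\ge(2/\pi)|x|$ and $\sin\theta\le\theta$, treats all angles uniformly and recovers $n=O(\varepsilon^{-2})$ without the paper's case analysis or its appendix lemma. What the paper's route buys is a more local use of randomness (per-vector distortion lemmas against a $d$-dimensional subspace rather than a uniform isometry on a $2d$-dimensional sum), which only affects constants; what yours buys is that simultaneity over all $k$ and both small- and large-angle regimes come for free once the single isometry event is fixed, yielding a cleaner and arguably more transparent argument.
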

\begin{proof}
The proof is postponed to Section~\ref{sec:proof-theorem-multi-angle}.
\end{proof}


According to Theorem~\ref{theorem-angle-multi},
when $n=O(\varepsilon^{-2}\max\{d,\log L\})$,
each canonical angle between any two subspaces changes only by a small portion less than $\varepsilon$,
with overwhelming probability $1-{\rm e}^{-c_2\varepsilon^{2}n}$.
Thus we call Theorem~\ref{theorem-angle-multi} \emph{canonical angle preserving} (CAP) property.

As an application of the powerful Theorem~\ref{theorem-angle-multi},
we give a very short proof of a more general version of subspace RIP in \citet{xingyu}.
\begin{theorem}\label{theorem-rip-subspace}
Under the same setting as Theorem~\ref{theorem-angle-multi},
there exist positive universal constants
$c_1$, $c_2$,
such that
for any $1\le i,j\le L$,
any $\varepsilon\in(0, 1/2)$,
and
any $n > c_1\varepsilon^{-2} \max\{d, \log L\}$, with probability at least $1 - {\rm e}^{-c_2\varepsilon^2 n}$, we have
$$
(1 - \varepsilon) D({\set X}_i, {\set X}_j ) \le D({\set Y}_i, {\set Y}_j ) \le (1 + \varepsilon) D({\set X}_i, {\set X}_j ),\quad \forall d_i\le d_j,
$$
provided that subspace distance $D(\cdot,\cdot)$
can be written as a Lipschitz continuous function of the canonical angles
${\bm \theta}^{i,j}:=[\theta_1^{i,j},\cdots,\theta_{d_i}^{i,j}]^{\rm T}$
between these two subspaces,
and
\begin{align}\label{eq-condition-rip-subspace}
\liminf_{{\bm \theta}\to {\bm 0}}\frac{f({\bm \theta})}{\left\|{\bm \theta}\right\|}>0.
\end{align}
In particular, if $f$ is continuously differentiable, $f^{\prime}({\bm 0})\ge{\bm 0}$
and $f'(\bm 0)\ne\bm 0$, i.e., any entry of $f^{\prime}({\bm 0})$ is non-negative and at least one entry is positive,
then $f$ satisfies the above conditions.
\end{theorem}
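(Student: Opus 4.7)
The plan is to reduce the claim directly to Theorem~\ref{theorem-angle-multi}. Writing $D({\set X}_i, {\set X}_j) = f({\bm \theta}^{i,j})$ and $D({\set Y}_i, {\set Y}_j) = f({\bm \psi}^{i,j})$, the task becomes controlling $|f({\bm \psi}) - f({\bm \theta})|$ relative to $f({\bm \theta})$, given the coordinate-wise distortion on canonical angles supplied by the CAP property.

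The first step is to invoke Theorem~\ref{theorem-angle-multi} on a single probability event to obtain, simultaneously for all pairs $(i,j)$ and all $k$, the bound $|\psi_k^{i,j} - \theta_k^{i,j}| \le \varepsilon\,\theta_k^{i,j}$. Squaring and summing over $k$ immediately yields $\|{\bm \psi}^{i,j} - {\bm \theta}^{i,j}\| \le \varepsilon\,\|{\bm \theta}^{i,j}\|$, and the Lipschitz hypothesis with constant $L_f$ then gives $|f({\bm \psi}) - f({\bm \theta})| \le L_f\,\varepsilon\,\|{\bm \theta}\|$. The second step is to establish a uniform linear lower bound $f({\bm \theta}) \ge c_f \|{\bm \theta}\|$ over the whole compact admissible set $\{0 \le \theta_1 \le \cdots \le \theta_d \le \pi/2\}$. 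The liminf hypothesis directly supplies such a bound with some $c_0 > 0$ on a neighbourhood $\{\|{\bm \theta}\| \le \eta\}$ of the origin; on the complementary compact set $\{\|{\bm \theta}\| \ge \eta\}$ the continuous function $f$ is strictly positive (since, being a proper subspace distance, it vanishes only at ${\bm \theta} = {\bm 0}$), so $f({\bm \theta})/\|{\bm \theta}\|$ attains a positive minimum there, and $c_f$ is taken to be the smaller of the two constants. Combining the two steps produces $|f({\bm \psi}) - f({\bm \theta})| \le (L_f/c_f)\,\varepsilon\,f({\bm \theta})$, and absorbing the factor $L_f/c_f$ by rescaling $\varepsilon$ at the outset yields the stated $(1\pm\varepsilon)$ inequalities.

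For the ``in particular'' clause, continuous differentiability together with $f({\bm 0})=0$ (inherited from $D$ being a distance) give the Taylor expansion $f({\bm \theta}) = \langle f'({\bm 0}),{\bm \theta}\rangle + o(\|{\bm \theta}\|)$ near the origin. Because canonical angles satisfy $0 \le \theta_1 \le \cdots \le \theta_d$, every admissible unit direction ${\bm u} = {\bm \theta}/\|{\bm \theta}\|$ lies in the intersection of the unit sphere with the ordered non-negative cone. Under the hypothesis $f'({\bm 0}) \ge {\bm 0}$ with at least one strictly positive entry, the linear functional $\langle f'({\bm 0}),{\bm u}\rangle$ is bounded below by a positive constant on this compact set of admissible directions, so $\liminf_{{\bm \theta}\to{\bm 0}} f({\bm \theta})/\|{\bm \theta}\| > 0$ and the main argument applies.

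I expect the second step to be the main obstacle: the linear lower bound $f({\bm \theta}) \ge c_f \|{\bm \theta}\|$ must hold globally rather than merely in a neighbourhood of the origin, so a compactness argument exploiting the non-degeneracy $f({\bm \theta}) > 0$ for ${\bm \theta}\ne{\bm 0}$ (implicit in $D$ being a genuine subspace distance) is essential to glue the local liminf estimate to a global one. Verifying the ``in particular'' claim additionally requires a linear-programming-type check that the ordering constraint $\theta_1 \le \cdots \le \theta_d$ precludes admissible directions from being orthogonal to $f'({\bm 0})$.
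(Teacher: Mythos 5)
Your main argument coincides with the paper's own proof: both pass from Theorem~\ref{theorem-angle-multi} to the vector bound $\left\|{\bm \psi}-{\bm \theta}\right\|\le\varepsilon\left\|{\bm \theta}\right\|$, apply the Lipschitz hypothesis, and reduce everything to showing that $\left\|{\bm \theta}\right\|/f({\bm \theta})$ is bounded over admissible angle vectors. The paper dismisses this last point in one line (``follows easily from the liminf condition and the continuity of $f$''), whereas you carry out the gluing explicitly -- liminf near the origin, compactness plus positivity of $f$ away from it -- and you are right that the positivity of $f$ off the origin is an extra (natural, but tacit) ingredient; this is a more careful rendering of the same step, not a different route. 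As in the paper, the final rescaling of $\varepsilon$ makes the constants depend on $\|f\|_{\rm Lip}/c_f$, i.e.\ on $D$; that is inherent to the statement and not a defect of your argument.

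The genuine gap is in your treatment of the ``in particular'' clause. You claim that when $f'({\bm 0})\ge{\bm 0}$ has at least one positive entry, the functional $\langle f'({\bm 0}),{\bm u}\rangle$ is bounded below by a positive constant over unit vectors in the ordered cone $0\le u_1\le\cdots\le u_d$. That is false: the ordering constraint does not prevent admissible directions from being orthogonal to $f'({\bm 0})$ unless the positive weight sits on the coordinates of the \emph{largest} angles. Concretely, take $f'({\bm 0})=(1,0,\ldots,0)$ and the admissible direction ${\bm u}=(0,\ldots,0,1)$; then $\langle f'({\bm 0}),{\bm u}\rangle=0$, and the function $f({\bm \theta})=\theta_1$ satisfies every hypothesis of the clause yet has $\liminf_{{\bm \theta}\to{\bm 0}}f({\bm \theta})/\left\|{\bm \theta}\right\|=0$ along that direction. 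So the ``linear-programming-type check'' you defer at the end would not go through as stated; the argument does work if, say, the entry of $f'({\bm 0})$ attached to the largest canonical angle $\theta_{d}$ is positive (then $u_{d}\ge 1/\sqrt{d}$ on admissible unit directions gives the lower bound), or if all entries are positive. For what it is worth, the paper supplies no proof of this clause, and as literally stated it is subject to the same counterexample; but within your proposal this is the step that fails, and you should either strengthen the hypothesis or drop the claim.
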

\begin{proof}
Without loss of generality,
we consider two subspaces $\mathcal{X}_1$, $\mathcal X_2$.
Denote the $k$-th canonical angle between $\mathcal{X}_1$, $\mathcal{X}_2$ and $\mathcal{Y}_1$, $\mathcal{Y}_2$,
respectively, as $\theta_k$ and $\psi_k$.
According to Theorem~\ref{theorem-angle-multi}, we have
$$
\left\|{\bm \psi}-{\bm \theta}\right\|\le \varepsilon\left\|{\bm \theta}\right\|.
$$
Noticing that $f$ is Lipschitz continuous, we have
\begin{align*}
\frac{\left|f({\bm \psi})-f({\bm \theta})\right|}{f({\bm \theta})}&\le
\|f\|_{\rm Lip}\frac{\left\|{\bm \psi}-{\bm \theta}\right\|}{f({\bm \theta})}\nonumber\\
&\le \|f\|_{\rm Lip}\frac{\left\|{\bm \theta}\right\|}{f({\bm \theta})}\varepsilon,
\end{align*}
where $\|f\|_{\rm Lip}$ denotes the Lipschitz constant of $f$.
It suffices to show that $\frac{\|\bm\theta\|}{f(\bm\theta)}$ is bounded,
which follows easily from $\liminf_{\bm\theta\to\bm0}\frac{f(\bm\theta)}{\|\bm\theta\|}>0$
and the continuity of $f$.
\end{proof}

\begin{remark}
We have discussed the invariant property of some concepts about subspace structure,
namely, canonical angles and subspace distances.
In the study of subspace clustering,
another concept about subspace structure, the so-called affinity,
was proposed in \citet{soltanolkotabi2012geometric}.
The best known result on the invariance of affinity is recently presented in \citet{xingyu},
which is also an easy consequence of CAP property.
\end{remark}

\subsection{Related Works}
\label{subsec:comparison}

The statement of Theorem~\ref{theorem-angle-multi} resembles that
of JL Lemma,
which is a fundamental and valuable tool in the study of dimensionality reduction.
It states that for any set of finite data points in a high-dimensional Euclidean space,
they can be mapped to a low-dimensional space with all pairwise distances almost preserved.
The precise form of JL Lemma reads as follows.
\begin{lemma}\citep{Johnson1984Extensions}
For any set $\mathcal{V}$ of $L$ points in $\mathbb{R}^N$,
there exists a map $f:\mathbb{R}^N\rightarrow \mathbb{R}^n$, $n<N$, such that for all ${\bm x}_1$, ${\bm x}_2\in\mathcal{V}$,
$$
(1 - \varepsilon)\left\|{\bm x}_1 - {\bm x}_2\right\|^2 \le \left\|f({\bm x}_1) - f({\bm x}_2)\right\|^2 \le (1 + \varepsilon)\left\|{\bm x}_1 - {\bm x}_2\right\|^2,
$$
if $n$ is a positive integer satisfying $n \ge 4\log L/(\varepsilon^2/2 - \varepsilon^3/3)$, where $\varepsilon \in(0,1)$ is a constant.
\end{lemma}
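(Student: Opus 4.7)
The plan is to use the probabilistic method: construct a random map $f$, show that it satisfies the desired distortion bound for each pair of points with high probability, and then union bound to conclude that some realization of $f$ works for all pairs simultaneously. Because the sample-complexity constant $4/(\varepsilon^2/2-\varepsilon^3/3)$ is explicit, the generic JL-property definition (which only guarantees an exponent of $c\varepsilon^2$ for an unspecified $c$) is not sharp enough; so I would take the concrete construction $f(\bm x):=\frac{1}{\sqrt n}\bm\Phi\bm x$, where $\bm\Phi\in\mathbb{R}^{n\times N}$ has i.i.d.\ standard Gaussian entries, which yields an exactly computable distribution for $\|f(\bm x)\|^2$.

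First, I would fix a nonzero vector $\bm v\in\mathbb{R}^N$ and analyze $Z:=\|f(\bm v)\|^2/\|\bm v\|^2$. By rotational invariance of the Gaussian, $Z$ has the same distribution as $\frac{1}{n}\sum_{i=1}^n g_i^2$ with $g_i\sim\mathcal{N}(0,1)$ i.i.d., so $nZ\sim\chi_n^2$. The target single-vector concentration bound is
\[
\mathbb{P}\bigl(|Z-1|>\varepsilon\bigr)\le 2\exp\bigl(-n(\varepsilon^2/2-\varepsilon^3/3)/2\bigr).
\]
The standard route is Chernoff: for the upper tail, use $\mathbb{E}[\exp(tg_i^2)]=(1-2t)^{-1/2}$ to get $\mathbb{P}(Z\ge 1+\varepsilon)\le \inf_{t>0}\exp(-tn(1+\varepsilon))(1-2t)^{-n/2}$, then optimize in $t$ and Taylor-expand the resulting log term to extract the claimed exponent. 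The lower tail is analogous after a sign flip.

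Next, I would apply this bound to each of the $\binom{L}{2}$ difference vectors $\bm v_{ij}:=\bm x_i-\bm x_j$ for $i<j$: the bad event $|\|f(\bm v_{ij})\|^2-\|\bm v_{ij}\|^2|>\varepsilon\|\bm v_{ij}\|^2$ has probability at most $2\exp(-n(\varepsilon^2/2-\varepsilon^3/3)/2)$. A union bound yields
\[
\mathbb{P}(\text{some pair is badly distorted})\le 2\binom{L}{2}\exp\bigl(-n(\varepsilon^2/2-\varepsilon^3/3)/2\bigr)<L^2\exp\bigl(-n(\varepsilon^2/2-\varepsilon^3/3)/2\bigr).
\]
Plugging in the hypothesis $n\ge 4\log L/(\varepsilon^2/2-\varepsilon^3/3)$ gives $n(\varepsilon^2/2-\varepsilon^3/3)/2\ge 2\log L$, so the right-hand side is at most $L^2\cdot L^{-2}=1$, and the strict inequality $2\binom{L}{2}<L^2$ (for $L\ge 2$) makes the failure probability strictly less than one. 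Hence with positive probability all pairs satisfy the distortion bound, and the probabilistic method guarantees that at least one deterministic realization $f$ exists.

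The main technical obstacle is the sharp chi-squared tail inequality with exactly the exponent $(\varepsilon^2/2-\varepsilon^3/3)/2$. The Chernoff/MGF step itself is routine, but getting this specific polynomial in $\varepsilon$ rather than a looser $c\varepsilon^2$ requires a careful expansion of $\log(1-2t)$ around the optimized $t$; any slackness would degrade the constant $4$ in $4\log L/(\varepsilon^2/2-\varepsilon^3/3)$ and miss the precise statement of the lemma. Everything else, namely the rotational-invariance reduction, the union bound over pairs, and the probabilistic-method conclusion, is mechanical.
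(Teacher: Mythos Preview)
The paper does not prove this lemma; it is simply quoted as a classical result with a citation to \citet{Johnson1984Extensions} in the related-work discussion, so there is no in-paper proof to compare against. Your proposal is the standard Dasgupta--Gupta style argument (Gaussian projection, chi-squared tail via Chernoff with the exact exponent $(\varepsilon^2/2-\varepsilon^3/3)/2$, union bound over $\binom{L}{2}$ pairs, probabilistic method), and it is correct as sketched; the only care point you already flagged is extracting the precise polynomial exponent from the optimized MGF bound, which is routine once you write $\log(1+\varepsilon)-\varepsilon/2$ carefully.
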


We observe that the reduced dimension $n$ required by Theorem~\ref{theorem-angle-multi}
coincides with the requirement in JL Lemma.
For the special case $d=1$ in Theorem~\ref{theorem-angle-multi},
subspace reduces to a pair of data points lying on unit sphere,
and the required $n=O(\varepsilon^{-2}\log L)$ coincides with that in JL Lemma.

Another well-known notion related to JL Lemma is RIP for sparse signals,
which characterizes the ability of random projection to preserve
pairwise Euclidean distance between sparse signals.
Though similar in form,
our conclusion differs in many aspects from JL Lemma and RIP,
and is not a trivial extension of them.
First, Theorem~\ref{theorem-angle-multi} investigates subspaces in Euclidean space instead of points,
which makes it a valuable tool in the analysis of UoS model.
In addition, Theorem~\ref{theorem-angle-multi} focuses on canonical angles,
which better characterize relative subspace positions than any notion of subspace distance.
Furthermore, our proof deviates from that of JL Lemma and RIP for sparse signals,
and no existing RIP for point sets are invoked in the proof.

As an extension of the RIP for sparse signals,
subspace RIP has been proposed by \citet{LiGenTSP}.
The most recent and general result in this vein is presented in \citet{xingyu},
 which proves that JL random projection approximately
 preserves the projection Frobenius-norm distance between subspaces.
 This result can not be easily extended to other subspace distance definitions.
 The reason is that \citet{xingyu} studies this problem by dealing with subspace affinity as a whole,
 which is a function of projection Frobenius-norm distance,
 but has no such relationship with other subspace distance definitions.

To study subspace RIP in a systematic way,
our previous works \citet{GlobalSIP2017, Jiao2018SubspaceDSW}
study the canonical angles preserving property of Gaussian random projection.
However, the requirement on the reduced dimension $n$ in the result of \citet{GlobalSIP2017} is polynomial in the failing probability $\delta$,
which is not as rigourous as the exponential relationship in this work.
In addition, all these results
are restricted to Gaussian case,
while the result in this paper
works for a wider class of random matrices,
including partial Fourier matrices which are more useful in practice.


There are some other works that are similar to our work in form.
\citet{Eftekhari2017What} relates to this work in
studying the distortion
of the largest canonical angle
between two
tangent subspaces on the manifold, induced by a linear near-isometry map.
It discovers the relationship between such distortion
 and some geometric attributes of the manifold
 (Proposition 5).
The distortion in  this work only depends only
on the original canonical angle and failing probability.
Besides, we study each canonical angle rather than only the largest one.
\citet{Frankl1990} and \citet{ABSIL2006} study the distribution of canonical angles between random subspaces.
In both of these works,
randomness exists in the subspace itself.
While in this work, it is in the process of projection,
and thus characterize the ability of this dimensionality reduction method
to preserve subspace structure.
Finally, we remark that
\citet{Dirksen2007Dimensionality}
is easily mistaken for subspace RIP.
In fact, the target of analysis in \citet{Dirksen2007Dimensionality} is the data points
lying on union of subspaces, but not subspace itself.

\section{Compressed Subspace Learning: A Framework}
\label{sec:framework}

\subsection{Description of CSL Framework}

With the CAP property of JL random projection established,
we are now in the position to formulate the aforementioned
Compressed Subspace Learning framework,
which is done in Algorithm \ref{alg:GRPLCA}.
Note that the use of partial Fourier matrices there is not essential.
We can replaced it with any random matrices with JL property according to the application scenarios.
For example,
when the original data with dimension $N$ is sparse
or the reduced dimension is much smaller than $\log N$,
we can use unstructured random matrices,
including Gaussian and Bernoulli matrices,
which are more computational efficient than partial Fourier matrices in this case.
As long as the random matrices satisfy JL property,
Theorem~\ref{theorem-angle-multi} indicates that in Step I
the dimension of data can be compressed to $O(d)$
without destructing the UoS structure.
Thus, applying UoS-based algorithm $\mathcal A$
on compressed data will yield as good performance as
applying it on original data without compression.
This helps to circumvent the curse of dimensionality.
Note that in some case, the compression step in CSL framework
is not explicitly done after acquiring the data,
but rather be done by undersampling at the time of data-acquisition
where we acquire much less features of data than available.
Such cases are encountered, for instance, in compressive radar imaging \citep{baraniuk2007compressive}.
The CSL framework also works well for such cases.
Some concrete applications of this framework
are presented in Section~\ref{sec:app-compression}.

\begin{algorithm}[tb]
\renewcommand{\arraystretch}{1.1}
\caption{The Framework for Compressed Subspace Learning}\label{algorithm}
\begin{tabular}{l}
{\label{alg:GRPLCA}\bf Input:} \hspace{0.55em}
Original data set $\{{\bm x}_i\}_{i=1,2,\cdots}, {\bm x}_i \in \mathbb{R}^{N}$; \\
\hspace{4.5em}The dimension after compression $n, n<N$; \\
\hspace{4.5em}A selected UoS-based learning algorithm $\mathcal{A}$.\\
{\bf Output:} 
Information $\{{\bm o}_i\}_{i=1,2,\cdots}$ extracted from the input data set. \\
\hline
{\bf Step I. Applying random projection with partial Fourier matrices}\\
\hspace{3.5em} 1. Multiplying each entry of ${\bm x}_i$ by a Rademacher random variable and getting\\
\hspace{3.5em} ~~~ the sign-randomized version $\tilde{{\bm x}}_i$ of data ${\bm x}_i$, $i=1,2,\cdots$,\\
\hspace{3.5em} 2. Computing the fast Fourier transformation $\hat{{\bm x}}_i$ of the sign-randomized data $\tilde{{\bm x}}_i$,\\
\hspace{3.5em} ~~~~$i=1,2,\cdots$.\\
\hspace{3.5em} 3. Randomly sampling $n$ rows from $\hat{{\bm x}}_i$ and constructing the compressed data\\
 \hspace{3.5em} ~~~~${\bm y}_i\in\mathbb{R}^n$, $i=1,2,\cdots$.\\
{\bf Step II. Conducting the selected algorithm on the compressed data}\\
\hspace{3.5em} $\{{\bm o}_i\}_{i=1,2,\cdots} = \mathcal{A}\left(\{{\bm y}_i\}_{i=1,2,\cdots}\right)$.\\
\end{tabular}
\end{algorithm}

\subsection{Related Works}

In many problems, e.g., $\ell_2$-regression
and support vector machine (SVM) problem,
the performance of certain type of random projection as a dimensionality reduction method has been studied.
For $\ell_2$ regression problem ${\bm x}^*=\arg\min_{{\bm x}}\left\|{\bm b}-{\bm A}{\bm x}\right\|$,
it is proved that uniform sampling approximately preserves the least square solution ${\bm x}^*$ \citep{Drineas2006Sampling}.
The requirement on the reduced dimension $n$ in terms of approximation error $\varepsilon$ and
failing probability $\delta$ is $n=O(M^2\varepsilon^{-2}\log(1/\delta))$.
In the study of SVM,
\citet{Shi2012Is} discovers the almost invariant property of
margin
after Gaussian random projection,
and
gives the condition on the reduced dimension $n$ in terms of the margin distortion $\varepsilon$
and failing probability $\delta$ as $n=O(\varepsilon^{-2}\log(1/\delta))$.
\citet{Paul2013Random} considers more types of random projection,
including some of those with structured random matrices.
Different from previous works,
our study is not constrained to specific algorithms.
For example,
the framework presented in Algorithm \ref{alg:GRPLCA} is able to subsume
three very different algorithms handling different problems
presented in Section~\ref{sec:app-compression}.
Such universality is made possible only by
the powerful mathematical engine of CAP property.
With this powerful engine,
it is possible to adopt the CSL framework to
handle many other subspace-related problems
and give a performance analysis.

\section{The Proof of Theorem~\ref{theorem-angle-multi}}
\label{sec:proof-theorem-multi-angle}

\subsection{Reducing to the case $L=2$}
\label{subsubsec:proof-fromLto2}

This part is standard.
Assume the conclusion of Theorem~\ref{theorem-angle-multi} is true for $L=2$.
For general $L$, it follows from this special case that
 \eqref{theorem-bound-angle-multi} holds with probability at least
$1-\frac{L(L-1)}{2}{\rm e}^{-c_{2}\varepsilon^2n}$,
given that $n>c_1\varepsilon^{-2}d$.
Thus for $n>\frac{2}{c_2\varepsilon^2}\log(L(L-1))$ and $n>c_1\varepsilon^{-2}d$,
inequality \eqref{theorem-bound-angle-multi} holds with probability at least
$1-\mathrm e^{-c_2\varepsilon^2n/2}$.
The conclusion follows by adjusting the values of $c_1$ and $c_2$.

\subsection{A Two-sided Bound of $\sin\psi_k$}
\label{subsec:proof-lemma-step1}

We begin by giving $\sin\psi_k$ a two-sided bound which is easier to handle.
\begin{lemma}\label{lemma-step1}
Under the same setting as Theorem~\ref{theorem-angle-multi},
assume the $k$-th principal vectors between ${\set X}_1$ and ${\set X}_2$ are given by ${\bm u}_{1,k}\in\mathcal{X}_1$ and ${\bm u}_{2,k}\in\mathcal{X}_2$.
Denote $\varphi_{1,k,k}$ as the largest canonical angle between
subspace $\mathcal{Y}_{1,1:k}:={\rm span}\{{\bm \Phi}{\bm u}_{1,1},\cdots,{\bm \Phi}{\bm u}_{1,k}\}$
and $\mathcal{Y}_2$.
Denote $\varphi_{k,d_1,1}$ as the smallest canonical angle between
subspace $\mathcal{Y}_{1,k:d_1}:={\rm span}\{{\bm \Phi}{\bm u}_{1,k},\cdots,{\bm \Phi}{\bm u}_{1,d_1}\}$
and $\mathcal{Y}_2$.
The sine of the $k$-th projected canonical angle $\psi_k$ between the projected subspaces ${\set Y}_1$ and ${\set Y}_2$ is bounded by
$$
\sin\varphi_{k,d_1,1}\le\sin\psi_k\le\sin\varphi_{1,k,k}.
$$
\end{lemma}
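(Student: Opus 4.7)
The plan is to recognize the claim as a purely deterministic geometric fact: once $\sin\psi_k$ is expressed via a Courant--Fischer-style min--max over subspaces of $\mathcal Y_1$, both inequalities reduce to plugging in $\mathcal Y_{1,1:k}$ and $\mathcal Y_{1,k:d_1}$ as the trial subspace. I would not need the JL property anywhere in this lemma; randomness enters only through the standing assumption that $\bm\Phi$ preserves $\dim\mathcal X_1$ and $\dim\mathcal X_2$.

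First I would extract from Lemma~\ref{singular-computing-PA} the appropriate min--max characterization of $\sin\psi_k$. Fixing orthonormal bases $\bm Y_1,\bm Y_2$ of $\mathcal Y_1,\mathcal Y_2$, Lemma~\ref{singular-computing-PA} identifies $\cos\psi_k$ with the $k$-th singular value of $\bm Y_2^{\rm T}\bm Y_1$; together with the identity $\|\bm Y_2^{\rm T}\bm Y_1\bm c\|=\|{\bm P}_{\mathcal Y_2}(\bm Y_1\bm c)\|$, the two dual Courant--Fischer formulas for the $k$-th singular value and the relation $\sin^2=1-\cos^2$ yield $\sin\psi_k = \min_{S\subset\mathcal Y_1,\,\dim S=k}\max_{\bm y\in S,\,\|\bm y\|=1}\sin\angle(\bm y,\mathcal Y_2)$ and $\sin\psi_k = \max_{S\subset\mathcal Y_1,\,\dim S=d_1-k+1}\min_{\bm y\in S,\,\|\bm y\|=1}\sin\angle(\bm y,\mathcal Y_2)$. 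These are precisely the $k$-th analogues of \eqref{eq-PA-proj-1}--\eqref{eq-PA-proj-2}.

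The two bounds are then one-line substitutions. Since $\{\bm u_{1,j}\}_{j=1}^{d_1}$ is an orthonormal basis of $\mathcal X_1$ by Lemma~\ref{singular-computing-PA} and $\bm\Phi$ preserves $\dim\mathcal X_1$ by assumption, the trial subspaces $\mathcal Y_{1,1:k}$ and $\mathcal Y_{1,k:d_1}$ sit inside $\mathcal Y_1$ with dimensions exactly $k$ and $d_1-k+1$. Plugging $S=\mathcal Y_{1,1:k}$ into the first identity gives $\sin\psi_k \le \max_{\bm y\in\mathcal Y_{1,1:k},\,\|\bm y\|=1}\sin\angle(\bm y,\mathcal Y_2) = \sin\varphi_{1,k,k}$, the last equality being the $\sin$-version of \eqref{eq-PA-proj-2} applied to $(\mathcal Y_{1,1:k},\mathcal Y_2)$ (which has dimension $k$, so its largest canonical angle is its $k$-th). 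Plugging $S=\mathcal Y_{1,k:d_1}$ into the second identity gives $\sin\psi_k \ge \min_{\bm y\in\mathcal Y_{1,k:d_1},\,\|\bm y\|=1}\sin\angle(\bm y,\mathcal Y_2) = \sin\varphi_{k,d_1,1}$ by the $\sin$-version of \eqref{eq-PA-proj-1}. The only mildly technical step is the min--max characterization in the previous paragraph; the rest is bookkeeping, so I expect no real obstacle. If a self-contained derivation of that characterization is desired, it can be done in a few lines by applying the standard min--max theorem for singular values to $\bm Y_2^{\rm T}\bm Y_1$ and translating via the bijection $\bm c\mapsto\bm Y_1\bm c$ between unit vectors of $\mathbb R^{d_1}$ and unit vectors of $\mathcal Y_1$.
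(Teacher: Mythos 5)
Your proposal is correct and follows essentially the same route as the paper: the paper likewise expresses $\sin^2\psi_k$ through the Courant--Fischer (von Neumann) min--max characterization of $\sigma_k(\bm V_2^{\rm T}\bm V_1)$, rewrites it as an extremal problem over subspaces of $\mathcal Y_1$ via projection norms onto $\mathcal Y_2^{\perp}$, and then substitutes the trial subspaces $\mathcal Y_{1,1:k}$ and $\mathcal Y_{1,k:d_1}$ to get the two bounds. Your explicit remark that only the dimension-preservation assumption (not the JL property itself) is needed here is consistent with the paper's implicit use of that standing assumption.
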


According to the definition of canonical angles,
dealing with the largest canonical angle $\varphi_{1,k,k}$
or the smallest canonical angle $\varphi_{k,d_1,1}$
is much easier than dealing with the $k$-th canonical angle $\psi_k$.
The reason is that $\psi_k$ is recursively defined and it relies on $k$ pairs of
principal vectors.
While the calculation of $\varphi_{1,k,k}$ and $\varphi_{k,d_1,1}$ only involves solving a maximization or
minimization problem shown in \eqref{eq-PA-proj-1} and \eqref{eq-PA-proj-2}.
Thus the bound provided in Lemma \ref{lemma-step1} is much easier to handle than $\sin\psi_k$.

\begin{proof}
The proof of Lemma~\ref{lemma-step1} is an application of von Neumann min-max theorem.


We first establish the relationship between $\sin\psi_k$ and $\sin\varphi_{1,k,k}$.
Denote ${\bm V}_i$ as the orthonormal basis of $\set{Y}_i$, $i=1,2$.
We calculate $\sin\psi_k$ via von Neumann min-max theorem as below.
\begin{align}
\sin^2\psi_k&=1-\sigma_k^2\left({\bm V}_2^{\rm T}{\bm V}_1\right)\nonumber\\
&=1-\max_{\bm{S}^{k-1} \subset \mathbb{R}^{d_1}}\min_{{\bm x}\in \bm{S}^{k-1}}\left\|{\bm V}_2^{\rm T}{\bm V}_1{\bm x}\right\|^2,\quad \forall 1\le k\le d_1.\label{forcounterpart1}
\end{align}
Denote the orthonormal basis of the $k$-dimensional subspace spanned by $\bm{S}^{k-1}$ as ${\bm Q}\in\mathbb{R}^{d_1\times k}$.
We have $\{{\bm V}_1{\bm x}:{\bm x}\in\bm{S}^{k-1}\}=\{{\bm y}:{\bm y}\in\mathcal{C}({\bm V}_1{\bm Q}),\left\|{\bm y}\right\|=1\}$,
where $\mathcal{C}({\bm V}_1{\bm Q})$ denotes the column space of matrix ${\bm V}_1{\bm Q}$.
Replacing ${\bm V}_1{\bm x}$ with ${\bm y}$ in \eqref{forcounterpart1}, we have
\begin{align*}
\sin^2\psi_k
&=1-\max_{{\bm Q} \in \mathbb{R}^{d_1\times k}}\min_{{{\left\|{\bm y}\right\|=1}\atop{{\bm y}\in \mathcal{C}({\bm V}_1{\bm Q})}}}\left\|{\bm V}_2^{\rm T}{\bm y}\right\|^2,\quad \forall 1\le k\le d_1.
\end{align*}
Noticing that $\left\|{\bm V}_2^{\rm T}{\bm y}\right\|$ is the norm of the projection of ${\bm y}$ onto ${\set Y}_2$,
and the norm of ${\bm y}$ equals $1$, we can further
simplify the above expression as
\begin{align}
\sin^2\psi_k
&=1-\max_{{\bm Q} \in \mathbb{R}^{d_1\times k}}\min_{{{\left\|{\bm y}\right\|=1}\atop{{\bm y}\in \mathcal{C}({\bm V}_1{\bm Q})}}}\left\|{\bm P}_{{\set Y}_2}({\bm y})\right\|^2\nonumber\\
&=\min_{{\bm Q} \in \mathbb{R}^{d_1\times k}}\max_{{{\left\|{\bm y}\right\|=1}\atop{{\bm y}\in \mathcal{C}({\bm V}_1{\bm Q})}}}\left\|{\bm P}_{{\set Y}_2^{\perp}}({\bm y})\right\|^2.\label{eq3}
\end{align}

Now the RHS  of \eqref{eq3} is the projection norm of some unit vector.
By taking ${\bm Q}$ as
${\bm Q}_u:=\left[{\bm V}_1^{\rm T}{\bm a}_{1,1},\cdots,{\bm V}_1^{\rm T}{\bm a}_{1,k}\right]$,
and plugging \eqref{eq-PA-proj-1}, we have
\begin{align}\label{forcounterpart4-temp}
\sin^2\psi_k&\le \max_{{{\left\|{\bm y}\right\|=1}\atop{{\bm y}\in \mathcal{C}({\bm V}_1{\bm Q}_u)}}}\left\|{\bm P}_{{\set Y}_2^{\perp}}({\bm y})\right\|^2\nonumber\\
&=\max_{{{\left\|{\bm y}\right\|=1}\atop{{\bm y}\in {\set{Y}_{1,1:k}}}}}\left\|{\bm P}_{{\set Y}_2^{\perp}}({\bm y})\right\|^2\nonumber\\
&=\sin^2\varphi_{1,k,k}.
\end{align}

%

It is the turn of $\sin\varphi_{k,d_1,1}$.
We need to discover its connection with $\sin\psi_k$.
To this end,
we
derive step by step the counterparts of \eqref{forcounterpart1}, \eqref{eq3}, and \eqref{forcounterpart4-temp} as
\begin{align}
\sin^2\psi_k&=1-\sigma_k^2\left({\bm V}_2^{\rm T}{\bm V}_1\right)\nonumber\\
&=1-\min_{\bm{S}^{d_1-k} \subset \mathbb{R}^{d_1}}\max_{{\bm x}\in \bm{S}^{d_1-k}}\left\|{\bm V}_2^{\rm T}{\bm V}_1{\bm x}\right\|^2\nonumber\\
&=\max_{{\bm Q} \in \mathbb{R}^{d_1\times (d_1-k+1)}}\min_{{{\left\|{\bm y}\right\|=1}\atop{{\bm y}\in \mathcal{C}({\bm V}_1{\bm Q})}}}\left\|{\bm P}_{{\set Y}_2^{\perp}}({\bm y})\right\|^2\nonumber\\
&\ge\min_{{{\left\|{\bm y}\right\|=1}\atop{\bm y}\in \mathcal{C}({\bm V}_1{\bm Q}_l)}}\left\|{\bm P}_{{\set Y}_2^{\perp}}({\bm y})\right\|^2\nonumber\\
&=\min_{{\left\|{\bm y}\right\|=1}\atop{{\bm y}\in {\set{Y}_{1,k:d_1}}}}\left\|{\bm P}_{{\set Y}_2^{\perp}}({\bm y})\right\|^2\nonumber\\
&=\sin^2\varphi_{k,d_1,1},\quad \forall 1\le k\le d_1,\label{forcounterpart5-temp}
\end{align}
where
$
{\bm Q}_l:=[{\bm V}_1^{\rm T}{\bm a}_{1,k},\cdots,{\bm V}_1^{\rm T}{\bm a}_{1,d_1}],
$
and
equation \eqref{forcounterpart5-temp} uses the relationship between the projection norm and the smallest canonical angle shown in \eqref{eq-PA-proj-2}.
\end{proof}

\subsection{Proof of the Canonical Angle Sine Preserving Property}
\label{subsec:proof-lemma-sin}

In this subsection,
we will prove the following lemma based on Lemma~\ref{lemma-step1}.
\begin{lemma}\label{lemma-sin}
Under the same setting as Theorem~\ref{theorem-angle-multi},
there exist positive universal constants $c_1$,
$c_2$,
such that
for any $1\le i,j\le L$,
any $\varepsilon\in(0,1/2)$,
and any $n>c_1\varepsilon^{-2}\max\{d,\log L\}$,
with probability at least $1-{\rm e}^{-c_2\varepsilon^2 n}$, we have
\begin{equation}\label{theorem-bound-sin}
    \left(1-\varepsilon\right)\sin\theta_k^{i,j} \le \sin\psi_k^{i,j} \le \left(1+\varepsilon\right)\sin\theta_k^{i,j}, \quad \forall 1\le k\le \min\{d_i,d_j\}.
\end{equation}
\end{lemma}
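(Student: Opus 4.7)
The plan is to sandwich $\sin\psi_k$ via Lemma \ref{lemma-step1} and then to show that both sides of the sandwich lie within a $(1 \pm O(\varepsilon))$ multiplicative factor of $\sin\theta_k$. By the reduction in Section \ref{subsubsec:proof-fromLto2} it suffices to treat a single pair $\mathcal{X}_1, \mathcal{X}_2$ with $d_1 \le d_2 \le d$, canonical angles $\theta_1 \le \cdots \le \theta_{d_1}$, and orthonormal principal vectors $\bm u_{1,1},\ldots,\bm u_{1,d_1} \in \mathcal{X}_1$ and $\bm u_{2,1},\ldots,\bm u_{2,d_1} \in \mathcal{X}_2$ as introduced in Lemma \ref{lemma-step1}.

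The crucial geometric observation is that in the ambient space the largest canonical angle between $\mathcal{X}_{1,1:k} := {\rm span}\{\bm u_{1,1},\ldots,\bm u_{1,k}\}$ and $\mathcal{X}_2$ equals exactly $\theta_k$, and the smallest canonical angle between $\mathcal{X}_{1,k:d_1} := {\rm span}\{\bm u_{1,k},\ldots,\bm u_{1,d_1}\}$ and $\mathcal{X}_2$ also equals exactly $\theta_k$. This follows from the identity $\bm P_{\mathcal{X}_2}(\bm u_{1,j}) = \cos\theta_j\, \bm u_{2,j}$ (an easy consequence of the SVD characterization in Lemma \ref{singular-computing-PA}) together with the fact that $\{\bm P_{\mathcal{X}_2^\perp}(\bm u_{1,j})\}_{j}$ are pairwise orthogonal: for any $\bm x = \sum_j c_j \bm u_{1,j}$ with $\|\bm x\|=1$ one obtains $\|\bm P_{\mathcal{X}_2^\perp}(\bm x)\|^2 = \sum_j c_j^2 \sin^2\theta_j$, which, since the $\theta_j$'s are non-decreasing, is maximized at $\sin^2\theta_k$ over $\mathcal{X}_{1,1:k}$ and minimized at $\sin^2\theta_k$ over $\mathcal{X}_{1,k:d_1}$. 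By Lemma \ref{lemma-step1} the task therefore reduces to showing that $\sin\varphi_{1,k,k}$ and $\sin\varphi_{k,d_1,1}$ are each $(1 \pm O(\varepsilon))\sin\theta_k$ with the claimed probability.

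For this I would promote the pointwise JL hypothesis to a bound that is uniform over the $(\le 2d)$-dimensional subspace $\mathcal{X}_1 + \mathcal{X}_2$: via a standard $\eta$-net argument on the unit sphere of this subspace (net size $\sim (3/\eta)^{2d}$) combined with a union bound and a bootstrapping/approximation step, one obtains that with probability at least $1 - {\rm e}^{-c_2 \varepsilon^2 n}$, provided $n \ge c_1 \varepsilon^{-2} d$, the inequality $(1-\varepsilon)\|\bm v\|^2 \le \|\bm\Phi \bm v\|^2 \le (1+\varepsilon)\|\bm v\|^2$ holds simultaneously for every $\bm v \in \mathcal{X}_1 + \mathcal{X}_2$. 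Since $\bm x - \bm w \in \mathcal{X}_1 + \mathcal{X}_2$ for all $\bm x \in \mathcal{X}_1$ and $\bm w \in \mathcal{X}_2$, taking minima over $\bm w$ of $\|\bm\Phi(\bm x - \bm w)\|^2$ and writing $\bm y = \bm\Phi\bm x$ yields
\begin{equation*}
\frac{1-\varepsilon}{1+\varepsilon} \cdot \frac{\|\bm P_{\mathcal{X}_2^\perp}(\bm x)\|^2}{\|\bm x\|^2} \;\le\; \frac{\|\bm P_{\mathcal{Y}_2^\perp}(\bm y)\|^2}{\|\bm y\|^2} \;\le\; \frac{1+\varepsilon}{1-\varepsilon} \cdot \frac{\|\bm P_{\mathcal{X}_2^\perp}(\bm x)\|^2}{\|\bm x\|^2}
\end{equation*}
uniformly over $\bm x \in \mathcal{X}_1 \setminus \{\bm 0\}$. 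Maximizing over $\bm x \in \mathcal{X}_{1,1:k}$ and using the identification of the preceding paragraph gives $\sin^2\varphi_{1,k,k} \le \frac{1+\varepsilon}{1-\varepsilon}\sin^2\theta_k$, while minimizing over $\bm x \in \mathcal{X}_{1,k:d_1}$ gives $\sin^2\varphi_{k,d_1,1} \ge \frac{1-\varepsilon}{1+\varepsilon}\sin^2\theta_k$. Taking square roots, using $\sqrt{(1+\varepsilon)/(1-\varepsilon)} \le 1 + 2\varepsilon$ and $\sqrt{(1-\varepsilon)/(1+\varepsilon)} \ge 1 - \varepsilon$ for $\varepsilon \in (0,1/2)$, combining with the sandwich from Lemma \ref{lemma-step1}, and finally rescaling $\varepsilon$ by a universal constant delivers \eqref{theorem-bound-sin}.

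The main obstacle is the subspace-JL upgrade: lifting the pointwise JL bound to a uniform bound on the unit sphere of the $(\le 2d)$-dimensional sum $\mathcal{X}_1 + \mathcal{X}_2$ and tracking the constants carefully through the covering/approximation argument so that the final multiplicative slack remains of order $\varepsilon$ while the failure probability $\mathrm{e}^{-c_2\varepsilon^2 n}$ still absorbs the $(3/\varepsilon)^{2d}$ cardinality of the net — this is exactly what forces the dimension requirement $n \gtrsim \varepsilon^{-2} d$. The remaining manipulations (passing minima/maxima across $\bm\Phi$ via the uniform bound, and the geometric identification of the largest and smallest canonical angles on the truncated and tail subspaces with $\theta_k$) are then routine.
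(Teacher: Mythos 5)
Your proposal is correct, and its probabilistic core takes a genuinely different (and arguably cleaner) route than the paper. Both arguments share the reduction to $L=2$, the sandwich $\sin\varphi_{k,d_1,1}\le\sin\psi_k\le\sin\varphi_{1,k,k}$ from Lemma~\ref{lemma-step1}, and the identification of $\theta_k$ as the largest (resp.\ smallest) canonical angle between $\mathcal{X}_{1,1:k}$ (resp.\ $\mathcal{X}_{1,k:d_1}$) and $\mathcal{X}_2$ — your orthogonality computation $\|{\bm P}_{\mathcal{X}_2^{\perp}}(\sum_j c_j{\bm u}_{1,j})\|^2=\sum_j c_j^2\sin^2\theta_j$ is in fact a nice explicit justification of the variational identities the paper uses without comment. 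The difference is in how the high-probability estimate is organized: the paper works separately for each $k$, bounds $\sin^2\varphi_{1,k,k}-\sin^2\theta_k$ by a maximum of a quadratic form over the coefficient sphere $\bm{S}^{k-1}$, discretizes with a $\tfrac14$-net (Lemma~\ref{lemma:covering-approximation}), and at each net point invokes the imported relative-distortion result for a single vector against a projected subspace (Lemma~\ref{lemma-line}) together with the singular-value bound on ${\bm \Phi}{\bm U}$ (Lemma~\ref{lemma:orthonormal-preserving}), finishing with union bounds over the net, over $k$, and over pairs. You instead establish one uniform norm-preservation event on the $(\le 2d)$-dimensional sum $\mathcal{X}_1+\mathcal{X}_2$ (obtainable either by your net-plus-bootstrap argument or directly from Lemma~\ref{lemma:orthonormal-preserving} applied to an orthonormal basis of the sum), and observe that this single deterministic consequence — $\|{\bm P}_{\mathcal{Y}_2^{\perp}}({\bm \Phi}{\bm x})\|^2=\min_{{\bm w}\in\mathcal{X}_2}\|{\bm \Phi}({\bm x}-{\bm w})\|^2$ being multiplicatively preserved, along with $\|{\bm \Phi}{\bm x}\|^2$ — preserves the sine of the angle between every ${\bm x}\in\mathcal{X}_1$ and $\mathcal{X}_2$ up to $(1\pm O(\varepsilon))$ simultaneously, hence handles all $k$ at once with no further union bound and no appeal to the external Lemma~\ref{lemma-line}; the relative-error structure needed for small $\theta_k$ comes for free because the embedding is applied to the difference vector ${\bm x}-{\bm w}$ itself. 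The trade-offs are minor: your net lives on a sphere of dimension up to $2d$ rather than $k\le d$ (immaterial up to constants), you implicitly need $\dim\mathcal{Y}_{1,1:k}\le\dim\mathcal{Y}_2$ to use the max/min characterizations of the extreme canonical angles (guaranteed by the paper's standing dimension-preservation assumption), and your single event conveniently also supplies the injectivity of ${\bm \Phi}$ on the subspaces; the paper's route, by contrast, reuses its already-cited lemmas from \citet{xingyu} rather than re-deriving a subspace embedding. Both yield the stated $n>c_1\varepsilon^{-2}\max\{d,\log L\}$ requirement and exponential failure probability after the same constant-rescaling step.
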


Thanks to the proof in Section~\ref{subsubsec:proof-fromLto2},
it suffices to consider canonical angles
between two subspaces $\mathcal{X}_1$, $\mathcal{X}_2$.
For convenience, we omit the superscript
and denote the $k$-th original (resp. projected) canonical angle as $\theta_k$ (resp. $\psi_k$).

There are $d_1$ canonical angles with the assumption $d_1\le d_2$.
We will prove that the $k$-th canonical angle satisfies \eqref{theorem-bound-sin}.
With a similar argument in Section~\ref{subsubsec:proof-fromLto2},
this would suffice to show that \eqref{theorem-bound-sin}
holds simultaneously for all $k$.

Therefore, we only need to consider the $k$-th canonical angle,
where $k$ is an integer in $[1,d_1]$.
According to Lemma~\ref{lemma-step1},
we only need to prove
\begin{align}
\mathbb{P}\left(\sin\varphi_{1,k,k}\le (1+\varepsilon)\sin\theta_k\right)&\ge 1-{\rm e}^{-c_2\varepsilon^2n},\label{eq-step21}\\
\mathbb{P}\left(\sin\varphi_{k,d_1,1}\ge (1-\varepsilon)\sin\theta_k\right)&\ge 1-{\rm e}^{-c_2\varepsilon^2n}.\label{eq-step22}
\end{align}

We will complete these in the following two parts.
We will follow the notations ${\bm u}_{i,k}$, $\mathcal Y_{1,1:k}$,
$\mathcal Y_{1,k:d_1}$, $\varphi_{1,k,k}$ and $\varphi_{k,d_1,1}$
in Lemma~\ref{lemma-step1}.
We further denote ${{\bm U}}_i:=[{\bm u}_{i,1},\cdots,{\bm u}_{i,d_i}]$
as an orthonormal basis for the original subspace $\mathcal{X}_i$,
and ${\bm A}_i:={\bm \Phi}{\bm U}_i$
as a basis for the projected subspace ${\set Y}_i$,
where the $k$-th column ${\bm a}_{i,k}$ is obtained by
${\bm a}_{i,k}:={\bm \Phi}{\bm u}_{i,k}$, $k=1,\cdots,d_i$, $i=1,2$.


\subsubsection{Proof of \eqref{eq-step21}}
\label{subsubsec:proof-eq-step21}

According to \eqref{eq-PA-proj-1}
both $\sin\varphi_{1,k,k}$ and $\sin\theta_k$ can be written as
the solution of a maximum problem as below.
\begin{align*}
\sin^2\varphi_{1,k,k}&= \max_{{\bm x}\in\bm{S}^{k-1}}\left\|{\bm P}_{{\set Y}_2^{\perp}}\left(\frac{{\bm A}_{1,1:k}{\bm x}}{\left\|{\bm A}_{1,1:k}{\bm x}\right\|}\right)\right\|^2,\\
\sin^2\theta_{k}&= \max_{{\bm x}\in\bm{S}^{k-1}}\left\|{\bm P}_{{\set X}_2^{\perp}}({\bm U}_{1,1:k}{\bm x})\right\|^2,
\end{align*}
where ${\bm A}_{1,1:k}:=[{\bm a}_{1,1},\cdots,{\bm a}_{1,k}]$
and ${\bm U}_{1,1:k}:=[{\bm u}_{1,1},\cdots,{\bm u}_{1,k}]$.
Then
\begin{align}\label{forcounterpart4}
\sin^2\varphi_{1,k,k}-\sin^2\theta_{k}&= \max_{{\bm x}\in\bm{S}^{k-1}}\left\|{\bm P}_{{\set Y}_2^{\perp}}\left(\frac{{\bm A}_{1,1:k}{\bm x}}{\left\|{\bm A}_{1,1:k}{\bm x}\right\|}\right)\right\|^2-\sin^2\theta_k\nonumber\\
&\le \max_{{\bm x}\in\bm{S}^{k-1}}\left[ \left\|{\bm P}_{{\set Y}_2^{\perp}}\left(\frac{{\bm A}_{1,1:k}{\bm x}}{\left\|{\bm A}_{1,1:k}{\bm x}\right\|}\right)\right\|^2
- \left\|{\bm P}_{{\set X}_2^{\perp}}\left({\bm U}_{1,1:k}{\bm x}\right)\right\|^2 \right].
\end{align}
The RHS of \eqref{forcounterpart4} involves the maximum over the whole sphere $\bm{S}^{k-1}$,
which can be handled by a standard entropy argument (Appendix \ref{app:covering-argument}).
We may take an $\varepsilon$-net of the unit sphere.
Then it suffices to consider any given ${\bm x}$,
and use union bound to complete proof.

Here we need to invoke the following lemma about the perturbation on orthonormal basis.
\begin{lemma}(\citet{xingyu}, Lemma 6)\label{lemma:orthonormal-preserving}
Suppose ${\bm U}$ is an $N\times d$ matrix with orthonormal columns,
i.e., ${\bm U}^{\rm T}{\bm U}={\bm I}_d$.
Let ${\bm \Phi}$ be an $n\times N$ random matrix with JL property.
Then there exist positive universal constants $c_1$, $c_2$,
such that for any $\varepsilon<(0, 1/2)$,
any $n>c_1\varepsilon^{-2}d$, we have
\begin{equation*}
P(1-\varepsilon<\sigma_{d}({\bm \Phi}{\bm U})\le \sigma_1({\bm \Phi}{\bm U})<1+\varepsilon)\ge 1-{\rm e}^{-c_2\varepsilon^2 n}.
\end{equation*}
\end{lemma}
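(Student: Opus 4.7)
The plan is to establish the statement via a standard covering-net argument combined with the JL property applied to unit vectors in the column space of $\bm U$. Since $\bm U$ has orthonormal columns, $\|\bm U\bm x\|=\|\bm x\|$ for every $\bm x\in\mathbb R^d$, so the $d\times d$ Gram matrix of $\bm{\Phi}\bm U$ has singular values $\sigma_k(\bm{\Phi}\bm U)$ expressible as
$$\sigma_1(\bm{\Phi}\bm U)=\max_{\bm x\in\bm S^{d-1}}\|\bm{\Phi}\bm U\bm x\|,\qquad \sigma_d(\bm{\Phi}\bm U)=\min_{\bm x\in\bm S^{d-1}}\|\bm{\Phi}\bm U\bm x\|.$$
It therefore suffices to produce, with the claimed probability, a uniform two-sided bound
$$\bigl|\,\|\bm{\Phi}\bm U\bm x\|^2-1\,\bigr|\le \varepsilon'\quad\text{for all }\bm x\in\bm S^{d-1},$$
with $\varepsilon'$ a small constant multiple of $\varepsilon$; this immediately yields $1-\varepsilon<\sigma_d\le\sigma_1<1+\varepsilon$.

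For a fixed $\bm x\in\bm S^{d-1}$, the vector $\bm U\bm x\in\mathbb R^N$ is a unit vector, so the JL property gives
$$\mathbb P\bigl(\bigl|\,\|\bm{\Phi}\bm U\bm x\|^2-1\,\bigr|>\varepsilon'\bigr)\le 2\mathrm e^{-c\varepsilon'^2 n}.$$
Next I would fix an absolute constant $\eta\in(0,1)$, say $\eta=1/4$, and take an $\eta$-net $\mathcal N$ of $\bm S^{d-1}$ with cardinality $|\mathcal N|\le(3/\eta)^d$ (a standard volumetric bound). A union bound over $\mathcal N$ yields that the pointwise estimate holds simultaneously for every $\bm x_0\in\mathcal N$ with probability at least $1-2(3/\eta)^d\mathrm e^{-c\varepsilon'^2 n}$. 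Choosing $c_1$ large enough that $n>c_1\varepsilon^{-2}d$ swamps the combinatorial factor $(3/\eta)^d=\mathrm e^{d\log(3/\eta)}$, the total probability of failure drops to $\mathrm e^{-c_2\varepsilon^2 n}$ for a suitable $c_2$.

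To promote this net bound to a uniform bound over the whole sphere, I would use the classical bootstrap argument for operator norms: for arbitrary $\bm x\in\bm S^{d-1}$, pick $\bm x_0\in\mathcal N$ with $\|\bm x-\bm x_0\|\le\eta$ and write
$$\|\bm{\Phi}\bm U\bm x\|\le\|\bm{\Phi}\bm U\bm x_0\|+\|\bm{\Phi}\bm U(\bm x-\bm x_0)\|\le \sqrt{1+\varepsilon'}+\eta\,\sigma_1(\bm{\Phi}\bm U),$$
and take the supremum over $\bm x$ on the left. Rearranging gives $\sigma_1(\bm{\Phi}\bm U)\le\sqrt{1+\varepsilon'}/(1-\eta)$; the parallel argument produces a matching lower bound on $\sigma_d(\bm{\Phi}\bm U)$. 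With $\eta=1/4$ fixed, choosing $\varepsilon'$ a small enough constant multiple of $\varepsilon$ turns these bounds into $1-\varepsilon<\sigma_d\le\sigma_1<1+\varepsilon$.

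The main obstacle is not a conceptual one but purely the bookkeeping: one must choose the net scale $\eta$, the tolerance $\varepsilon'$, and the constants $c_1,c_2$ in a compatible way so that (i) the union bound absorbs $(3/\eta)^d$ into the exponent (forcing the $c_1\varepsilon^{-2}d$ lower bound on $n$), and (ii) the bootstrap inflation $1/(1-\eta)$ together with the $\sqrt{1+\varepsilon'}\to 1+\varepsilon$ relaxation stays within the desired window. All such constant-juggling can be carried out in a fixed order (first pick $\eta$, then shrink $\varepsilon'$, then enlarge $c_1$ and diminish $c_2$), so no substantive difficulty remains beyond that which is already resolved by the JL property itself.
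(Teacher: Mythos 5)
Two remarks. First, the paper itself does not prove this lemma; it is imported verbatim from \citet{xingyu} (their Lemma~6), so your argument has to stand on its own. Your overall strategy --- apply the JL property to the unit vectors $\bm U\bm x_0$ for $\bm x_0$ in a constant-scale net of $\bm S^{d-1}$, union bound over the net, and absorb the $\mathrm e^{O(d)}$ cardinality into the exponent using $n>c_1\varepsilon^{-2}d$ --- is indeed the standard route, and that part of the bookkeeping is fine.

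However, the step that passes from the net to the whole sphere fails as written. With the net scale fixed at $\eta=1/4$, rearranging $\|\bm\Phi\bm U\bm x\|\le\|\bm\Phi\bm U\bm x_0\|+\eta\,\sigma_1(\bm\Phi\bm U)$ gives $\sigma_1(\bm\Phi\bm U)\le\sqrt{1+\varepsilon'}/(1-\eta)=\tfrac43\sqrt{1+\varepsilon'}\ge\tfrac43$: the inflation $1/(1-\eta)$ multiplies the norm itself (which is $\approx 1$), not the deviation (which is $\approx\varepsilon'$), so no choice of $\varepsilon'=\Theta(\varepsilon)$ brings this below $1+\varepsilon$ once $\varepsilon<1/3$, and the parallel lower bound degenerates to roughly $2/3$ rather than $1-\varepsilon$. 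Shrinking the net scale to $\eta\asymp\varepsilon$ would repair the arithmetic but blows the net up to $(1+2/\eta)^d=\mathrm e^{\Theta(d\log(1/\varepsilon))}$ points, which the exponent $c\varepsilon'^2n\asymp d$ available under $n\asymp\varepsilon^{-2}d$ cannot absorb uniformly in $\varepsilon$; you would end up needing $n\gtrsim\varepsilon^{-2}d\log(1/\varepsilon)$, weaker than the statement. The standard fix, consistent with the tools already in this paper, is to run the net argument on the quadratic form of the deviation matrix $\bm A:=(\bm\Phi\bm U)^{\rm T}\bm\Phi\bm U-{\bm I}_d$: for each net point the JL property gives $|\bm x_0^{\rm T}\bm A\bm x_0|=\bigl|\|\bm\Phi\bm U\bm x_0\|^2-1\bigr|\le\varepsilon'$, and Lemma~\ref{lemma:covering-approximation} yields
\begin{equation*}
\max_{\bm x\in\bm S^{d-1}}\left|\bm x^{\rm T}\bm A\bm x\right|\le 2\max_{\bm x\in\mathcal N}\left|\bm x^{\rm T}\bm A\bm x\right|\le 2\varepsilon',
\end{equation*}
so that $1-2\varepsilon'\le\sigma_d^2(\bm\Phi\bm U)\le\sigma_1^2(\bm\Phi\bm U)\le1+2\varepsilon'$; taking $\varepsilon'=\varepsilon/2$ gives $1-\varepsilon\le\sigma_d\le\sigma_1\le\sqrt{1+\varepsilon}<1+\varepsilon$. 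Here the factor $2$ hits only the deviation, which is exactly what your constant-juggling needs. With that substitution the rest of your argument goes through unchanged.
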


 Plugging $\sigma_k({\bm A}_{1,1:k})\ge (1-\varepsilon)$ into \eqref{forcounterpart4}, we have
 \begin{align}\label{eq-temp-20}
 \sin^2\varphi_{1,k,k}-\sin^2\theta_k\le \max_{{\bm x}\in\bm{S}^{k-1}}\left[ (1-\varepsilon)^{-2}\left\|{\bm P}_{{\set Y}_2^{\perp}}\left({\bm A}_{1,1:k}{\bm x}\right)\right\|^2
- \left\|{\bm P}_{{\set X}_2^{\perp}}\left({\bm U}_{1,1:k}{\bm x}\right)\right\|^2  \right].
 \end{align}
Notice that the RHS of \eqref{eq-temp-20} is equal to
$$
{\bm x}^{\rm T}\left((1-\varepsilon)^{-2}{\bm A}_{1,1:k}^{\rm T}{\bm P}_{{\set Y}_2^{\perp}}\left({\bm A}_{1,1:k}\right)
-{\bm U}_{1,1:k}^{\rm T}{\bm P}_{{\set X}_2^{\perp}}({\bm U}_{1,1:k})\right){\bm x}.
$$
Following a standard covering argument (Appendix \ref{app:covering-argument}),
we can evaluate the RHS of \eqref{eq-temp-20} by calculating on
a $\frac14$-net $\mathcal{N}$.
 \begin{align}\label{eq-calcualte-on-net}
 \sin^2\varphi_{1,k,k}-\sin^2\theta_k\le 2\max_{{\bm x}\in\mathcal{N}}\left[  \left|(1-\varepsilon)^{-2}\left\|{\bm P}_{{\set Y}_2^{\perp}}\left({\bm A}_{1,1:k}{\bm x}\right)\right\|^2
- \left\|{\bm P}_{{\set X}_2^{\perp}}\left({\bm U}_{1,1:k}{\bm x}\right)\right\|^2\right|  \right].
 \end{align}

Now it suffices to bound the maximum of the last quantity over the $\frac14$-net $\mathcal{N}$.
We only need to consider the upper bound of the bracket expression for any given ${\bm x}\in\mathcal{N}$.
Noticing that
${\bm A}_{1,1:k}{\bm x}={\bm \Phi}{\bm U}_{1,1:k}{\bm x}$
is compressed from ${\bm U}_{1,1:k}{\bm x}$,
the bracket expression is similar to the distortion on ${\bm P}_{\mathcal{X}_2^{\perp}}\left({\bm U}_{1,1:k}{\bm x}\right)$ induced by JL random projection,
which is given by the following Lemma.
\begin{lemma}(\citet{xingyu}, Lemma 2)\label{lemma-line}
Suppose ${\bm \Phi}$ is a random matrix with JL property.
Suppose ${\bm x}_1$, $\set X_2$ are respectively a vector and a $d$-dimensional
subspace of $\mathbb R^N$.
Denote ${\bm y}_1:={\bm \Phi}{\bm x}_1/\left\|{\bm \Phi}{\bm x}_1\right\|$ and $\set Y_2$ as
the projection of $\set X_2$ with ${\bm \Phi}$.
Then there exist positive universal constants $c_1$, $c_2$,
such that
for any $\varepsilon\in(0,1/2)$ and any $n>c_1\varepsilon^{-2}d$, we have
\begin{equation*}
    \left|\left\|{\bm P}_{\mathcal{Y}_2^{\perp}}({\bm y}_1)\right\|^2-\left\|{\bm P}_{\mathcal{X}_2^{\perp}}({\bm x}_1)\right\|^2\right|\le\varepsilon\left\|{\bm P}_{\mathcal{X}_2^{\perp}}({\bm x}_1)\right\|^2
\end{equation*}
with probability at least $1-{\rm e}^{-c_2\varepsilon^2n}$.
\end{lemma}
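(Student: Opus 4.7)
The plan is to reduce the claim to an orthonormal-basis perturbation estimate on a cleverly chosen $(d+1)$-dimensional subspace and then to combine that with one application of the scalar JL inequality. By homogeneity I assume $\|\bm{x}_1\|=1$. Decompose $\bm{x}_1 = \bm{x}_\parallel + t\,\bm{e}_\perp$, where $\bm{x}_\parallel := \bm{P}_{\mathcal{X}_2}(\bm{x}_1)$, $t := \|\bm{P}_{\mathcal{X}_2^\perp}(\bm{x}_1)\|$, and $\bm{e}_\perp := \bm{P}_{\mathcal{X}_2^\perp}(\bm{x}_1)/t$ is a unit vector in $\mathcal{X}_2^\perp$; the case $t=0$ is trivial since both sides of the claimed inequality vanish. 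The key choice is to augment an orthonormal basis $\{\bm{v}_1,\ldots,\bm{v}_d\}$ of $\mathcal{X}_2$ by $\bm{e}_\perp$ to form an orthonormal matrix $\bm{V} = [\bm{v}_1,\ldots,\bm{v}_d,\bm{e}_\perp]\in\mathbb{R}^{N\times(d+1)}$, rather than completing the basis of $\mathcal{X}_2$ by an arbitrary unit vector.

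Next I invoke Lemma~\ref{lemma:orthonormal-preserving} on $\bm{V}$ (with $d+1$ in place of $d$, absorbed into the constant $c_1$), together with the scalar JL inequality applied to $\bm{x}_1$; a union bound places both events of probability at least $1-\mathrm{e}^{-c\varepsilon^2 n}$ simultaneously, at the cost of absorbing a factor of $2$ into $c_2$. On this intersection event, $\bm{A}:=\bm{\Phi}\bm{V}$ has all $(d+1)$ singular values in $[1-\varepsilon,1+\varepsilon]$, so the Gram matrix $\bm{A}^{\rm T}\bm{A}$ differs from $\bm{I}_{d+1}$ by $O(\varepsilon)$ in operator norm, and $\|\bm{\Phi}\bm{x}_1\|^2 = 1+O(\varepsilon)$.

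The crux of the argument is the identity
\[
\bm{P}_{\mathcal{Y}_2^\perp}(\bm{\Phi}\bm{x}_1) \;=\; t\,\bm{P}_{\mathcal{Y}_2^\perp}(\bm{A}_{d+1}),
\]
which holds because the image of $\bm{x}_\parallel\in\mathcal{X}_2$ under $\bm{\Phi}$ lies in $\mathcal{Y}_2 = \operatorname{span}(\bm{A}_{1:d})$ and is annihilated by $\bm{P}_{\mathcal{Y}_2^\perp}$. Everything therefore reduces to estimating $\|\bm{P}_{\mathcal{Y}_2^\perp}(\bm{A}_{d+1})\|^2$. Using Pythagoras together with the closed form $\bm{P}_{\mathcal{Y}_2}(\bm{A}_{d+1}) = \bm{A}_{1:d}(\bm{A}_{1:d}^{\rm T}\bm{A}_{1:d})^{-1}\bm{A}_{1:d}^{\rm T}\bm{A}_{d+1}$, the bound $\|\bm{A}^{\rm T}\bm{A}-\bm{I}_{d+1}\|_{\rm op} = O(\varepsilon)$ implies that the cross-Gram vector $\bm{A}_{1:d}^{\rm T}\bm{A}_{d+1}$ has norm $O(\varepsilon)$ (it is an off-diagonal block of $\bm{A}^{\rm T}\bm{A}-\bm{I}_{d+1}$) and that $(\bm{A}_{1:d}^{\rm T}\bm{A}_{1:d})^{-1}$ has spectral norm $1+O(\varepsilon)$. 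Consequently $\|\bm{P}_{\mathcal{Y}_2}(\bm{A}_{d+1})\|^2 = O(\varepsilon^2)$ and $\|\bm{P}_{\mathcal{Y}_2^\perp}(\bm{A}_{d+1})\|^2 = 1+O(\varepsilon)$.

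Dividing by $\|\bm{\Phi}\bm{x}_1\|^2 = 1+O(\varepsilon)$ converts the estimate for $\|\bm{P}_{\mathcal{Y}_2^\perp}(\bm{\Phi}\bm{x}_1)\|^2$ into $\|\bm{P}_{\mathcal{Y}_2^\perp}(\bm{y}_1)\|^2 = t^2(1+O(\varepsilon)) = \|\bm{P}_{\mathcal{X}_2^\perp}(\bm{x}_1)\|^2(1+O(\varepsilon))$, and rescaling $\varepsilon$ by the accumulated absolute constant recovers the statement. The main obstacle I expect is the initial setup rather than any single estimate: one must pick the augmenting unit vector to be exactly $\bm{e}_\perp$, so that the parallel and perpendicular components decouple into the clean identity displayed above. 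If one completes the basis of $\mathcal{X}_2$ by an arbitrary unit vector, the contribution of $t\bm{e}_\perp$ will pick up an additional component along $\mathcal{Y}_2$ through $\bm{\Phi}$ and the collapse fails. After that choice is made, the rest is routine $\varepsilon$-algebra on a matrix close to the identity plus one scalar JL bound, stitched together by a union bound.
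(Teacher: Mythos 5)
Your argument is correct, but it cannot be compared against an in-paper proof: the paper never proves Lemma~\ref{lemma-line}, it simply imports it as Lemma~2 of the cited reference. What you have written is therefore an independent, self-contained derivation from Lemma~\ref{lemma:orthonormal-preserving} plus elementary linear algebra, and it goes through: augmenting an orthonormal basis of $\mathcal X_2$ by the unit vector $\bm e_\perp=\bm P_{\mathcal X_2^\perp}(\bm x_1)/\|\bm P_{\mathcal X_2^\perp}(\bm x_1)\|$ makes $\bm P_{\mathcal Y_2^\perp}(\bm\Phi\bm x_1)=t\,\bm P_{\mathcal Y_2^\perp}(\bm\Phi\bm e_\perp)$ exact, the off-diagonal block of $\bm A^{\rm T}\bm A-\bm I_{d+1}$ controls $\|\bm A_{1:d}^{\rm T}\bm A_{d+1}\|=O(\varepsilon)$, $\sigma_{\min}(\bm A_{1:d})\ge\sigma_{\min}(\bm A)\ge 1-\varepsilon$ controls the pseudoinverse factor, and the multiplicative factor $t^2$ is what makes the final error relative to $\|\bm P_{\mathcal X_2^\perp}(\bm x_1)\|^2$, as the statement requires; the degenerate case $t=0$ and the dimension-preservation of $\mathcal Y_2$ are both covered by the same singular-value event. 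Two small remarks. First, the appeal to ``homogeneity'' is not literally valid: the claimed inequality is not scale-invariant in $\bm x_1$ (the left side normalizes $\bm y_1$ but the right side scales with $\|\bm x_1\|^2$), so the lemma should be read with $\|\bm x_1\|=1$, which is exactly how it is invoked in the paper (there $\bm x_1=\bm U_{1,1:k}\bm x$ with $\bm x$ on the unit sphere); your proof is fine once this is stated as a hypothesis rather than a reduction. Second, the separate scalar JL bound on $\|\bm\Phi\bm x_1\|^2$ is redundant, since $\bm x_1$ lies in the span of the columns of $\bm V$ and hence $\|\bm\Phi\bm x_1\|\in[1-\varepsilon,1+\varepsilon]$ already follows from the singular-value event on $\bm A=\bm\Phi\bm V$; dropping it removes the union bound entirely.
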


To use Lemma~\ref{lemma-line}, we need to reformulate \eqref{eq-calcualte-on-net} as below.
 \begin{align}
 &\left|(1-\varepsilon)^{-2}\left\|{\bm P}_{{\set Y}_2^{\perp}}\left({\bm A}_{1,1:k}{\bm x}\right)\right\|^2
- \left\|{\bm P}_{{\set X}_2^{\perp}}\left({\bm U}_{1,1:k}{\bm x}\right)\right\|^2\right|\nonumber\\
=&\left\|{\bm A}_{1,1:k}{\bm x}\right\|^2\left|(1-\varepsilon)^{-2}\left\|{\bm P}_{{\set Y}_2^{\perp}}\left(\frac{{\bm A}_{1,1:k}{\bm x}}{\left\|{\bm A}_{1,1:k}{\bm x}\right\|}\right)\right\|^2
-\left\|{\bm A}_{1,1:k}{\bm x}\right\|^{-2} \left\|{\bm P}_{{\set X}_2^{\perp}}\left({\bm U}_{1,1:k}{\bm x}\right)\right\|^2\right|\nonumber\\
\le&(1+\varepsilon)^2\left|(1-\varepsilon)^{-2}\left\|{\bm P}_{{\set Y}_2^{\perp}}\left(\frac{{\bm A}_{1,1:k}{\bm x}}{\left\|{\bm A}_{1,1:k}{\bm x}\right\|}\right)\right\|^2
-\left\|{\bm A}_{1,1:k}{\bm x}\right\|^{-2} \left\|{\bm P}_{{\set X}_2^{\perp}}\left({\bm U}_{1,1:k}{\bm x}\right)\right\|^2\right|\label{eq-temp-12}.
\end{align}
 Now we can invoke Lemma~\ref{lemma-line} and get
\begin{align}
&\left|(1-\varepsilon)^{-2}\left\|{\bm P}_{{\set Y}_2^{\perp}}\left(\frac{{\bm A}_{1,1:k}{\bm x}}{\left\|{\bm A}_{1,1:k}{\bm x}\right\|}\right)\right\|^2
-\left\|{\bm A}_{1,1:k}{\bm x}\right\|^{-2} \left\|{\bm P}_{{\set X}_2^{\perp}}\left({\bm U}_{1,1:k}{\bm x}\right)\right\|^2\right|\nonumber\\
\le&  \left(1-\varepsilon\right)^{-2}\left|\left\|{\bm P}_{{\set Y}_2^{\perp}}\left(\frac{{\bm A}_{1,1:k}{\bm x}}{\left\|{\bm A}_{1,1:k}{\bm x}\right\|}\right)\right\|^2
- \left\|{\bm P}_{{\set X}_2^{\perp}}\left({\bm U}_{1,1:k}{\bm x}\right)\right\|^2\right|\nonumber\\
&+ \left|\left(1-\varepsilon\right)^{-2}-\left\|{\bm A}_{1,1:k}{\bm x}\right\|^{-2}\right|\left\|{\bm P}_{{\set X}_2^{\perp}}\left({\bm U}_{1,1:k}{\bm x}\right)\right\|^2\label{forcounterpart2}\\
<& 4\varepsilon\left\|{\bm P}_{\mathcal{X}_2^{\perp}}({\bm U}_{1,1:k}{\bm x})\right\|^2+8\varepsilon\left\|{\bm P}_{{\set X}_2^{\perp}}\left({\bm U}_{1,1:k}{\bm x}\right)\right\|^2\nonumber\\
= & 12\varepsilon\sin^2\theta_k,\label{eq-temp-21}
 \end{align}
where $\varepsilon<1/2$ is used implicitly.

Thus, for any $\varepsilon\in(0,1/2)$, with probability at least $1-9^d{\rm e}^{-c_2\varepsilon^2 n}$,
we have
$$
\sin^2\varphi_{1,k,k}-\sin^2\theta_k\le 2\cdot \frac94 \cdot 12\varepsilon\sin^2\theta_k.
$$
Redefining $c_1:=\max\{c_1, 2\log 9/c_2\}$, $c_2:=(c_2-\log 9/c_1)/54^2$,
we have \eqref{eq-step21} hold.

\subsubsection{Proof of \eqref{eq-step22}}
\label{subsubsec:proof-eq-step22}

In this proof, we follow the same approach as Section~\ref{subsubsec:proof-eq-step21}.
According to \eqref{eq-PA-proj-2}, we have
\begin{align*}
\sin^2\varphi_{k,d_1,1}&=\min_{{\bm x}\in\bm{S}^{d_1-k}}\frac{\left\|{\bm P}_{{\set Y}_2^{\perp}}\left({\bm A}_{1,k:d_1}{\bm x}\right)\right\|^2}{\left\|{\bm A}_{1,k:d_1}{\bm x}\right\|^2},\\
\sin^2\theta_k&=\min_{{\bm x}\in\bm{S}^{d_1-k}}\left\|{\bm P}_{{\set X}_2^{\perp}}\left({\bm U}_{1,k:d_1}{\bm x}\right)\right\|^2,
\end{align*}
where ${\bm A}_{1,k:d_1}:=[{\bm a}_{1,k},\cdots,{\bm a}_{1,d_1}]$
and ${\bm U}_{1,k:d_1}:=[{\bm u}_{1,k},\cdots,{\bm u}_{1,d_1}]$.
Then we could derive the counterpart of \eqref{forcounterpart4}, \eqref{eq-temp-20}, \eqref{eq-calcualte-on-net}, and \eqref{eq-temp-12} as below.
\begin{align}
\sin^2\varphi_{k,d_1,1}-\sin^2\theta_{k}&\ge \min_{{\bm x}\in\bm{S}^{d_1-k}}\left[ \left\|{\bm P}_{{\set Y}_2^{\perp}}\left(\frac{{\bm A}_{1,k:d_1}{\bm x}}{\left\|{\bm A}_{1,k:d_1}{\bm x}\right\|}\right)\right\|^2
- \left\|{\bm P}_{{\set X}_2^{\perp}}\left({\bm U}_{1,k:d_1}{\bm x}\right)\right\|^2 \right]\nonumber\\
&=-\max_{{\bm x}\in\bm{S}^{d_1-k}}\left[  \left\|{\bm P}_{{\set X}_2^{\perp}}\left({\bm U}_{1,k:d_1}{\bm x}\right)\right\|^2
- \left\|{\bm P}_{{\set Y}_2^{\perp}}\left(\frac{{\bm A}_{1,k:d_1}{\bm x}}{\left\|{\bm A}_{1,k:d_1}{\bm x}\right\|}\right)\right\|^2\right]\nonumber\\
&\ge  -\max_{{\bm x}\in\bm{S}^{d_1-k}}\left[  \left\|{\bm P}_{{\set X}_2^{\perp}}\left({\bm U}_{1,k:d_1}{\bm x}\right)\right\|^2
-(1+\varepsilon)^{-2}\left\|{\bm P}_{{\set Y}_2^{\perp}}\left({\bm A}_{1,k:d_1}{\bm x}\right)\right\|^2  \right]\nonumber\\
&\ge  -2\max_{{\bm x}\in\mathcal{N}}\left|\left\|{\bm P}_{{\set X}_2^{\perp}}\left({\bm U}_{1,k:d_1}{\bm x}\right)\right\|^2
- (1+\varepsilon)^{-2}\left\|{\bm P}_{{\set Y}_2^{\perp}}\left({\bm A}_{1,k:d_1}{\bm x}\right)\right\|^2 \right|\nonumber\\
&\ge -2\left(1+\varepsilon\right)^2\max_{{\bm x}\in\mathcal{N}}\left|
\frac{\left\|{\bm P}_{{\set X}_2^{\perp}}\left({\bm U}_{1,k:d_1}{\bm x}\right)\right\|^2}{\left\|{\bm A}_{1,k:d_1}{\bm x}\right\|^2}
-  \left(1+\varepsilon\right)^{-2}\left\|{\bm P}_{{\set Y}_2^{\perp}}\left(\frac{{\bm A}_{1,k:d_1}{\bm x}}{\left\|{\bm A}_{1,k:d_1}{\bm x}\right\|}\right)\right\|^2\right|,\nonumber
\end{align}
where $\mathcal{N}$ here denotes the $\frac14$-net of unit sphere $\bm{S}^{d_1-k}$.

Now we consider each given ${\bm x}\in\mathcal{N}$.
We could derive the counterpart of \eqref{forcounterpart2} and \eqref{eq-temp-21} as below
 \begin{align*}
&\left|
\frac{\left\|{\bm P}_{{\set X}_2^{\perp}}\left({\bm U}_{1,k:d_1}{\bm x}\right)\right\|^2}{\left\|{\bm A}_{1,k:d_1}{\bm x}\right\|^2}
-  \left(1+\varepsilon\right)^{-2}\left\|{\bm P}_{{\set Y}_2^{\perp}}\left(\frac{{\bm A}_{1,k:d_1}{\bm x}}{\left\|{\bm A}_{1,k:d_1}{\bm x}\right\|}\right)\right\|^2\right|\\
  \le& \left((1-\varepsilon)^{-2}-(1+\varepsilon)^{-2}\right)\left\|{\bm P}_{{\set X}_2^{\perp}}\left({\bm U}_{1,k:d_1}{\bm x}\right)\right\|^2
+ \left(1+\varepsilon\right)^{-2}\left\|{\bm P}_{\mathcal{X}_2^{\perp}}({\bm U}_{1,k:d_1}{\bm x}_1)\right\|^2\varepsilon\\
\le& 9\varepsilon\sin^2\theta_k.
 \end{align*}

Following the same argument as the end of Section~\ref{subsubsec:proof-eq-step21},
we could complete the proof.

\subsection{Proof of Theorem~\ref{theorem-angle-multi}}
\label{subsec:proof-theorem-angle-multi-from-lemma-angle}

According to Section~\ref{subsubsec:proof-fromLto2},
it suffices to prove Theorem~\ref{theorem-angle-multi} with $L=2$.
For convenience, we also use $\theta_k$ (resp. $\psi_k$) instead of
$\theta_k^{1,2}$ (resp. $\psi_k^{1,2}$) to denote
the $k$-th original (resp. projected) canonical angle.
Then we only need to prove
\begin{equation}\label{lemma-bound-angle}
    \left(1-\varepsilon\right)\theta_k \le \psi_k \le \left(1+\varepsilon\right)\theta_k,\quad \forall 1\le k\le d_1.
\end{equation}

According to Lemma~\ref{lemma-sin}, for any $\delta \in (0, 1/2)$,
there exist positive universal constants $c_1$, $c_2$, such that for any $n>c_1\delta^{-2}d$, with probability at least $1-{\rm e}^{-c_2\delta^{2} n}$, we have
$$
(1-\delta)\sin\theta_k \le \sin\psi_k\le(1+\delta)\sin\theta_k.
$$

We consider two cases: $\theta_k\in[0,\pi/4]$ and $\theta_k\in[\pi/4,\pi/2]$.
When $\theta_k\in[0,\pi/4]$, we have
\begin{align*}
\left|\psi_k - \theta_k\right| &\le \left|\arcsin\left(\left(1\pm\delta\right)\sin\theta_k\right) -  \arcsin\left(\sin\theta_k\right)\right|\nonumber\\
&\le \frac{\delta\sin\theta_k}{\sqrt{1 - \xi_k^2}},
\end{align*}
where
$\left(1-\delta\right)\sin\theta_k<\xi_k<\left(1+\delta\right)\sin\theta_k$.
Assume $\delta\le 1/3$, we have
\begin{align*}
\left|\psi_k - \theta_k\right| &\le \frac{\delta\sin\theta_k}{\sqrt{1 - (1+\delta)^2\sin^2\theta_k}}\\
&\le  \frac{\delta\sin\theta_k}{\sqrt{1 - (4/3)^2\sin^2\theta_k}}\\
&\le 3\delta\theta_k.
\end{align*}

When $\theta_k\in[\pi/4,\pi/2]$,
considering that the function $\arcsin$ is uniformly continuous within interval $[1/2,1]$,
for any $\epsilon\in(0,1/2)$, there exists constant $\delta_\epsilon>0$,
such that for any $\left|\sin\psi_k-\sin\theta_k\right|<\delta_\epsilon$,
we have
$$
\left|\theta_k-\psi_k\right|\le \epsilon\theta_k.
$$
Redefining $\varepsilon:=\max\{\epsilon,3\delta\}$,
we can get \eqref{lemma-bound-angle}.

One should note that in the second case, i.e., $\theta_k\in[\pi/4,\pi/2]$,
we fail to prove the reduced dimension $n=O(\varepsilon^{-2})$.
To get this conclusion, we need to first prove
$$
\left|\cos^2\psi_k-\cos^2\theta_k\right|\le C_1\varepsilon\cos\theta_k+C_2\varepsilon^2,
$$
and then use the $\arccos$ function to get this result.
Since the proof is similar with that of Lemma~\ref{lemma-sin},
we postpone it to Appendix \ref{app:proof-cosine-preserving}.

\section{Compressed Subspace Learning: Applications}
\label{sec:app-compression}

In this section,
we will show instances of the CSL framework proposed in Section~\ref{sec:framework} on three subspace-related tasks, namely,
subspace visualization, active subspace detection, and subspace clustering.
Through the instances on the first two tasks, we validate the performance preserving property of JL random projection.
The related theoretical analyses are deferred to appendix.
For subspace clustering,
some of its algorithms suffer from the high computational complexity.
We empirically verify
that JL random projection
can not only approximately preserve the clustering accuracy,
but also significantly reduce the time consumption.
For convenience, we call the algorithm with JL random projection as the compressed version,
such as compressed subspace clustering.

\subsection{Data sets}

We will use the following two real-wold data sets to test the performance of CSL framework.

{\bf YaleB Face data set} consists of frontal face images of $38$ human subjects under $64$ different illumination conditions.
The size of images is $192\times 168$.
We reshape each image to a vector of $32256$ dimensions.
It is assumed that images of the same subject lie in a $9$-dimensional subspace \citep{Noisy2016Wang}.
For convenience, in the following experiments, we randomly select $4$ subjects to analyze their face images.

{\bf Webb Spam Corpus 2006 data set} is a collection of approximately $350,000$ spam web pages,
which are divided into two categories.
The feature, extracted following the way provided by LIBSVM%
\footnote{https://www.csie.ntu.edu.tw/~cjlin/libsvmtools/datasets/binary.html},
is a $16$ million-dimensional sparse vector.
In the following experiments, we uniformly sample $1000$ data points within each category,
and model them with UoS.
Specifically, we assume that data points belonging to the same category lie in a $10$-dimensional subspace%
\footnote{The dimension of subspace is assumed to be ten because the energy for these two categories on the first $10$ principal components account for $79.1\%$ and $67.7\%$,
which is large enough.}.
We remark that applying some subspace learning algorithms, e.g., subspace clustering,
on this data set directly is infeasible for extremely high ambient dimension.
Under such circumstance, dimensionality reduction is not only beneficial, but also necessary.

Considering that data in Webb Spam Corpus 2006 data set is sparse,
we use Bernoulli random matrices instead of partial Fourier matrices in Step I in Algorithm \ref{alg:GRPLCA}
to improve the computational efficiency.
While for YaleB Face data set, the data within it is dense,
thus partial Fourier matrices is used.

\subsection{Compressed Subspace Visualization}
\label{subsec:CSV}

\begin{figure}[t]
\subfigure[]{
\begin{minipage}{\linewidth}
  \centering
  \includegraphics[width=0.9\textwidth]{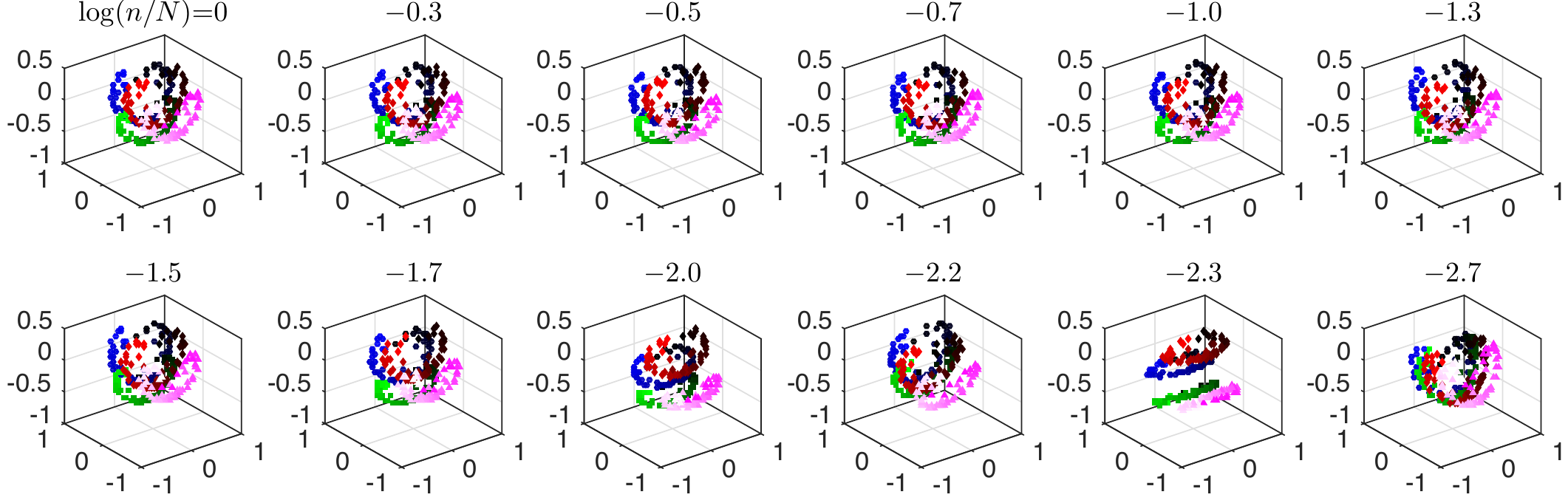}
  \centering
\end{minipage}
}
\subfigure[]{
\begin{minipage}{\linewidth}
  \centering
  \includegraphics[width=0.9\textwidth]{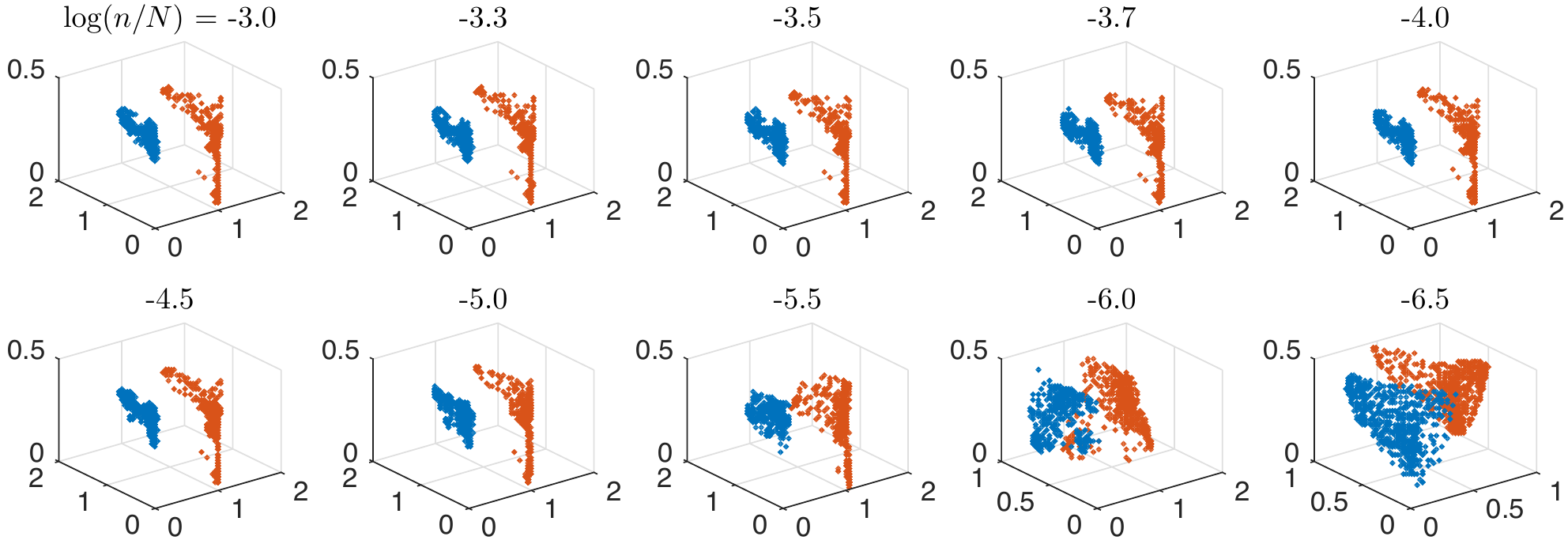}
  \centering
\end{minipage}
}
\caption{Compressed angle-based subspace visualization results under different reduced dimensions $n$ on (a) YaleB Face data set and (b) Webb Spam Corpus 2006 data set. 
Different colors represent various categories.
}
\label{fig:webb-CSV}
\end{figure}

Data visualization is an effective way to help people understand high-dimensional data.
Different from 3-dimensional space,
data in high-dimensional space can not be depicted directly,
which brings difficulty in the direct understanding.
Data visualization tries to represent data in a visual context to
help people quickly capture the main relationship between data points,
and acquire enough information.

Subspace visualization is designed for UoS model.
One method proposed in \citet{Shen2018Subspace} is based on canonical angles,
and thus called angle-based subspace visualization.
As its name suggests, the visualization result is determined by canonical angles,
which can be approximately preserved after JL random projection.
Thus the compressed angle-based subspace visualization algorithm designed as per CSL framework
is likely to yield a very similar result with what original algorithm yields.
We verify this on two real-world data sets,
i.e., YaleB Face data set and Webb Spam Corpus 2006 data set,
with partial Fourier matrices and Bernoulli matrices, respectively.
The experimental results are shown in Figure~\ref{fig:webb-CSV}(a) and (b).
The implementation details and performance analysis are postponed to Appendix \ref{app:CSV}.

According to Figure~\ref{fig:webb-CSV}(a),
the visualization result changes very slightly with the reduced dimension
as long as the compression ratio exceeds $1$E$-1.5$.
This indicates that proper compression does not bring great distortion to the visualization result.
Similar phenomena are observed in Figure~\ref{fig:webb-CSV}(b).
Even when compression ratio is as low as $1$E$-4.5$,
the application of JL random projection is still safe.

\subsection{Compressed Active Subspace Detection}
\label{subsec:CSD}

\begin{figure}[t]
\subfigure[]{
\begin{minipage}{0.48\linewidth}
  \centering
  \includegraphics[width=0.95\textwidth]{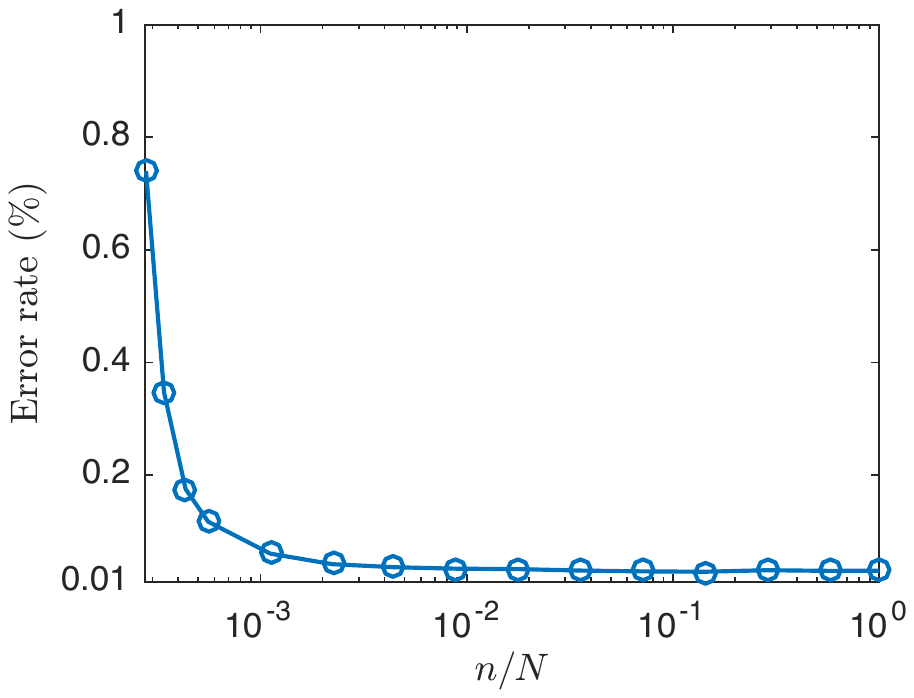}
  \centering
\end{minipage}
}
\hspace{-.15cm}
\subfigure[]{
\begin{minipage}{.48\linewidth}
  \centering
  \includegraphics[width=0.95\textwidth]{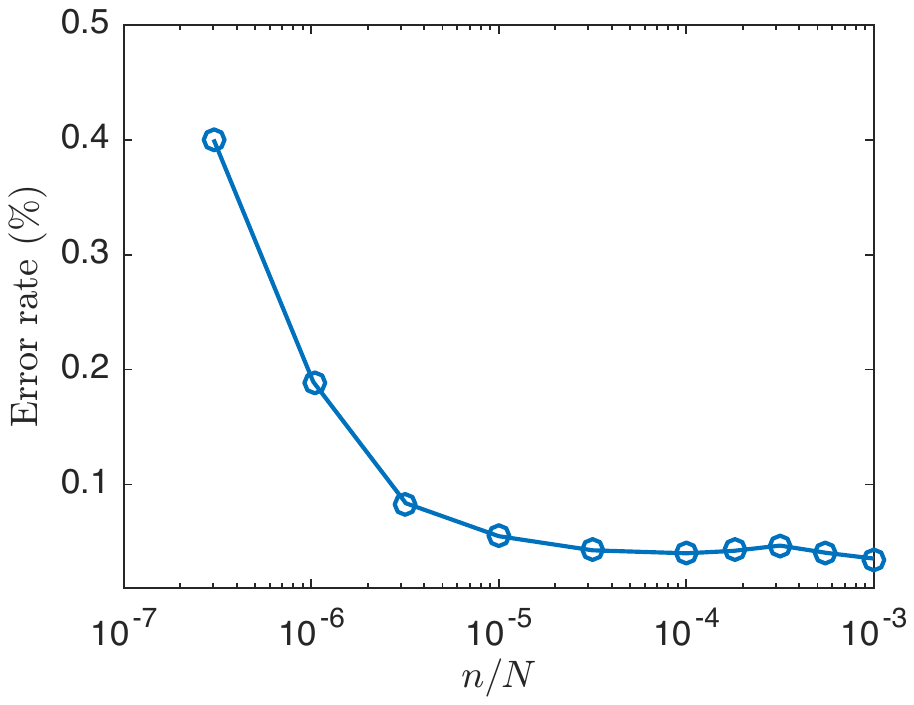}
  \centering
\end{minipage}
}
\caption{The error rate of compressed ML method under different reduced dimensions $n$ on
(a) YaleB Face data set and (b) Webb Spam Corpus 2006 data set.
The results have been averaged over $50$ random trials for (a) and $5$ random trials for (b).}
\label{fig:webb-CSD}
\end{figure}

Active subspace detection refers to identifying which subspace the observed data belongs to
with all candidate subspaces known.
This problem is often encountered in radar target detection,
user detection in wireless network, and image-based verification of employees \citep{Lodhi2017Detection}.
A typical algorithm is the Maximum Likelihood (ML) for active subspace detection,
the performance of which, measured by detection error rate,
is proven to be closely related to canonical angles \citep{Lodhi2017Detection}.
Thus it is likely that the compressed ML method for active subspace detection,
which is designed as per the CSL framework,
can keep almost the same error rate as the ML algorithm without compression.
The implementation details and the performance analysis of the compressed version
are deferred to
Appendix \ref{app:CSD}.

We apply compressed ML for active subspace detection on two real-world data sets,
where YaleB Face data set is compressed by partial Fourier matrices
and Webb Spam Corpus 2006 data set is compressed by Bernoulli matrices.
The experimental results are shown in Figure~\ref{fig:webb-CSD}(a) and (b).
It is observed that the compressed ML for active subspace detection allows for compression ratio as low as $n/N=1$E$-2$ and $1$E$-4$ on these two data sets, respectively.
In other words,
the detection accuracy is kept at a very low level
as long as the compression ratio is higher than this number.
This verifies the performance preserving property of JL random projection.


\subsection{Compressed Subspace Clustering}
\label{subsec:CSC}

\begin{figure}[t]
\subfigure[]{
\begin{minipage}{0.48\linewidth}
  \centering
  \includegraphics[width=0.95\textwidth]{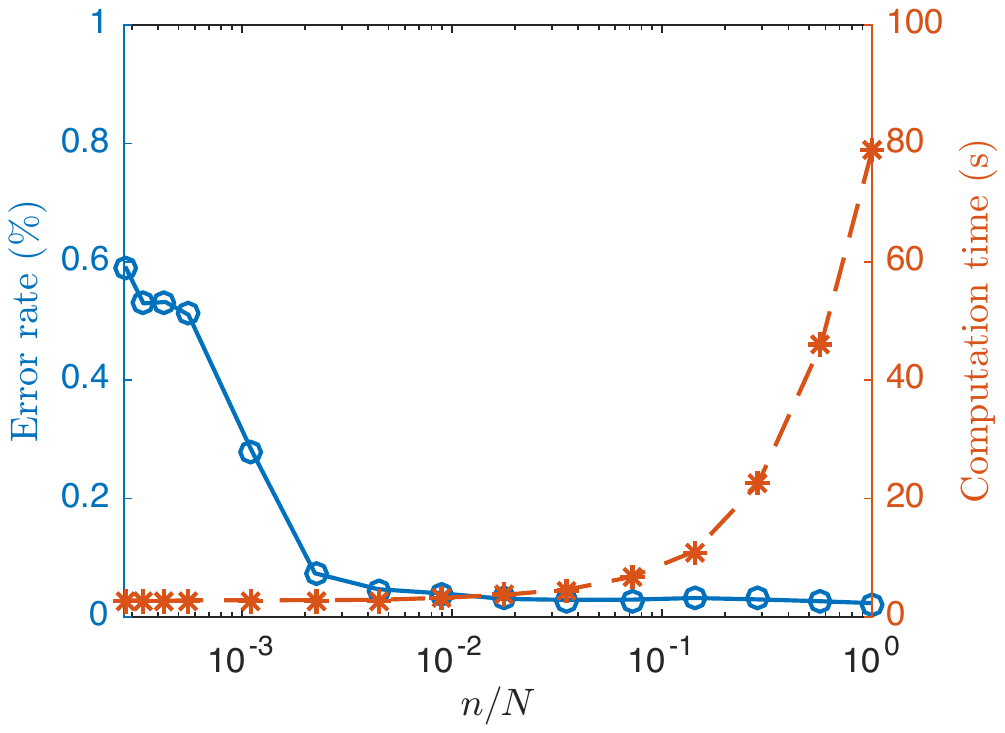}
  \centering
\end{minipage}
}
\subfigure[]{
\begin{minipage}{.47\linewidth}
  \centering
  \includegraphics[width=\textwidth]{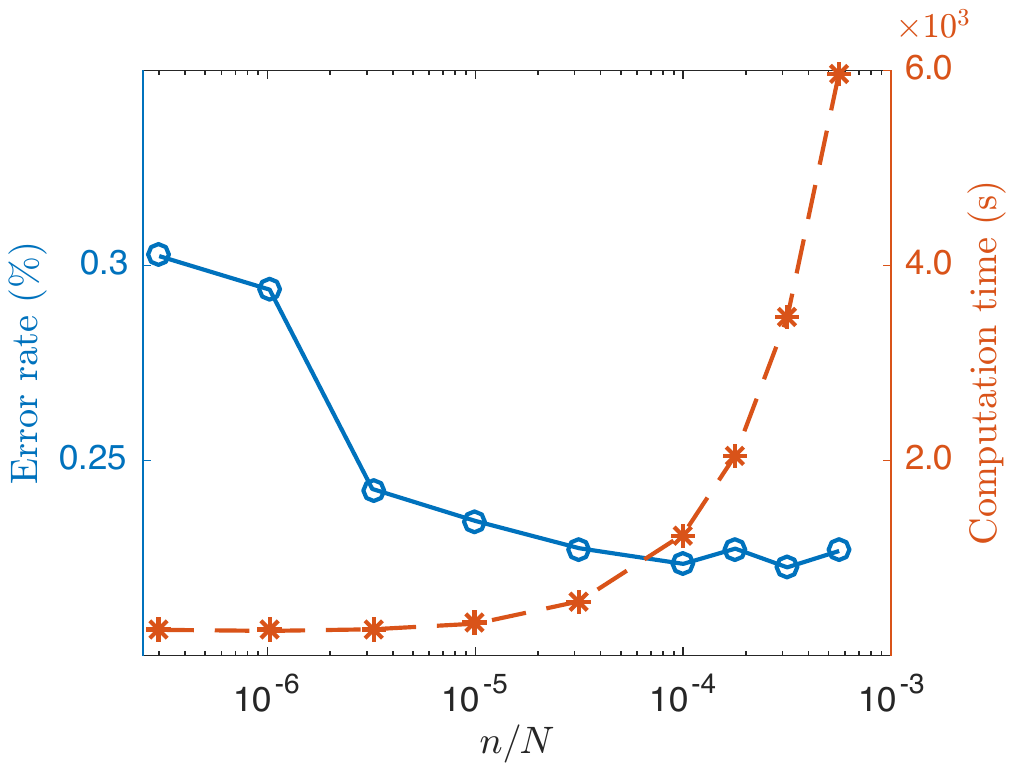}
  \centering
\end{minipage}
  }
   \caption{The error rate and running time of compressed Sparse Subspace Clustering under different reduced dimensions $n$ on (a) YaleB Face data set and (b) Webb Spam Corpus 2006 data set.
The results have been averaged over $10$ random trials for (a) and $5$ random trials for (b).}
   \label{fig:webb-CSC}
\end{figure}

Subspace clustering seeks to find clusters in different subspaces within a data set.
Many algorithms are proposed to solve this problem,
and Sparse Subspace Clustering (SSC) is one of the most popular methods for its high accuracy.
However, this method also undergoes
the high computational complexity when data dimension is high.

To handle this problem,
we design the compressed SSC according to CSL framework,
and test its performance on two real-world data sets.
Again YaleB Face data set and Webb Spam Corpus 2006 data set
are compressed by partial Fourier matrices and Bernoulli matrices, respectively.
The experimental results are shown in Figure~\ref{fig:webb-CSC}(a) and (b).
It is observed that with the decrease of compression ratio,
the running time drops greatly,
while the low error rate is kept.
This demonstrates the power of CSL framework in
reducing the computational complexity without deteriorating the performance
when the original subspace learning algorithms suffers from the curse of dimensionality.

We conclude this section with a final remark that
the subspace clustering algorithm studied here has been widely investigated.
\citet{Mao2014Compressed} and \citet{Wang2019Adeterministic}
adopt
random projections
as dimensionality reduction methods,
and apply subspace clustering on the compressed data set to reduce computational burden.
\citet{Heckel2015dimensionality} analyzes the performance of
SSC, TSC, and SSC-OMP when applied to compressed data set,
under the assumption that the random matrices have RIP for points on union of subspaces.
\citet{Meng2018Ageneral} proposes a general framework capable of
analyzing the performance of various compressed subspace clustering algorithms,
as long as the random matrix has affinity preserving property.
In this work, Theorem~\ref{theorem-rip-subspace} indicates that
JL random projection approximately preserves subspace affinity.
By using this theorem in conjunction with the analysis in \citet{Meng2018Ageneral},
we can provide theoretical performance guarantee for the compressed sparse subspace clustering
algorithm presented in this section.

\section{Conclusion}
\label{sec:conclusion}

In this work, we unveiled the subspace structure preserving property of JL random projection.
Here the subspace structure is described in terms of
canonical angles,
which have the best characterization of relative subspace positions.
Specifically,
it is proved that for a finite collection of $L$ subspaces,
with probability $1-\delta$,
each canonical angle between any two subspaces
is preserved up to $(1\pm\varepsilon)$ when
the dimension is reduced to $n=O(\varepsilon^{-2}\max\{d,\log L,\log(1/\delta)\})$.
This main theoretical result is called CAP property.
Based on this result, we established a general subspace RIP,
which describes the ability to preserve subspace distance of JL random projection.
We say it is \emph{general} because it works for almost arbitrary notion of subspace distance.

Inspired by the above theoretical discovery,
we proposed the CSL framework.
This framework enables to process data lying on UoS
in a space with dimension much lower than the ambient dimension of the data.
This was achieved by safely mapping the data to a space with dimension in the same order of subspace dimensions, which is generally much lower than the ambient dimension,
via JL random projection.
We empirically verified that on subspace clustering algorithms which suffer from the curse of dimensionality,
CSL framework can successfully reduce the time consumption
without deteriorating the performance.
The theoretical foundation of this framework is given by CAP property.
Based on this theory,
we proved the performance preserving property of CSL framework on two other
subspace learning tasks,
namely subspace visualization and active subspace detection.
Considering that our theory is not constrained to specific algorithms,
the extension to other subspace learning tasks is possible.




\appendix

\section{Canonical Angles and Subspace Distances}
\label{app:subspace-distance}

We have emphasized that canonical angles better characterize the relative subspace positions
than projection Frobenius-norm distance,
and any other subspace distances.
We take the following example to support this.
Considering the three two-dimensional subspaces in $\mathbb{R}^4$,
\begin{align}
\mathcal{S}_1:=&{\rm span}\left({\bf e}_1,{\bf e}_2\right),\nonumber\\
\mathcal{S}_2:=&{\rm span}\left({\bf e}_1,{\bf e}_3\right),\nonumber\\
\mathcal{S}_3:=&{\rm span}\left({\bf e}_1+{\bf e}_3,{\bf e}_2+{\bf e}_4\right),\nonumber
\end{align}
where ${\bf e}_i$ for all $i=1,2,3,4$ is the standard basis.
It is obvious that the relative position between $\mathcal{S}_1$ and $\mathcal{S}_2$ differs from that between $\mathcal{S}_1$ and $\mathcal{S}_3$,
for that $\mathcal{S}_1$ and $\mathcal{S}_2$ intersect, while $\mathcal{S}_1$ and $\mathcal{S}_3$ do not.
Such difference can be visualized by
projecting the unit circle in $\mathcal{S}_2$ and $\mathcal{S}_3$ onto $\mathcal{S}_1$,
as shown in Figure~\ref{fig:three_subspaces}.
However, such difference cannot be reflected by projection Frobenius-norm distance $d^{\kappa}$,
which is defined as below.
$$
d^{\kappa}(\mathcal X_1,\mathcal X_2):=\left\|{\bm U}_1{\bm U}_1^{\rm T}-{\bm U}_1{\bm U}_2^{\rm T}\right\|_F^2,
$$
where ${\bm U}_i$ denotes an orthonormal basis for subspace $\mathcal{X}_i$, for $i=1,2$.
According to this definition,
we can verify that the projection Frobenius-norm distance equals to $1$
whether when we measure $\mathcal{S}_1$ and $\mathcal{S}_2$ or $\mathcal{S}_1$ and $\mathcal{S}_3$.
The difference in subspace relative position is not unveiled.
Other notions of subspace distance have similar problems.

\begin{figure}[t]
\vspace{-0.1cm}
 \centering
  \centerline{\includegraphics[width=0.4\textwidth]{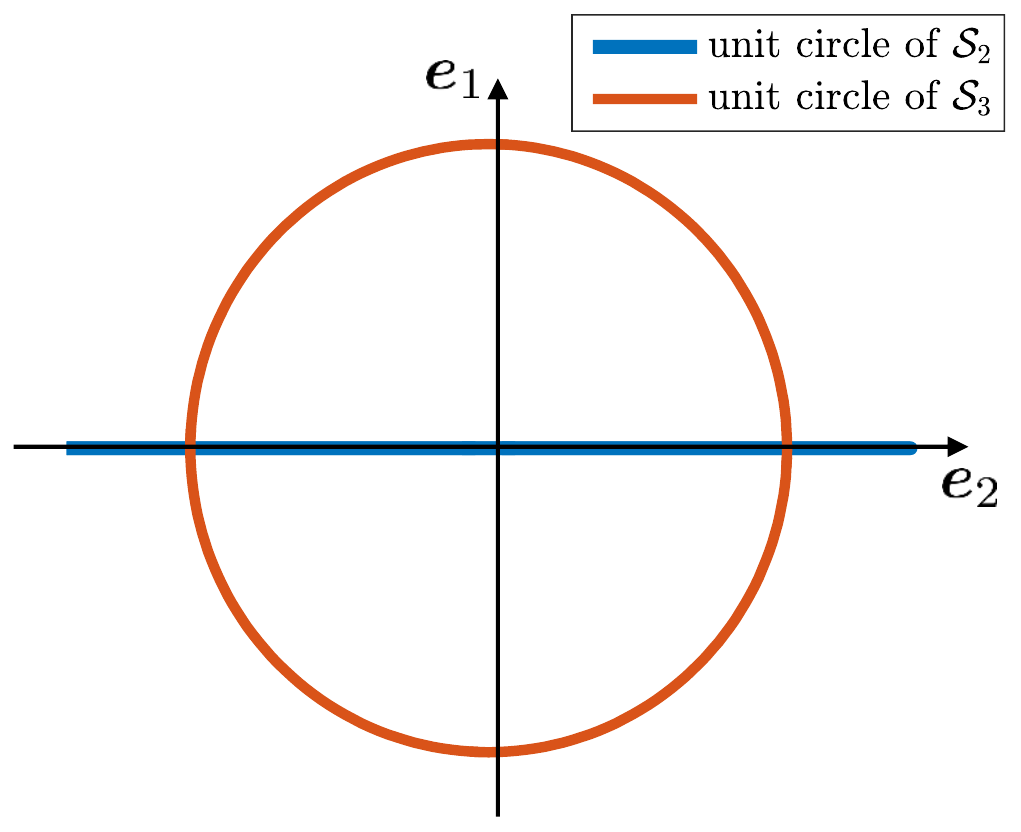}}
\caption{
The projection of unit circles in $\mathcal{S}_2$ and $\mathcal{S}_3$ onto subspace $\mathcal{S}_1$.
The $x$- and $y$-axis represents the projection onto two base vectors of $\mathcal{S}_1$, i.e., ${\bm e}_1$, ${\bm e}_{2}$, respectively.}
\label{fig:three_subspaces}
\end{figure}

The advantage of canonical angles in the ability to describe relative subspace positions
is also shown by the fact that
any notion of rotation-invariant subspace distance is a function of canonical angles.
Rotation-invariant is a natural requirement for distances and is widely satisfied.
We present a list of well-known notions of subspace distance and their dependence on canonical angles in Table~\ref{table-subspace-distance},
where $d$ denotes the dimension of subspaces ${\set X}_1$, ${\set X}_2$,
and $\theta_1\le \cdots\le\theta_d$ denote the canonical angles.
The extension of some notions of distance are listed in Table~\ref{table-subspace-distance-diffd} for subspaces with different dimensions $d_1< d_2$ \citep{YeLim}.

Due to the powerful characterization of canonical angles,
the analyses on the distortion of the above subspace distances
are unified by CAP property.
It is provable that all notions of distance in Table~\ref{table-subspace-distance} and \ref{table-subspace-distance-diffd} are Lipschitz continuous and satisfy \eqref{eq-condition-rip-subspace}.
By Theorem~\ref{theorem-rip-subspace}, all of these distances are approximately preserved by JL random projection.

\begin{table}[t]
\caption{Some definitions of distance between subspaces\label{table-subspace-distance}}
\begin{center}
\begin{tabular}{p{12em}p{22em}}
\hline
\tabincell{c}{projection F-norm} &  {$d_0^{\kappa}\left({\set X}_1, {\set X}_2\right): = \left(\sum_{k=1}^{d} \sin^2\theta_k\right)^{1/2}$}\\
{Fubini-Study} & {$d_0^{\phi}\left({\set X}_1, {\set X}_2\right):=\cos^{-1}\left(\prod_{k=1}^d\cos\theta_k\right)$}\\
{Grassmann} & {$d_0^{\rm G}\left({\set X}_1, {\set X}_2\right): = \left(\sum_{k=1}^{d} \theta_k^2\right)^{1/2}$}\\
{Binet-Cauchy} & {$d_0^{\beta}\left({\set X}_1, {\set X}_2\right):=\left(1-\prod_{k=1}^{d}\cos^2\theta_k\right)^{1/2}$}\\
{Procrustes} & {$d_0^{\rho}\left({\set X}_1, {\set X}_2\right): = 2\left(\sum_{k=1}^{d} \sin^2\left(\theta_k/2\right) \right)^{1/2}$} \\
{Asimov} & {$d_0^{\alpha}\left({\set X}_1, {\set X}_2\right):=\theta_d$}\\
{Spectral} & {$d_0^{\sigma}\left({\set X}_1, {\set X}_2\right):=2\sin\left(\theta_d/2\right)$}\\
{Projection} & {$d_0^{\pi}\left({\set X}_1, {\set X}_2\right):=\sin\theta_d$}\\
\hline
\end{tabular}
\end{center}
\label{tab:distance-sameDim}
\end{table}%

\begin{table}[t]
\caption{Some generalized definitions of distance on subspace with different dimensions \label{table-subspace-distance-diffd}}
\begin{center}
\begin{tabular}{p{12em}p{25em}}
\hline
 \tabincell{c}{projection F-norm}
 &$d^{\kappa}\left({\set X}_1, {\set X}_2\right): = \left((d_2 - d_1)/2 + \sum_{k=1}^{d_1} \sin^2\theta_k\right)^{1/2}$\\
Grassmann & $d^{\rm G}\left({\set X}_1, {\set X}_2\right): = \left( \left(d_2 - d_1\right)\pi^2/4 + \sum_{k=1}^{d_1} \theta_k^2\right)^{1/2}$\\
Procrustes & $d^{\rho}\left({\set X}_1, {\set X}_2\right): = \left(d_2 - d_1 + 2\sum_{k=1}^{d_1} \sin^2\left(\theta_k/2\right) \right)^{1/2}$\\
\hline
\end{tabular}
\end{center}
\label{default}
\end{table}%

It is sometimes useful to have different notions of subspace distance
for specific applications.
The projection Frobenius-norm distance is also called chordal distance.
It is widely used in the subspace quantization problem appearing
in the precoding of multiple-antenna wireless systems \citep{Limited2005Love}.
The Fubini-Study distance is also investigated in this problem \citep{Limitedfeedback2005Love}.
The Grassmann distance is the geodesic distance when viewing subspaces as points on the Grassmannian manifold,
i.e., it can be locally interpreted as the shortest length of all curves between the two measured subspaces on manifold.
It can be used to assess the convergence of the Riemann-Newton method \citep{Riemannian2004Absil}.
Both Binet-Cauchy and the projection Frobenius-norm distance are used in
some subspace learning algorithms for the positive definiteness of the kernel they are induced from \citep{Grassmann2008Hamm}.
Definitions determined by single canonical angle, such as the Asimov distance, the spectral distance,
and the projection distance,
share the advantage of being more robust to noise.


\section{Supplement for the Proof of Theorem~\ref{theorem-angle-multi}}
\label{app:proof-cosine-preserving}

To establish $n=O(\varepsilon^{-2})$ for $\theta\in[\pi/4,\pi/2]$,
we will first prove
\begin{align}\label{eq-cosine-preserving}
\left|\cos^2\psi_k-\cos^2\theta_k\right|\le C_1\varepsilon\cos\theta_k+C_2\varepsilon^2,
\end{align}
where $C_1$, $C_2$ are universal constants,
and then derive the relationship between $n$ and $\varepsilon$.

To prove \eqref{eq-cosine-preserving},
according to Lemma~\ref{lemma-step1},
it suffices to prove
\begin{align*}
\mathbb{P}\left(\cos^2\varphi_{1,k,k}- \cos^2\theta_k\le C_1\varepsilon\cos\theta_k+C_2\varepsilon^2 \right)&\ge 1-{\rm e}^{-c_2\varepsilon^2n},\\
\mathbb{P}\left( \cos^2\theta_k-\cos^2\varphi_{k,d_1,1}\le C_1\varepsilon\cos\theta_k+C_2\varepsilon^2\right)&\ge 1-{\rm e}^{-c_2\varepsilon^2n}.
\end{align*}
Proofs of these two inequalities are similar with
the argument in Section~\ref{subsubsec:proof-eq-step21}
and \ref{subsubsec:proof-eq-step22}.
Readers only need to replace Lemma~\ref{lemma-line} with the following lemma implied in \citet{xingyu}.
\begin{lemma}\label{lemma-line-2}
Suppose ${\bm \Phi}$ is a random matrix with JL property.
Suppose ${\bm x}_1$, $\set X_2$ are respectively a vector and a $d$-dimensional
subspace in $\mathbb R^N$.
Denote ${\bm y}_1:={\bm \Phi}{\bm x}_1/\left\|{\bm \Phi}{\bm x}_1\right\|$ and $\set Y_2$ as
the projection of $\set X_2$ with ${\bm \Phi}$.
Then there exist positive universal constants $c_1$, $c_2$,
for any $\varepsilon\in(0,1/2)$ and any $n>c_1\varepsilon^{-2}d$, we have
$$
    \left|\left\|{\bm P}_{\mathcal{Y}_2}({\bm y}_1)\right\|^2-\left\|{\bm P}_{\mathcal{X}_2}({\bm x}_1)\right\|^2\right|\le C_1\varepsilon\left\|{\bm P}_{\mathcal{X}_2}({\bm x}_1)\right\|+C_2\varepsilon^2
$$
with probability at least $1-{\rm e}^{-c_2\varepsilon^2n}$.
\end{lemma}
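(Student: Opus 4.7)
The plan is to upgrade the orthogonal-complement statement Lemma~\ref{lemma-line} to the projection version by controlling the cross-term more finely. Since the statement is homogeneous, I would assume $\|{\bm x}_1\|=1$, set $p:=\|{\bm P}_{\mathcal{X}_2}({\bm x}_1)\|$, and decompose ${\bm x}_1={\bm x}_p+{\bm x}_p^{\perp}$ with ${\bm x}_p={\bm P}_{\mathcal{X}_2}({\bm x}_1)$, so $\|{\bm x}_p\|=p$, $\|{\bm x}_p^{\perp}\|=\sqrt{1-p^2}$, and $\langle{\bm x}_p,{\bm x}_p^{\perp}\rangle=0$. Let ${\bm U}\in\mathbb{R}^{N\times d}$ be an orthonormal basis of $\mathcal{X}_2$ and write ${\bm a}:={\bm U}^{\rm T}{\bm x}_p\in\mathbb{R}^{d}$, so $\|{\bm a}\|=p$. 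Provided ${\bm \Phi}{\bm U}$ has full column rank (which occurs with high probability by Lemma~\ref{lemma:orthonormal-preserving}), one has $\|{\bm P}_{\mathcal{Y}_2}({\bm \Phi}{\bm x}_1)\|^{2}=({\bm \Phi}{\bm x}_1)^{\rm T}{\bm \Phi}{\bm U}({\bm U}^{\rm T}{\bm \Phi}^{\rm T}{\bm \Phi}{\bm U})^{-1}{\bm U}^{\rm T}{\bm \Phi}^{\rm T}({\bm \Phi}{\bm x}_1)$. Setting ${\bm M}:={\bm U}^{\rm T}{\bm \Phi}^{\rm T}{\bm \Phi}{\bm U}$ and ${\bm w}:={\bm U}^{\rm T}{\bm \Phi}^{\rm T}{\bm \Phi}{\bm x}_p^{\perp}$, the vector ${\bm U}^{\rm T}{\bm \Phi}^{\rm T}({\bm \Phi}{\bm x}_1)$ splits as ${\bm M}{\bm a}+{\bm w}$, and the above expression simplifies to ${\bm a}^{\rm T}{\bm M}{\bm a}+2{\bm a}^{\rm T}{\bm w}+{\bm w}^{\rm T}{\bm M}^{-1}{\bm w}$.

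Next I would invoke three high-probability estimates, each valid for $n>c\varepsilon^{-2}d$ with appropriate universal constants: (i) Lemma~\ref{lemma:orthonormal-preserving} yields $\|{\bm M}-{\bm I}\|\le O(\varepsilon)$, so ${\bm a}^{\rm T}{\bm M}{\bm a}=p^{2}\pm O(\varepsilon p^{2})$ and $\|{\bm M}^{-1}\|\le 1+O(\varepsilon)$; (ii) the JL property applied to the fixed vector ${\bm x}_1$ gives $\|{\bm \Phi}{\bm x}_1\|^{2}=1\pm O(\varepsilon)$; (iii) the key new estimate $\|{\bm w}\|\le O(\varepsilon)\|{\bm x}_p^{\perp}\|\le O(\varepsilon)\sqrt{1-p^{2}}$. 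Estimate (iii) is the crux: it is obtained by writing $\|{\bm w}\|=\sup_{\|{\bm v}\|=1,\,{\bm v}\in\mathcal{X}_2}|\langle{\bm \Phi}{\bm v},{\bm \Phi}{\bm x}_p^{\perp}\rangle|$, discretizing the unit sphere of $\mathcal{X}_2$ by a $\tfrac14$-net $\mathcal{N}$ of cardinality at most $9^{d}$, and applying the standard polarization consequence of the JL property together with a union bound over $\mathcal{N}$ to obtain that with probability at least $1-9^{d}{\rm e}^{-c\varepsilon^{2}n}$, every ${\bm v}\in\mathcal{N}$ satisfies $|\langle{\bm \Phi}{\bm v},{\bm \Phi}{\bm x}_p^{\perp}\rangle-\langle{\bm v},{\bm x}_p^{\perp}\rangle|\le O(\varepsilon)\|{\bm x}_p^{\perp}\|$. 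Since $\langle{\bm v},{\bm x}_p^{\perp}\rangle=0$ for every ${\bm v}\in\mathcal{X}_2$, the net bound promotes to the supremum on the whole sphere as in Appendix~\ref{app:covering-argument}, and the $9^{d}$ factor is absorbed into the exponent by the assumption $n>c_1\varepsilon^{-2}d$.

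Combining the three estimates gives $|\|{\bm P}_{\mathcal{Y}_2}({\bm \Phi}{\bm x}_1)\|^{2}-p^{2}|\le O(\varepsilon p^{2})+2\|{\bm a}\|\|{\bm w}\|+\|{\bm M}^{-1}\|\|{\bm w}\|^{2}\le C_1'\varepsilon p+C_2'\varepsilon^{2}$. Dividing by $\|{\bm \Phi}{\bm x}_1\|^{2}=1\pm O(\varepsilon)$ and expanding the geometric correction yields the stated bound on $\|{\bm P}_{\mathcal{Y}_2}({\bm y}_1)\|^{2}$ with (possibly redefined) universal constants $C_1,C_2$. The main obstacle is step (iii): using Lemma~\ref{lemma-line} directly would only produce an error of order $\varepsilon(1-p^{2})$, which is weaker than $\varepsilon p+\varepsilon^{2}$ when $p$ is small; the refinement to $\varepsilon^{2}$ in that regime genuinely requires the sharper pointwise inner-product estimate on a net of $\mathcal{X}_2$, whereas Lemma~\ref{lemma-line} only exploits a single aggregate energy bound. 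Handling the $9^{d}$ union-bound factor by fixing the constant $c_1$ large enough is routine but must be carried out explicitly to preserve the clean $1-{\rm e}^{-c_2\varepsilon^{2}n}$ failure probability.
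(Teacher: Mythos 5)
Your proposal is correct, but it is a genuinely different route from the paper's: the paper does not prove this lemma in-house at all, its ``proof'' being a one-line deferral to \citet{xingyu}, Lemma 2 (the same source behind Lemma~\ref{lemma-line}), whereas you construct a self-contained argument from the explicit projector formula ${\bm P}_{\mathcal{Y}_2}={\bm \Phi}{\bm U}({\bm U}^{\rm T}{\bm \Phi}^{\rm T}{\bm \Phi}{\bm U})^{-1}{\bm U}^{\rm T}{\bm \Phi}^{\rm T}$, the Gram-matrix control of Lemma~\ref{lemma:orthonormal-preserving}, and a net-plus-polarization bound on the cross term ${\bm w}={\bm U}^{\rm T}{\bm \Phi}^{\rm T}{\bm \Phi}{\bm x}_p^{\perp}$. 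The decomposition ${\bm a}^{\rm T}{\bm M}{\bm a}+2{\bm a}^{\rm T}{\bm w}+{\bm w}^{\rm T}{\bm M}^{-1}{\bm w}$ is algebraically right, the three high-probability estimates are each valid for $n>c\varepsilon^{-2}d$, and your diagnosis of why a direct reduction from Lemma~\ref{lemma-line} fails is accurate: passing through the complement only yields an error $\varepsilon(1-p^2)$, which is weaker than $C_1\varepsilon p+C_2\varepsilon^2$ when $p=\|{\bm P}_{\mathcal{X}_2}({\bm x}_1)\|$ is small, and the $O(\varepsilon^2)$ behaviour in that regime is exactly what the inner-product estimate $\|{\bm w}\|\le O(\varepsilon)\|{\bm x}_p^{\perp}\|$ delivers. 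What your route buys is transparency (the reader sees where each of the two error terms comes from, inside this paper's own toolkit); what the paper's citation buys is brevity and consistency with how Lemma~\ref{lemma-line} is also imported. Two small points to tighten: the statement is not literally scale-invariant (since ${\bm y}_1$ is normalized while ${\bm x}_1$ need not be), so the assumption $\|{\bm x}_1\|=1$ should be declared as the intended reading rather than deduced from homogeneity --- it is indeed how the lemma is used in Appendix~\ref{app:proof-cosine-preserving}; and Lemma~\ref{lemma:covering-approximation} is stated for quadratic forms, so promoting your linear functional ${\bm v}\mapsto\langle{\bm \Phi}{\bm v},{\bm \Phi}{\bm x}_p^{\perp}\rangle$ from the $\tfrac14$-net to the sphere needs the (equally standard) linear version with factor $\tfrac43$, plus the routine absorption of the $9^d$ union-bound factor and of the finitely many failure events into a single exponent, in the same way the paper adjusts constants at the end of Section~\ref{subsubsec:proof-eq-step21}.
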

\begin{proof}
The proof follows from \citet{xingyu}, Lemma 2.
\end{proof}

According to \eqref{eq-cosine-preserving},
when $\cos\theta_k\le\varepsilon$, we could immediately get
\begin{align}\label{eq-temp-25}
\left|\cos\psi_k-\cos\theta_k\right|\le C_0 \varepsilon
\end{align}
with probability at least $1-{\rm e}^{-c_2\varepsilon^2n}$,
where $C_0$ is a positive universal constant.
When $\cos\theta_k>\varepsilon$, we have
\begin{align}\label{eq-temp-26}
\left|\cos\psi_k-\cos\theta_k\right|= \frac{\left|\cos^2\psi_k-\cos^2\theta_k\right|}{\cos\psi_k+\cos\theta_k}\le C_1\varepsilon+C_2\varepsilon.
\end{align}
From \eqref{eq-temp-25} and \eqref{eq-temp-26}, recalling that $\theta\in[\pi/4,\pi/2]$,
we could prove
$$
\psi_k-\theta_k\le C \varepsilon\theta_k,
$$
where $C$ is a positive universal constant and thus $n=O(\varepsilon^{-2})$.


\section{Supplement for Section~\ref{sec:app-compression}}
\label{sec:theory-app}

In this section,
we will introduce implementation details
of two compressed algorithms,
namely compressed angle-based subspace visualization and
compressed ML for active subspace detection,
and theoretically analyze their performance distortion.
Throughout this section, we denote the number of samples as $M$.

\subsection{Compressed Subspace Visualization}
\label{app:CSV}

We first review the angle-based subspace visualization algorithm proposed in \citet{Shen2018Subspace}.
It takes data points ${\bm x}_1,\cdots,{\bm x}_M$ as input,
which lie on $L$ subspaces $\mathcal{X}_1,\cdots,\mathcal{X}_L$.
The labels of data and the bases of $L$ subspaces are assumed to be known.
The algorithm first constructs a dissimilarity matrix ${\bm D}$,
and then embeds the data into a $2$- or $3$-dimensional space via MDS.
The dissimilarity matrix ${\bm D}:=(D_{i,j})_{i,j}$ is defined as below.
\begin{equation}\label{def-D-ij}
D_{\rm SV}({\bm x}_i,{\bm x}_j) = \left\{ \begin{array}{ll}
\sin^2\theta^{i,j}, & {\bm x}_i, {\bm x}_j {\rm ~lie~in~the~same~subspace};\\
\left(v\sin\theta^{i,j} +u\min_k(\sin\tilde{\theta}^{i,k} +\sin\tilde{\theta}^{j,k}) \right)^2, & {\rm otherwise};
\end{array}
\right.
\end{equation}
where $\theta^{i,j}$ denotes the canonical angle between ${\bm x}_i$ and ${\bm x}_j$,
and $\tilde{\theta}^{i,k}$ denotes the canonical angle between ${\bm x}_i$ and $\mathcal{X}_k$.
$u$,$v$ denote two algorithmic parameters, which balance the term $\sin\theta^{i,j}$ and $\min_k(\sin\tilde{\theta}^{i,k} +\sin\tilde{\theta}^{j,k})$.
The second step MDS is completed by applying eigenvalue decomposition on double-centered distance matrix
\begin{align}\label{def-B}
{\bm B}:=-\frac12{\bm H}{\bm D}{\bm H},
\end{align}
and obtain eigenvectors ${\bm v}_1,{\bm v}_2$ (or ${\bm v}_1,{\bm v}_2, {\bm v}_3$) corresponding to the largest two (or three) eigenvalues $\lambda_1\ge\lambda_2$ (or $\lambda_1\ge\lambda_2\ge\lambda_3$),
where ${\bm H}:={\bm I}-1/M{\bm 1}{\bm 1}^{\rm T}$ is the centering matrix and ${\bm 1}\in\mathbb{R}^M$ denotes all-ones vector.
The output coordinate matrix is denoted by ${\bm C}=[\lambda_1^{1/2}{\bm v}_1,\lambda_2^{1/2}{\bm v}_2]$ (or ${\bm C}=[\lambda_1^{1/2}{\bm v}_1,\lambda_2^{1/2}{\bm v}_2, \lambda_3^{1/2}{\bm v}_3]$).

According to Algorithm \ref{alg:GRPLCA},
we design compressed angle-based subspace visualization algorithm as below.
\begin{definition}\label{def-CSV}
There are three steps to implement compressed angle-based subspace visualization algorithm.
1. Projecting data ${\bm x}_i\in\mathbb{R}^N$ with a partial Fourier matrix and getting the projected data ${\bm y}_i$.
2. Calculating dissimilarity matrix $\hat{\bm D}$ by using ${\bm y}_i$.
3. Applying MDS and returning output coordinate matrix $\hat{\bm C}$.
\end{definition}

The following theorem states the error introduced by replacing ${{\bm D}}$ with $\hat{{\bm D}}$.
\begin{theorem}\label{theorem-CSV}
Suppose there are $M$ data points in $\mathbb{R}^N$, lying on $L$ subspaces with dimensions no more than $d$.
The dissimilarity matrix constructed according to \eqref{def-D-ij} is denoted as ${\bm D}\in\mathbb{R}^{M\times M}$.
The output coordinate matrix is denoted as ${\bm C}\in\mathbb{R}^{M\times 2}$.
After applying random projection with partial Fourier matrices,
we get $M$ compressed data points in $\mathbb{R}^n$.
The corresponding dissimilarity matrix and coordinate matrix is denoted as $\hat{{\bm D}}$ and $\hat{{\bm C}}$, respectively.
Assume the eigenvalue of double-centered matrix ${\bm B}$ defined in \eqref{def-B} satisfies $\lambda_1>\lambda_2>\lambda_3\ge\cdots\ge\lambda_M$.%
\footnote{The condition that $\lambda_1\neq\lambda_2\neq\lambda_3$ is necessary.
The reason is that if there exist repeated eigenvalues,
the visualization result is not unique.
In this case,
measuring the distortion caused by JL random projection becomes ill-defined.}
For any $\varepsilon\in(0,1)$, there exist two positive universal constants $c_1$, $c_2$,
such that for any $n\ge c_1\varepsilon^{-2}\max\{d,\log(M+L)\}$, with probability at least $1-{\rm e}^{-c_2\varepsilon^2n}$, we have
\begin{align}\label{eq-theorem-CSV}
\left\|\hat{{\bm C}}-{\bm C}\right\|_F^2\le2\left\|{\bm D}\right\|_F\varepsilon+\frac{4\left\|{\bm D}\right\|_F^2\lambda_1}{(\lambda_1-\lambda_2)^2}\varepsilon^2+\frac{4\left\|{\bm D}\right\|_F^2\lambda_2}{\min\{(\lambda_1-\lambda_2)^2,(\lambda_2-\lambda_3)^2\}}\varepsilon^2.
\end{align}
\end{theorem}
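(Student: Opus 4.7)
The plan is to propagate the canonical-angle preservation of Theorem~\ref{theorem-angle-multi} through the two stages of the algorithm: first to the entries of the dissimilarity matrix $\hat{\bm D}$, and then, via matrix perturbation theory, to the top eigenpairs of the double-centered matrix that form the coordinate matrix $\hat{\bm C}$.

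First, I would apply Theorem~\ref{theorem-angle-multi} simultaneously to the collection of $M+L$ subspaces consisting of the one-dimensional spans $\mathrm{span}({\bm x}_1),\ldots,\mathrm{span}({\bm x}_M)$ together with $\mathcal{X}_1,\ldots,\mathcal{X}_L$, each of dimension at most $d$. With $n = O(\varepsilon^{-2}\max\{d,\log(M+L)\})$, all involved $\theta^{i,j}$ and $\tilde\theta^{i,k}$ are preserved to a factor $(1\pm\varepsilon)$, hence so are their sines by Lemma~\ref{lemma-sin}. Looking at \eqref{def-D-ij}, each entry has the form $A_{ij}^2$ where $A_{ij}$ is a nonnegative linear combination of such sines (for the ``different subspaces'' case I would use that $g(k)\in(1\pm\varepsilon)f(k)$ entry-wise implies $\min_k g(k)\in(1\pm\varepsilon)\min_k f(k)$, to handle the $\min$). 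This gives $\hat A_{ij}\in(1\pm\varepsilon)A_{ij}$ and therefore $|\hat D_{ij}-D_{ij}|\le 3\varepsilon D_{ij}$ for $\varepsilon<1/2$, so $\|\hat{\bm D}-{\bm D}\|_F\le 3\varepsilon\|{\bm D}\|_F$. Since ${\bm B}=-\tfrac12{\bm H}{\bm D}{\bm H}$ with ${\bm H}$ an orthogonal projection, this yields $\|\hat{\bm B}-{\bm B}\|_F\le\tfrac32\varepsilon\|{\bm D}\|_F$.

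Second, I would convert this Frobenius perturbation of ${\bm B}$ into control on the top two eigenpairs. By Weyl's inequality, $|\hat\lambda_i-\lambda_i|\le\|\hat{\bm B}-{\bm B}\|_{\mathrm{op}}$ for $i=1,2$. By the Davis--Kahan $\sin\Theta$ theorem applied to the simple eigenvalues $\lambda_1,\lambda_2$ (whose isolation uses the footnoted assumption $\lambda_1>\lambda_2>\lambda_3$), the sign-aligned unit eigenvectors satisfy
\begin{equation*}
\|\hat{\bm v}_1-{\bm v}_1\|^2\le \frac{2\|\hat{\bm B}-{\bm B}\|_F^2}{(\lambda_1-\lambda_2)^2},\qquad
\|\hat{\bm v}_2-{\bm v}_2\|^2\le \frac{2\|\hat{\bm B}-{\bm B}\|_F^2}{\min\{(\lambda_1-\lambda_2)^2,(\lambda_2-\lambda_3)^2\}},
\end{equation*}
where the second gap is the minimum of the distances from $\lambda_2$ to its neighbours in the spectrum.

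Third, I would assemble the final bound from the identity
\begin{equation*}
\bigl\|\hat\lambda_i^{1/2}\hat{\bm v}_i-\lambda_i^{1/2}{\bm v}_i\bigr\|^2
=\bigl(\sqrt{\hat\lambda_i}-\sqrt{\lambda_i}\bigr)^2+\sqrt{\hat\lambda_i\lambda_i}\,\|\hat{\bm v}_i-{\bm v}_i\|^2,
\end{equation*}
combined with the elementary inequality $(\sqrt{a}-\sqrt{b})^2\le|a-b|$ for $a,b\ge 0$. Summing over $i=1,2$, the first terms contribute $\sum_i|\hat\lambda_i-\lambda_i|\le 2\|\hat{\bm B}-{\bm B}\|_{\mathrm{op}}\lesssim\|{\bm D}\|_F\varepsilon$, matching the leading $2\|{\bm D}\|_F\varepsilon$ term in \eqref{eq-theorem-CSV}; the second terms, using $\sqrt{\hat\lambda_i\lambda_i}\le\lambda_i+O(\varepsilon)$ from Weyl's inequality, contribute the two $\varepsilon^2$ terms with denominators $(\lambda_1-\lambda_2)^2$ and $\min\{(\lambda_1-\lambda_2)^2,(\lambda_2-\lambda_3)^2\}$. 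Absorbing the universal constants from Steps~1--2 into a rescaling of $\varepsilon$ (and adjusting $c_1,c_2$ accordingly) recovers \eqref{eq-theorem-CSV}.

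I expect the perturbation step to be the main obstacle. In particular, care is needed in (i)~choosing the sign of each $\hat{\bm v}_i$ so that the Davis--Kahan $\sin\Theta$ bound converts cleanly into a bound on $\|\hat{\bm v}_i-{\bm v}_i\|$; (ii)~ensuring that the gap appearing for $\hat{\bm v}_2$ is exactly the minimum of the two adjacent eigengaps (which requires $\varepsilon$ small enough that $\hat\lambda_2$ remains separated from both $\lambda_1$ and $\lambda_3$); and (iii)~controlling the $\sqrt{\hat\lambda_i\lambda_i}$ prefactor from the identity above so that it can be absorbed into the $\lambda_i$ appearing in the numerators of \eqref{eq-theorem-CSV} without inflating the constants.
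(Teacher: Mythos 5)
Your proposal follows essentially the same route as the paper's proof: apply Lemma~\ref{lemma-sin} to the $M+L$ subspaces (point spans plus the $\mathcal{X}_k$) to get $\|\hat{\bm D}-{\bm D}\|_F\lesssim\varepsilon\|{\bm D}\|_F$ and hence a Frobenius bound on $\hat{\bm B}-{\bm B}$, then control the top two eigenpairs by a Weyl/Davis--Kahan-type perturbation bound (the paper invokes the gap-condition-free variant of \citet{Yu2014auseful}, which removes your caveat about $\hat\lambda_2$ staying separated) and assemble the coordinate error from the split of $\bigl\|\hat\lambda_i^{1/2}\hat{\bm v}_i-\lambda_i^{1/2}{\bm v}_i\bigr\|^2$, absorbing constants by rescaling $\varepsilon$ into $c_1,c_2$. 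The remaining differences (your exact identity versus the paper's $2a^2+2b^2$ triangle-inequality split, operator- versus Frobenius-norm Weyl, slightly different constants) are cosmetic, so this is essentially the paper's proof.
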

\begin{proof}
The proof is postponed to Appendix \ref{proof-theorem-CSV}.
\end{proof}
By \eqref{eq-theorem-CSV}, we have
$
\|\hat{{\bm C}}-{\bm C}\|_F^2\le2\|{\bm D}\|_F\varepsilon+o(\varepsilon).
$
When $\varepsilon$ is small, the error in visualization caused by JL random projection is also small.

\begin{remark}
The case considered in Theorem~\ref{theorem-CSV} is that data points are visualized in a two dimensional plot.
When we visualize data points in a three dimensional plot,
i.e., take the eigenvectors of ${\bm B}$ corresponding to the largest three eigenvalues in MDS step,
the visualization error will further increase by
$$
\left\|{\bm D}\right\|_F\varepsilon+\frac{4\left\|{\bm D}\right\|_F^2\lambda_3}{\min\{(\lambda_2-\lambda_3)^2,(\lambda_3-\lambda_4)^2\}}\varepsilon^2,
$$
under the assumption that $\lambda_3>\lambda_4$.
\end{remark}

\subsection{Compressed Active Subspace Detection}
\label{app:CSD}

Active subspace detection can be mathematically written as the following hypothesis problem.
\begin{align*}
\mathcal{H}_i:\quad {\bm x}={\bm U}_i{\bm s}+{\bm n},
\end{align*}
where ${\bm U}_i$ denotes the orthonormal basis for the $i$-th subspace $\mathcal{X}_i$
with dimension $d$,
and ${\bf n}$ denotes the additive Gaussian white noise.
Denote $\mathbb{P}_{\mathcal{H}_i}(\cdot)$ as the probability conditioned on hypothesis $\mathcal{H}_i$,
and $\bar{\mathcal{H}}_i$ as the event that hypothesis $\mathcal{H}_i$ is accepted.
Assume that the a prior probability of each hypothesis is the same.
Then $\frac1L\sum_{i=1}^L\left(1-\mathbb{P}_{\mathcal{H}_i}(\bar{\mathcal{H}}_i)\right)$ is defined as the error rate,
which is what we are interested in.

The Maximum Likelihood (ML) method for active subspace detection we are concerned follows that
\begin{align}\label{eq-SD-ML}
\bar{i}:=\arg\max_{1\le i\le L} \left\|{\bm U}_{i}^{\rm T}{\bm x}\right\|.
\end{align}
To analyze the performance of the detector given by \eqref{eq-SD-ML},
we first give the definition of affinity in terms of canonical angles,
and then show how affinity influences the detection error rate.
\begin{definition}\citep{soltanolkotabi2012geometric}\label{def-aff}
The affinity ${\rm aff}(\mathcal X_1,\mathcal X_2)$ between subspaces $\mathcal X_1$ and $\mathcal X_2$ with dimension $d$ is defined as below.
$$
{\rm aff}^2(\mathcal X_1,\mathcal X_2) = \sum_{k=1}^{d}\cos^2\theta_k,
$$
where $\theta_k$ denotes the $k$-th canonical angle.
\end{definition}

For ease of use, we assume the covariance matrix of noise to be any positive definite matrix,
and analyze the performance of detector \eqref{eq-SD-ML}.
\begin{lemma}\label{lemma-ML}
Assume that ${\bm s}$ follows Gaussian distribution $\mathcal{N}({\bm 0}, 1/d{\bm I})$,
and noise ${\bm n}$ follows Gaussian distribution $\mathcal{N}({\bm 0}, 1/d{\bm R}_n)$.
Denote the maximum eigenvalue of ${\bm R}_n$ as $\delta$.
Denote the affinity between subspace $\mathcal{X}_i$ and $\mathcal{X}_j$ as ${\rm aff}_{ij}$.
Then the correct probability of the detector \eqref{eq-SD-ML} is given by
\begin{align}\label{eq-acc-SD-ML}
\mathbb{P}_{\mathcal{H}_i}(\bar{\mathcal{H}}_i)&\ge
1-4\sum_{j\neq i}{\rm e}^{-C\left({\rm aff}_{ij},\delta\right)d},
\end{align}
where
\begin{align}\label{eq-ML-C}
C\left({\rm aff}_{ij},\delta\right):=\frac18\cdot \frac{\left(\left(1-{\rm aff}_{ij}^2/d-\delta\right)/\left(1+\delta\right)-8/d\right)^2}{4+\left(1-{\rm aff}_{ij}^2/d-\delta\right)/\left(1+\delta\right)-8/d}.
\end{align}
\end{lemma}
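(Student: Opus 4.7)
The plan is to reduce the probability of correct detection to a set of pairwise comparisons $\|{\bm U}_i^{\rm T}{\bm x}\|^2$ versus $\|{\bm U}_j^{\rm T}{\bm x}\|^2$, and to control each such comparison with a Bernstein-type concentration inequality for Gaussian quadratic forms.

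First, I would observe that under $\mathcal H_i$ the observation ${\bm x}$ is centered Gaussian with covariance $(1/d)({\bm U}_i{\bm U}_i^{\rm T}+{\bm R}_n)$, so that ${\bm U}_i^{\rm T}{\bm x}\sim\mathcal N({\bm 0},(1/d)({\bm I}+{\bm U}_i^{\rm T}{\bm R}_n{\bm U}_i))$ and ${\bm U}_j^{\rm T}{\bm x}\sim\mathcal N({\bm 0},(1/d)({\bm U}_j^{\rm T}{\bm U}_i{\bm U}_i^{\rm T}{\bm U}_j+{\bm U}_j^{\rm T}{\bm R}_n{\bm U}_j))$. Using Definition~\ref{def-aff} (which gives $\|{\bm U}_j^{\rm T}{\bm U}_i\|_F^2={\rm aff}_{ij}^2$) together with $\lambda_{\max}({\bm R}_n)=\delta$, a trace computation yields
$$
\mathbb E\|{\bm U}_i^{\rm T}{\bm x}\|^2\ge 1,\qquad \mathbb E\|{\bm U}_j^{\rm T}{\bm x}\|^2\le \frac{{\rm aff}_{ij}^2}{d}+\delta,
$$
so the expected gap between the two detector statistics is at least $1-{\rm aff}_{ij}^2/d-\delta$. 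Dividing by the worst-case variance scaling $(1+\delta)/d$ already isolates the quantity $g:=(1-{\rm aff}_{ij}^2/d-\delta)/(1+\delta)$ that appears inside $C({\rm aff}_{ij},\delta)$.

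Next, the correct-detection event $\bar{\mathcal H}_i$ contains the intersection over $j\ne i$ of the events $\{\|{\bm U}_i^{\rm T}{\bm x}\|^2>\|{\bm U}_j^{\rm T}{\bm x}\|^2\}$. For each fixed $j$ I would choose a threshold $\tau$ roughly halfway between the two means and cover the pairwise error by the union $\{\|{\bm U}_i^{\rm T}{\bm x}\|^2\le\tau\}\cup\{\|{\bm U}_j^{\rm T}{\bm x}\|^2\ge\tau\}$. Each of these is a deviation of a Gaussian quadratic form whose covariance has operator norm at most $(1+\delta)/d$ and Frobenius norm at most $\sqrt{d}\,(1+\delta)/d$. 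Applying the Laurent--Massart / Bernstein tail inequality
$$
\mathbb P\Big(\big|{\bm z}^{\rm T}{\bm A}{\bm z}-\mathrm{tr}({\bm A})\big|\ge t\Big)\le 2\exp\Big(-\min\Big(\tfrac{t^2}{4\|{\bm A}\|_F^2},\,\tfrac{t}{4\|{\bm A}\|_{\rm op}}\Big)\Big)
$$
to both sides and taking a union bound produces, for each $j$, a pairwise error probability of the form $4\exp(-C({\rm aff}_{ij},\delta)\,d)$; a final union bound over $j\ne i$ yields exactly \eqref{eq-acc-SD-ML}.

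The main obstacle is the careful bookkeeping needed to recover the precise form $C=\tfrac{1}{8}(g-8/d)^2/(4+g-8/d)$ in \eqref{eq-ML-C}. The factor $1/(1+\delta)$ inside $g$ arises from normalising by the covariance operator-norm bound; the Bernstein shape $t^2/(v+bt)$ is what turns into the denominator $4+(\cdot)$ once $v$ is tied to the Frobenius bound and $b$ to the operator-norm bound; and the $-8/d$ slack absorbs both the lower-order noise contribution to $\mathbb E\|{\bm U}_i^{\rm T}{\bm x}\|^2$ and the budget needed to separate the two sides of the pairwise comparison against a common threshold. Once these three normalisations are aligned consistently, the prefactor $4$ in the final bound simply tracks the four tail events involved (two sides per $j$, two-sided each), completing the proof of Lemma~\ref{lemma-ML}.
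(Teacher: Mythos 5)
Your skeleton matches the paper's: reduce the error event to pairwise comparisons, split each at a threshold $\tau=\tfrac12\bigl(1-{\rm aff}_{ij}^2/d-\delta\bigr)$ midway between the two (bounds on the) means, control each side by concentration of a Gaussian quadratic form, and finish with a union bound giving the prefactor $4$ (two events per $j$, each bounded two-sidedly). The one real difference is the concentration tool: the paper does not invoke Hanson--Wright/Laurent--Massart but proves its own tail bound by applying Gaussian concentration for Lipschitz functions to $f({\bm a})=\bigl(\sum_k\lambda_k^2a_k^2\bigr)^{1/2}$ and then squaring; the $-4/d$ shift in that lemma (hence the $-8/d$ inside $C$ after $\tau$ is halved) is precisely the gap between $\mathbb E\,f^2$ and $(\mathbb E f)^2$, i.e.\ the variance of the norm, not a ``lower-order noise contribution'' or a threshold-separation budget as you guess. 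Consequently your route will not reproduce \eqref{eq-ML-C} ``exactly'': a Bernstein-form bound for PSD quadratic forms, $\mathbb P\bigl(|{\bm z}^{\rm T}{\bm A}{\bm z}-\mathrm{tr}\,{\bm A}|\ge t\bigr)\le 2\exp\bigl(-t^2/(4\|{\bm A}\|_F^2+4\|{\bm A}\|_{\rm op}t)\bigr)$, has no $-8/d$ term at all (and the min-form with constant $1/4$ that you quote is slightly optimistic for the upper tail). What saves you is that with $\|{\bm A}\|_{\rm op}\le(1+\delta)/d$ and $\|{\bm A}\|_F^2\le(1+\delta)^2/d$ the resulting exponent $d\,u^2/(4(1+u))$, $u:=\tau/(1+\delta)$, dominates the paper's $d\,(u-4/d)^2/(4(2+u-4/d))$, so the stated bound \eqref{eq-acc-SD-ML} still follows as a weaker consequence; you should therefore phrase the last step as showing your exponent is at least $C({\rm aff}_{ij},\delta)$ rather than claiming the bookkeeping recovers the constant exactly.
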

\begin{proof}
The proof is postponed to Appendix \ref{sec:proof-SD-ML}.
\end{proof}

We design the compressed ML method for active subspace detection following Algorithm~\ref{alg:GRPLCA} as below.
\begin{definition}\label{def-CSD}
There are two steps to implement compressed ML method for active subspace detection.
1.~Projecting data ${\bm x}\in\mathbb{R}^N$ with partial Fourier matrices and getting the compressed data ${\bm y}$.
2.~Calculating $\bar{i}:=\arg\max_{1\le i\le L} \left\|{\bm V}_{i}^{\rm T}{\bm y}\right\|$,
where ${\bm V}_i$ denotes the orthonormal basis of the $i$-th compressed subspace.
\end{definition}

What follows is the performance analysis of compressed ML for active subspace detection.

\begin{theorem}\label{corollary-CSD-ML}
Under the same setting as Lemma~\ref{lemma-ML},
when we apply compressed ML method for active subspace detection,
for any $\varepsilon\in(0,1/2)$, there exist positive universal constants $c_1$, $c_2$,
such that for any $n>c_1\varepsilon^{-2}\max\{d, \log L\}$,
the correct probability satisfies
\begin{align}\label{eq-CSD-error}
\mathbb{P}_{\mathcal{H}_i}(\bar{\mathcal{H}}_i)\ge1-4\sum_{j\neq i}{\rm e}^{-C(\hat{{\rm aff}}_{ij},\hat{\delta})d}-{\rm e}^{-c_2\varepsilon^2n},
\end{align}
where $C(\cdot,\cdot)$ is defined in \eqref{eq-ML-C}, and
\begin{align*}
\hat{{\rm aff}}_{ij}^2:&={\rm aff}_{ij}^2(1+\varepsilon),\\
\hat{\delta}:&=\delta(1+\varepsilon)^2+\varepsilon^2.
\end{align*}
\end{theorem}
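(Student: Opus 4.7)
The plan is to reduce the compressed detection problem to an instance of Lemma~\ref{lemma-ML} in the reduced space $\mathbb{R}^n$, with affinity parameter inflated to $\hat{{\rm aff}}_{ij}$ and noise-eigenvalue parameter inflated to $\hat{\delta}$. First I would define the good event $\mathcal{E}$ on which simultaneously (i) every canonical angle is preserved up to $(1\pm\varepsilon)$ via Theorem~\ref{theorem-angle-multi}, and (ii) each ${\bm T}_l := {\bm V}_l^{\rm T}{\bm \Phi}{\bm U}_l$ has singular values in $[1-\varepsilon,1+\varepsilon]$ via Lemma~\ref{lemma:orthonormal-preserving}, where ${\bm V}_l$ is the orthonormal basis of ${\set Y}_l={\bm\Phi}\mathcal{X}_l$. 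A union bound over the $L$ subspaces and the $\binom{L}{2}$ pairs yields $\mathbb{P}(\mathcal{E})\ge 1-{\rm e}^{-c_2\varepsilon^2 n}$ whenever $n\ge c_1\varepsilon^{-2}\max\{d,\log L\}$, which accounts exactly for the extra ${\rm e}^{-c_2\varepsilon^2 n}$ summand in the theorem.

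Next I would work on $\mathcal{E}$ and decompose the data under $\mathcal{H}_i$ as ${\bm y}={\bm\Phi}{\bm U}_i{\bm s}+{\bm\Phi}{\bm n}={\bm V}_i{\bm T}_i{\bm s}+{\bm\Phi}{\bm n}$. The signal covariance $(1/d){\bm T}_i{\bm T}_i^{\rm T}$ deviates from the ideal $(1/d){\bm I}$ by at most $(1+\varepsilon)^2-1$ in operator norm, so I can introduce an auxiliary $\mathcal{N}({\bm 0},(1/d){\bm I})$ signal and absorb the mismatch into an isotropic perturbation of the noise; squaring the deviation produces the additive $\varepsilon^2$ term appearing in $\hat{\delta}$. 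For the affinity, on $\mathcal{E}$ one has $\cos^2\psi_k^{i,j}\le\cos^2\bigl((1-\varepsilon)\theta_k^{i,j}\bigr)$, and a Taylor expansion of $\cos^2$ together with a rescaling of $\varepsilon$ by a universal constant gives $\hat{{\rm aff}}_{ij}^2=\sum_k\cos^2\psi_k^{i,j}\le{\rm aff}_{ij}^2(1+\varepsilon)$ as claimed, essentially by the same Lipschitz argument used in Theorem~\ref{theorem-rip-subspace}.

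The technically hardest step, and what I expect to be the main obstacle, is controlling the compressed noise ${\bm\Phi}{\bm n}$. In the uncompressed setting the orthonormality of ${\bm U}_j$ gives $\lambda_{\max}({\bm U}_j^{\rm T}{\bm R}_n{\bm U}_j)\le\delta$ for free, but here the $d\times N$ map ${\bm V}_j^{\rm T}{\bm\Phi}$ is not a projection and its unrestricted operator norm is not usefully bounded by JL alone. I would proceed by writing ${\bm R}_n=\sum_k\mu_k{\bm f}_k{\bm f}_k^{\rm T}$ in its eigenbasis, expressing $\lambda_{\max}\bigl({\bm V}_j^{\rm T}{\bm\Phi}{\bm R}_n{\bm\Phi}^{\rm T}{\bm V}_j\bigr)$ as the supremum of $\sum_k\mu_k|\langle{\bm w},{\bm V}_j^{\rm T}{\bm\Phi}{\bm f}_k\rangle|^2$ over unit ${\bm w}\in\bm{S}^{d-1}$, and then using Lemma~\ref{lemma-line} (applied to the pair consisting of ${\bm f}_k$ and the appropriate direction in ${\set Y}_j$) to show that only the action of ${\bm V}_j^{\rm T}{\bm\Phi}$ on a specific $d$-dimensional object is relevant. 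A $\tfrac{1}{4}$-net covering of $\bm{S}^{d-1}$ in the spirit of Section~\ref{subsec:proof-lemma-sin}, combined with the union bound already subsumed by $\mathcal{E}$, should deliver $\lambda_{\max}\le\delta(1+\varepsilon)^2/d$, giving the $(1+\varepsilon)^2$ factor in $\hat{\delta}$. The delicate point is avoiding a spurious factor depending on $N$ or on $\|{\bm\Phi}\|_{\rm op}$, which requires exploiting that only $d$-dimensional slices of ${\bm R}_n$ enter the detector's likelihood ratio.

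Finally, with $\hat{{\rm aff}}_{ij}^2={\rm aff}_{ij}^2(1+\varepsilon)$ and $\hat{\delta}=\delta(1+\varepsilon)^2+\varepsilon^2$ in hand, I would invoke Lemma~\ref{lemma-ML} applied to the compressed problem with subspaces $\{{\set Y}_l\}_{l=1}^L$, the auxiliary isotropic signal, and the absorbed effective noise, to obtain $\mathbb{P}_{\mathcal{H}_i}(\bar{\mathcal{H}}_i\mid\mathcal{E})\ge 1-4\sum_{j\ne i}{\rm e}^{-C(\hat{{\rm aff}}_{ij},\hat{\delta})d}$. Combining this conditional bound with the failure probability of $\mathcal{E}$ via one last union bound yields \eqref{eq-CSD-error}, completing the proof.
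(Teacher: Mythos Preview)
Your overall plan is essentially the paper's: define a good JL event via Theorem~\ref{theorem-angle-multi} and Lemma~\ref{lemma:orthonormal-preserving}, decompose the compressed observation into an ideal isotropic signal plus an effective noise, bound the compressed affinity through CAP, and then invoke Lemma~\ref{lemma-ML} on the reduced problem. The paper writes the decomposition through the SVD ${\bm\Phi}{\bm U}_i={\bm V}_i{\bm\Lambda}_i{\bm Q}_i^{\rm T}$, taking $\bar{\bm y}={\bm V}_i{\bm Q}_i^{\rm T}{\bm s}$ and ${\bm w}={\bm V}_i({\bm\Lambda}_i-{\bm I}){\bm Q}_i^{\rm T}{\bm s}+{\bm\Phi}{\bm n}$; this is your ${\bm T}_i$-split in different notation, and the $\varepsilon^2$ contribution to $\hat\delta$ arises identically from the $({\bm\Lambda}_i-{\bm I})$ piece.

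The one substantive divergence, and where your plan has a gap, is the treatment of the compressed ambient noise ${\bm\Phi}{\bm n}$. The paper is far more direct here: it writes the effective noise covariance as $(1/d)\bigl[{\bm V}_i({\bm\Lambda}_i-{\bm I})^2{\bm V}_i^{\rm T}+{\bm\Phi}{\bm R}_n{\bm\Phi}^{\rm T}\bigr]$ and appeals straight to Lemma~\ref{lemma:orthonormal-preserving} to bound the top eigenvalue by $\varepsilon^2+\delta(1+\varepsilon)^2$, with no eigendecomposition of ${\bm R}_n$, no use of Lemma~\ref{lemma-line}, and no covering. Your route of expanding ${\bm R}_n=\sum_k\mu_k{\bm f}_k{\bm f}_k^{\rm T}$ and controlling each $\langle{\bm w},{\bm V}_j^{\rm T}{\bm\Phi}{\bm f}_k\rangle$ via Lemma~\ref{lemma-line} cannot escape the $N$-dependence you correctly flag: there can be $N$ eigendirections, and a union bound over them would force $n\gtrsim\varepsilon^{-2}\log N$. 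Your proposed remedy, that ``only $d$-dimensional slices of ${\bm R}_n$ enter the likelihood ratio'', does not resolve this either, since for isotropic ${\bm R}_n=\delta{\bm I}$ and fixed unit ${\bm w}$ the sum $\sum_k\mu_k|\langle{\bm w},{\bm V}_j^{\rm T}{\bm\Phi}{\bm f}_k\rangle|^2$ collapses to $\delta\|{\bm\Phi}^{\rm T}{\bm V}_j{\bm w}\|^2$, which genuinely sees the full ambient dimension regardless of which subspace is being tested. So this step does not go through as written; to align with the paper, drop the eigendecomposition machinery and replace it with the single invocation of Lemma~\ref{lemma:orthonormal-preserving} that the paper uses.
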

\begin{proof}
After projection, we have
${\bm y}={\bm \Phi}{\bm x}={\bm \Phi}{\bm U}_i{\bm s}+{\bm \Phi}{\bm n}$.
The singular decomposition gives ${\bm \Phi}{\bm U}_i={\bm V}_i{\bm \Lambda}_i{\bm Q}_i^{\rm T}$,
then we have
$$
{\bm y}={\bm V}_i{\bm \Lambda}_i{\bm Q}_i^{\rm T}{\bm s}={\bm V}_i{\bm Q}_i^{\rm T}{\bm s}+{\bm V}_i({\bm \Lambda}_i-{\bm I}){\bm Q}_i^{\rm T}{\bm s}.
$$
Denote $\bar{{\bm y}}:={\bm V}_i{\bm Q}_i^{\rm T}{\bm s}$ and  ${\bm w}:={\bm V}_i({\bm \Lambda}_i-{\bm I}){\bm Q}_i^{\rm T}{\bm s}+{\bm \Phi}{\bm n}$, we have ${\bm y}=\bar{\bm y}+{\bm w}$.
In the first item below, we will show that
$\bar{\bm y}$ can be regarded as a signal uniformly distributed within the $i$-th projected subspace $\mathcal{Y}_i$.
While in the second item, we will show that with probability at least $1-{\rm e}^{-c\varepsilon^2n}$, ${\bm w}$ can be regarded as Gaussian noise whose covariance matrix satisfying $\lambda_{\max}({\bm U}_i^{\rm T}{\bm R}_w{\bm U}_i)\le\delta(1+\varepsilon)^2+\varepsilon^2$.

1)
Noticing that ${\bm V}_i$ is an orthonormal basis for $\mathcal{Y}_i$,
we have $\bar{\bm y}\in\mathcal{Y}_i$.
Its projection onto $\mathcal{Y}_i$ is ${\bm V}_i^{\rm T}\bar{\bm y}={\bm Q}_i^{\rm T}{\bm s}\sim\mathcal{N}({\bm 0}, 1/d_i{\bm I})$.

2)
The covariance matrix of noise ${\bm n}$ is $1/d{\bm V}_i({\bm \Lambda}_i-{\bm I})^2{\bm V}_i^{\rm T}+1/d{\bm \Phi}^{\rm T}{\bm R}_n{\bm \Phi}$.
According to Lemma \ref{lemma:orthonormal-preserving}, and noticing that the diagonal entries of ${\bm \Lambda}_i$ are singular values of matrix ${\bm \Phi}{\bm U}_i$,
we have $\lambda_{\max}({\bm U}_i^{\rm T}{\bm R}_w{\bm U}_i)\le\varepsilon^2+\delta(1+\varepsilon)^2$ with probability at
least $1-{\rm e}^{-c_2\varepsilon^2 n}$ for any $n>c_1\varepsilon^{-2}\max\{d,\log L\}$,
where $c_1$, $c_2$ are two positive universal constants.
Then the compressed setting is similar to the noisy setting, whose correct probability is shown in \eqref{eq-acc-SD-ML}.
According to Theorem \ref{theorem-angle-multi}, canonical angles between $\mathcal{Y}_i$, $\mathcal{Y}_j$ satisfy $\psi_k^{i,j}\ge(1-\varepsilon)\theta_k^{i,j}$ with overwhelming probability.
Replacing $\theta_k^{i,j}$ with $(1-\varepsilon)\theta_k^{i,j}$ in \eqref{eq-acc-SD-ML}, we complete the proof.
\end{proof}

Comparing \eqref{eq-acc-SD-ML} and \eqref{eq-CSD-error},
it is clear that as long as the reduced dimension $n$ satisfies $n>c_1\varepsilon^{-2}\max\{d,\log L\}$,
JL random projection will not bring great degradation to the correct probability $\mathbb{P}_{\mathcal{H}_i}(\bar{\mathcal{H}}_i)$
for $i=1,\cdots,L$.

\section{Proof of Theorem~\ref{theorem-CSV}}
\label{proof-theorem-CSV}

Denote the eigenvectors of double-centered dissimilarity matrix ${\bm B}$ defined in \eqref{def-B} as ${\bm v}_1,\cdots,$ ${\bm v}_M$, corresponding to eigenvalues $\lambda_{1}>\lambda_2\ge\lambda_3\ge\cdots\ge\lambda_{M}$.
Denote the first two eigenvectors and eigenvalues of matrix $\hat{{\bm B}}:=-1/2{\bm H}\hat{{\bm D}}{\bm H}$ as $\hat{{\bm v}}_1$, $\hat{{\bm v}}_2$ and $\hat{\lambda}_1$, $\hat{\lambda}_2$, respectively.
Then we have
\begin{align}
\left\|{\bm C}-\hat{\bm C}\right\|_F^2=&
\sum_{i=1}^2\left\|\lambda_i^{1/2}{\bm v}_i-\hat{\lambda}_i^{1/2}\hat{{\bm v}}_i\right\|^2\nonumber\\
=&\sum_{i=1}^2\left\|\lambda_i^{1/2}{\bm v}_i-\lambda_i^{1/2}\hat{{\bm v}}_i+\lambda_i^{1/2}\hat{{\bm v}}_i-\hat{\lambda}_i^{1/2}\hat{{\bm v}}_i\right\|^2\nonumber\\
\le &\sum_{i=1}^2 \left(2\left\|\lambda_i^{1/2}{\bm v}_i-\lambda_i^{1/2}\hat{{\bm v}}_i\right\|^2+2\left(\lambda_i^{1/2}-\hat{\lambda}_i^{1/2}\right)^2\right)\nonumber\\
\le&\sum_{i=1}^2 \left(2\lambda_i\left\|{\bm v}_i-\hat{{\bm v}}_i\right\|^2+2\left|\lambda_i-\hat{\lambda}_i\right|\right).\label{eq-temp-8}
\end{align}
Next we need to use the following lemma about the perturbation theory of matrix eigenvalues and eigenvectors.

\begin{lemma}\label{lemma-perturbation-eigenvalue}(simplified from \citet{Yu2014auseful}, Theorem 2)
Assume matrices ${\bm A},\hat{{\bm A}}\in\mathbb{R}^{n\times n}$ are symmetric, with eigenvalues $\lambda_1\ge\cdots\ge\lambda_n$ and $\hat{\lambda}_1\ge\cdots\ge\hat{\lambda}_n$, respectively.
Fix $1\le i\le n$ and assume that $\lambda_{i-1}>\lambda_i>\lambda_{i+1}$,
where $\lambda_0:=\infty$ and $\lambda_{n+1}:=-\infty$.
Let ${\bm v}_i$ and $\hat{\bm v}_i$ be the vectors satisfying ${\bm A}{\bm v}_i=\lambda_i{\bm v}_i$ and
$\hat{{\bm A}}\hat{{\bm v}}_i=\hat{\lambda}_i\hat{{\bm v}}_i$, respectively,
and ${\bm v}_i^{\rm T}\hat{{\bm v}}_i\ge0$.
Then
$$
\sin\langle{\bm v}_i,\hat{{\bm v}}_i\rangle\le\frac{2\left\|\hat{{\bm A}}-{\bm A}\right\|_F}{\min\{\lambda_{i-1}-\lambda_i,\lambda_i-\lambda_{i+1}\}}
$$
and
$$
\left|\hat{\lambda}_i-\lambda_i\right|\le\left\|\hat{{\bm A}}-{\bm A}\right\|_F,
$$
where $\langle{\bm v}_i,\hat{{\bm v}}_i\rangle$ denotes the angle between ${\bm v}_i$ and $\hat{{\bm v}}_i$.
\end{lemma}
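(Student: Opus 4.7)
The statement bundles together two classical results — a Weyl-type eigenvalue perturbation bound and a Davis--Kahan-type sine bound on an individual eigenvector — so the plan is to treat them separately and to invoke the first inside the proof of the second.

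For the eigenvalue inequality I would simply apply Weyl's inequality, which yields $|\hat\lambda_i - \lambda_i| \le \|\hat{\bm A} - {\bm A}\|_{\mathrm{op}}$, and then use the elementary fact that the operator norm of any matrix is bounded by its Frobenius norm (the largest singular value is dominated by the $\ell_2$ norm of the singular-value vector). This is a single line once Weyl is cited, and it will then be used as a tool for the second bound.

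For the sine inequality, my plan is to expand $\hat{\bm v}_i$ in the orthonormal eigenbasis $\{{\bm v}_k\}$ of ${\bm A}$, writing $c_k = {\bm v}_k^{\rm T} \hat{\bm v}_i$, so that the sign assumption ${\bm v}_i^{\rm T} \hat{\bm v}_i \ge 0$ lets me identify $\sin^2\angle({\bm v}_i,\hat{\bm v}_i) = \sum_{k\ne i} c_k^2$ unambiguously. Rearranging $\hat{\bm A}\hat{\bm v}_i = \hat\lambda_i \hat{\bm v}_i$ gives $(\hat{\bm A} - {\bm A})\hat{\bm v}_i = \sum_k (\hat\lambda_i - \lambda_k) c_k {\bm v}_k$, and taking norms of both sides (using $\|M{\bm u}\|\le\|M\|_F$ for unit ${\bm u}$) produces the basic estimate $\|\hat{\bm A} - {\bm A}\|_F^2 \ge \sum_{k\ne i}(\hat\lambda_i - \lambda_k)^2 c_k^2$. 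Everything then hinges on showing $|\hat\lambda_i - \lambda_k| \ge \delta/2$ for every $k\ne i$, where $\delta := \min\{\lambda_{i-1}-\lambda_i,\ \lambda_i - \lambda_{i+1}\}$; plugging this into the estimate yields $\sin\angle \le 2\|\hat{\bm A} - {\bm A}\|_F/\delta$.

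The main obstacle — and the reason the factor of $2$ appears on the right-hand side — is that this last gap bound only follows after a case split, so the whole argument is subtler than it first appears. If $\|\hat{\bm A} - {\bm A}\|_F \ge \delta/2$ the advertised sine bound already exceeds $1$, so the inequality is vacuous and no work is needed. Otherwise the already-established eigenvalue inequality forces $|\hat\lambda_i - \lambda_i| < \delta/2$; combined with $\lambda_k \ge \lambda_{i-1}$ for $k<i$ and $\lambda_k \le \lambda_{i+1}$ for $k>i$, a triangle-inequality argument then yields $|\hat\lambda_i - \lambda_k| > \delta/2$ for all $k\ne i$, closing the loop. Navigating this case split (rather than any intricate calculation) is the only non-routine aspect; a sharper one-sided Davis--Kahan argument using spectral-projector splitting would remove the factor of $2$, but the weaker lemma stated here sidesteps that machinery entirely.
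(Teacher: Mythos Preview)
The paper does not actually prove this lemma; it merely cites it as a simplification of Theorem~2 in \citet{Yu2014auseful} (a variant of the Davis--Kahan $\sin\Theta$ theorem that trades the perturbed eigengap for the unperturbed one at the cost of a factor of~$2$). Your proposal therefore supplies strictly more than the paper does, and the argument you give is correct: Weyl's inequality together with $\|\cdot\|_{\rm op}\le\|\cdot\|_F$ handles the eigenvalue bound in one line, the residual identity $(\hat{\bm A}-{\bm A})\hat{\bm v}_i=\sum_k(\hat\lambda_i-\lambda_k)c_k{\bm v}_k$ gives the basic Davis--Kahan estimate, and the case split on whether $\|\hat{\bm A}-{\bm A}\|_F\ge\delta/2$ (vacuous case) or $<\delta/2$ (use Weyl to force $|\hat\lambda_i-\lambda_k|>\delta/2$ for $k\ne i$) is precisely the mechanism by which Yu et al.\ replace the perturbed gap with the unperturbed one. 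In other words, your proof is essentially the proof in the cited reference, specialized to a single eigenvector, so there is nothing to correct.
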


Denote
$$
{\bm E}:=\hat{{\bm B}}-{\bm B}=-\frac12{\bm H}\left(\hat{{\bm D}}-{\bm D}\right){\bm H}
$$
as the error of centered dissimilarity matrix caused by random projection.
According to Lemma~\ref{lemma-perturbation-eigenvalue},
we have for $i=1,2$,
\begin{align}
\left\|{\bm v}_i-\hat{{\bm v}}_i\right\|^2&\le 2\sin^2\langle{\bm v}_i,\hat{{\bm v}}_i\rangle
\le
\frac{8\left\|{\bm E}\right\|_F^2}{\min\{(\lambda_{i-1}-\lambda_i)^2,(\lambda_i-\lambda_{i+1})^2\}},\label{eq-temp-6}\\
\left|\hat{\lambda}_i-\lambda_i\right|&\le\left\|{\bm E}\right\|_F.\label{eq-temp-7}
\end{align}

Now we further simplify the F-norm of ${\bm E}$.
Considering that matrix ${\bm H}$ is a projection matrix with spectral norm no more than $1$,
we have
$$
\left\|{\bm E}\right\|_F\le\frac12\left\|\hat{{\bm D}}-{\bm D}\right\|_F.
$$
According to Lemma~\ref{lemma-sin}, for any $n\ge c_1\varepsilon^{-2}\max\{d,\log(M+L)\}$, with probability at least $1-{\rm e}^{-c_2\varepsilon^2n}$,
we have
$$
\big\|\hat{{\bm D}}-{\bm D}\big\|_F\le\left\|{\bm D}\right\|_F\varepsilon,
$$
 and thus
\begin{align}\label{eq-temp-5}
\left\|{\bm E}\right\|_F\le\frac12\left\|{\bm D}\right\|_F\varepsilon.
\end{align}

Plugging \eqref{eq-temp-6}, \eqref{eq-temp-7}, and \eqref{eq-temp-5} into \eqref{eq-temp-8},
we get \eqref{eq-theorem-CSV} and complete the proof.

\section{Proof of Lemma~\ref{lemma-ML}}
\label{sec:proof-SD-ML}

For any $j\neq i$, denote the principal vectors in $\mathcal{X}_j$ for the subspaces pair $\mathcal{X}_i$ and $\mathcal{X}_j$ as $\tilde{{\bm U}}_j$.
According to the process of ML method, we have
\begin{align*}
1-\mathbb{P}_{\mathcal{H}_i}(\bar{\mathcal{H}}_i)&\le \sum_{j\neq i}\mathbb{P}\left(\left\|\tilde{{\bm U}}_i^{\rm T}{\bm U}_i{\bm s}+\tilde{{\bm U}}_i^{\rm T}{\bm n}\right\|^2-\left\|\tilde{{\bm U}}_j^{\rm T}{\bm U}_i{\bm s}+\tilde{{\bm U}}_j^{\rm T}{\bm n}\right\|^2<0\right)\nonumber\\
&=\sum_{j\neq i}\mathbb{P}\left(\left\|\tilde{{\bm s}}+\tilde{{\bm U}}_i^{\rm T}{\bm n}\right\|^2-\left\|{\bm \Lambda}_{ij}\tilde{{\bm s}}+\tilde{{\bm U}}_j^{\rm T}{\bm n}\right\|^2<0\right),
\end{align*}
where $\tilde{\bm s}:=\tilde{\bm U}_i^{\rm T}{\bm U}_i{\bm s}$ is the rotation of ${\bm s}$ and it still follows distribution $\mathcal{N}({\bm 0},1/d{\bm I})$.
Now it suffices to prove the following inequalities
\begin{align*}
\mathbb{P}\left(\left\|\tilde{{\bm s}}+\tilde{{\bm U}}_i^{\rm T}{\bm n}\right\|^2-1<-\tau\right)
&<2{\rm e}^{-\tilde{C}(\tau,\delta)d},\\
\mathbb{P}\left(\left\|{\bm \Lambda}_{ij}\tilde{{\bm s}}+\tilde{{\bm U}}_j^{\rm T}{\bm n}\right\|^2-\left(\frac{{\rm aff}_{ij}^2}{d}+\delta\right)>\tau\right)
&<2{\rm e}^{-\tilde{C}(\tau,\delta)d},
\end{align*}
where
\begin{align*}
\tilde{C}(\tau,\delta):&=\frac{\left(\tau/(1+\delta)-4/d\right)^2}{4\left(2+\tau/\left(1+\delta\right)-4/d\right)},\\
\tau:&=\frac12\left(1-\frac{{\rm aff}_{ij}^2}{d}-\delta\right).
\end{align*}
This can be immediately obtained from the following lemma by noticing that
$\sigma_1({\bm I}+{\bm R}_n)\le1+\delta$.
\begin{lemma}
Assume ${\bm a}\in\mathbb{R}^d$ and its $i$-th entry $a_i{\overset{i.i.d.}{\sim}}\mathcal{N}(0,1/d)$.
Then for $0\le\lambda_1\le\cdots\le\lambda_d\le 1$,
for any $\tau>0$,
we have
$$
\mathbb{P}\left(\left|\sum_{i=1}^d\lambda_i^2a_i^2-\frac{\sum_{i=1}^d\lambda_i^2}{d}\right|>\tau\right)
<2\exp\left(\frac{d}{4}\cdot\frac{(\tau-4/d)^2}{2+\tau-4/d}\right).
$$
\end{lemma}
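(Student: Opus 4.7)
The claim is a classical-style concentration inequality for a weighted sum of squared Gaussians, so the plan is to use the Chernoff (MGF) method combined with the standard sub-exponential MGF bound for centered $\chi^2_1$ variables. First I will standardize: set $X_i := \sqrt{d}\,a_i\sim\mathcal N(0,1)$ i.i.d., so that the centered variable of interest is $S := \sum_i \lambda_i^2 a_i^2 - \frac{1}{d}\sum_i\lambda_i^2 = \frac{1}{d}\sum_i\lambda_i^2(X_i^2-1)$, and the goal becomes a two-sided tail bound on $|S|$.

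Second, I will compute the log-MGF. Using $\mathbb E[e^{u(X_i^2-1)}] = e^{-u}(1-2u)^{-1/2}$ for $u<1/2$ together with the elementary inequality $-u-\tfrac12\log(1-2u)\le u^2/(1-2u)$ on $[0,1/2)$ (which follows by comparing Taylor series term-by-term, since $2^{k-1}/k\le 2^{k-2}$ for $k\ge 2$), I get, plugging in $u=s\lambda_i^2/d$ and summing under the constraint $s<d/2$ (ensured by $\lambda_i\le 1$),
$$
\log\mathbb E[e^{sS}]\le\sum_i\frac{(s\lambda_i^2/d)^2}{1-2s\lambda_i^2/d}\le\frac{s^2/d^2\cdot\sum_i\lambda_i^4}{1-2s/d}\le\frac{s^2/d}{1-2s/d},
$$
where the final step uses $\sum_i\lambda_i^4\le\sum_i\lambda_i^2\le d$ (both consequences of $\lambda_i\le 1$).

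Third, I will apply the Chernoff bound and pick a specific (rather than globally-optimal) value of $s$ to match the exact form in the statement. A choice of the shape $s=\frac{d(\tau-4/d)}{2(2+\tau-4/d)}$ will reproduce the Bernstein-style exponent $\frac{d}{4}\cdot\frac{(\tau-4/d)^2}{2+\tau-4/d}$; the $4/d$ offset absorbs the nonlinearity of $1/(1-2s/d)$ and the slack incurred when bounding $\sum_i\lambda_i^4$ by $d$. Running the identical argument on $-S$ (the negative tail is even easier, since $(1+2u)^{-1/2}\le 1$) and combining the two tails by a union bound produces the factor of $2$ in front of the exponential.

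The main obstacle is not the probabilistic content, which is essentially Laurent--Massart chi-squared concentration, but rather reproducing the very specific Bernstein form with its offset $4/d$. One must verify that the chosen $s$ lies in $(0,d/2)$ throughout the regime of interest (roughly $\tau>4/d$, so that the bound is nontrivial), and confirm that the several inequalities used --- $\sum\lambda_i^4\le d$, uniform replacement of $1-2s\lambda_i^2/d$ by $1-2s/d$, and the choice of $s$ itself --- together leave exactly the stated exponent rather than a strictly sharper one (such as the tight $d(\tau+1-\sqrt{1+2\tau})/2$ that would come from the genuinely optimal $s$).
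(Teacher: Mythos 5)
Your argument is correct in substance, but it takes a genuinely different route from the paper. The paper never touches moment generating functions: it observes that $f({\bm a})=\sqrt{\sum_{i}\lambda_i^2a_i^2}=\|\mathrm{diag}(\lambda_1,\dots,\lambda_d)\,{\bm a}\|$ has gradient norm at most $1$ (because $\lambda_i\le 1$), applies Gaussian concentration for Lipschitz functions to get sub-Gaussian tails of $f$ around $\mathbb{E}f$, bounds $\mathrm{Var}(f)\le 4/d$ by integrating those tails so that $(\mathbb{E}f)^2$ lies within $4/d$ of $\sum_i\lambda_i^2/d$, and then squares; in that proof the $4/d$ offset is precisely this variance/mean-shift term. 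Your Chernoff--Bernstein treatment of the weighted $\chi^2$ sum (Laurent--Massart style) is the more standard, self-contained alternative: the log-MGF bound $\frac{s^2/d}{1-2s/d}$ is right, and with $u:=\tau-4/d$ and your choice $s=\frac{du}{2(2+u)}$ the Chernoff exponent evaluates to $\frac{3du^2}{8(2+u)}+\frac{2u}{2+u}\ge \frac{d}{4}\cdot\frac{u^2}{2+u}$, so you indeed recover (in fact slightly beat) the stated bound; here $4/d$ is pure slack rather than a variance term. Both proofs establish the inequality with the evidently intended minus sign in the exponent, and both are really confined to the regime $\tau\ge 4/d$, which you flag and which is where the paper applies the lemma (its own proof parametrizes $\tau=2t\sigma+t^2+4/d\ge 4/d$).

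One small repair: your parenthetical justification for the lower tail, ``$(1+2u)^{-1/2}\le 1$,'' is not sufficient by itself --- it bounds the log-MGF of $-S$ only linearly in $s$, which gives no decay when $\tau$ is smaller than $\sum_i\lambda_i^2/d$. What the ``identical argument'' actually needs is the quadratic bound $u-\tfrac12\log(1+2u)\le u^2$ for $u\ge 0$ (the difference has derivative $4u^2/(1+2u)\ge 0$ and vanishes at $0$), which is even stronger than the bound you used on the positive side; with it the same choice of $s$ and the union bound give the factor $2$ as you describe.
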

\begin{proof}
Let
\begin{equation*}
f({\bm a}) = \sqrt{\sum\nolimits_{i=1}^d \lambda_i^2 a_i^2},
\end{equation*}
then by calculation
\begin{equation*}
\|\nabla f({\bm a})\| = \sqrt{\frac{\sum_{i=1}^d \lambda_i^4 a_i^2}{\sum_{i=1}^d \lambda_i^2 a_i^2}} \le 1.
\end{equation*}
Hence, $f({\bm a})$ is $1$-Lipschitz and according to the concentration in Gauss space
(\citep{ledoux2001concentration}, or see Proposition 5.34 in \citet{Vershynin2010Introduction}), we have
\begin{equation*}
\mathbb{P}\left(\sqrt{\sum\nolimits_{i=1}^d \lambda_i^2 a_i^2} - \mathbb{E} \sqrt{\sum\nolimits_{i=1}^d \lambda_i^2 a_i^2} > t \right) < {\rm e}^{-\frac{dt^2}{2}}.
\end{equation*}
Take $f(x) = -\sqrt{\sum_{i=1}^d \lambda_i^2 x_i^2}$, then similarly
\begin{equation*}
\mathbb{P}\left(\sqrt{\sum\nolimits_{i=1}^d \lambda_i^2 a_i^2} - \mathbb{E} \sqrt{\sum\nolimits_{i=1}^d \lambda_i^2 a_i^2} < -t \right) < {\rm e}^{-\frac{dt^2}{2}}.
\end{equation*}
Moreover, $\left(\mathbb{E} \sqrt{\sum_{i=1}^d \lambda_i^2 a_i^2}\right)^2 \le \mathbb{E} \sum_{i=1}^d \lambda_i^2 a_i^2 = \frac{\sum_{i=1}^d \lambda_i^2}{d}$ and
\begin{align*}
&\left(\mathbb{E} \sqrt{\sum\nolimits_{i=1}^d \lambda_i^2 a_i^2}\right)^2
\\=& \mathbb{E} \sum_{i=1}^d \lambda_i^2 a_i^2 - {\rm Var}\left(\sqrt{\sum\nolimits _{i=1}^d \lambda_i^2 a_i^2}\right) \\
=& \frac{\sum_{i=1}^d \lambda_i^2}{d} - \int_t t^2 {\rm d}\mathbb{P}\left(\left|\sqrt{\sum\nolimits_{i=1}^d \lambda_i^2 a_i^2} - \mathbb{E} \sqrt{\sum\nolimits_{i=1}^d \lambda_i^2 a_i^2}\right| < t \right) \\
=& \frac{\sum_{i=1}^d \lambda_i^2}{d} - \int_t 2t \mathbb{P}\left(\left|\sqrt{\sum\nolimits_{i=1}^d \lambda_i^2 a_i^2} - \mathbb{E} \sqrt{\sum\nolimits_{i=1}^d \lambda_i^2 a_i^2}\right| > t \right) {\rm d}t \\
\ge& \frac{\sum_{i=1}^d \lambda_i^2 - 4}{d}.
\end{align*}
Taking square, we have
$$
\mathbb{P}\left(\left|\sum_{i=1}^d \lambda_i^2 a_i^2 - \frac{\sum_i \lambda_i^2}{d}\right| > 2t\sqrt{\frac{\sum\nolimits_{i=1}^d \lambda_i^2}{d}} + t^2 + \frac{4}{d}\right) < 2{\rm e}^{-\frac{dt^2}{2}}.
$$
By defining
$$
\tau:=2t\sqrt{\frac{\sum_{i=1}^d \lambda_i^2}{d}} + t^2 + \frac{4}{d},
$$
we complete the proof.
\end{proof}

\section{Covering Arguments}
\label{app:covering-argument}

A convenient tool to discretize compact sets are nets.
In our proof, we will only need to discretize the unit Euclidean sphere in the definition of $\ell_2$-norm.
Let us recall a general definition of the $\varepsilon$-net.
\begin{definition}\citep{Vershynin2010Introduction}
An \emph{$\varepsilon$-net} in a totally bounded metric space $(X,d)$ is a finite subset $\mathcal N$
of $X$ such that for any $x\in X$ we have
\begin{equation*}
    \min_{z\in\mathcal N}d(x,z)<\varepsilon.
\end{equation*}
The \emph{metric entropy} of $X$ is a function $N(X,\varepsilon)$
defined as the minimum cardinality of an $\varepsilon$-net of $X$.
\end{definition}

The metric entropy of the Euclidean unit ball
can be easily bounded as follows.
\begin{lemma}(\citet{Foucart2017Mathematical}, Proposition C.3)\label{lemma:metric-entropy-bound}
Let $B_n$ be the unit ball in $\mathbb R^n$. Then
\begin{equation*}
    N(B_n, \varepsilon)\le\left(1+\frac{2}{\varepsilon}\right)^n.
\end{equation*}
\end{lemma}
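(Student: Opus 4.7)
The plan is to use the classical volumetric packing argument. First I would not try to construct an $\varepsilon$-net directly; instead I would take $\mathcal N \subset B_n$ to be a maximal $\varepsilon$-separated subset, i.e., a set with pairwise distances at least $\varepsilon$ that cannot be extended without losing that property. Such a set exists by a standard greedy/Zorn-type argument, and by maximality any $x \in B_n$ must lie within distance $\varepsilon$ of some point of $\mathcal N$ (else $\mathcal N \cup \{x\}$ would still be $\varepsilon$-separated). Hence $\mathcal N$ is automatically an $\varepsilon$-net, which reduces the problem to bounding $|\mathcal N|$.

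To bound $|\mathcal N|$, I would use disjoint open balls of radius $\varepsilon/2$ centered at the points of $\mathcal N$. The $\varepsilon$-separation of $\mathcal N$ ensures these balls are pairwise disjoint. Since each center lies in $B_n$, every such ball is contained in the enlarged ball $(1+\varepsilon/2)B_n$. Comparing Lebesgue volumes and using the scaling $\mathrm{vol}(rB_n) = r^n \mathrm{vol}(B_n)$,
\[
|\mathcal N| \cdot (\varepsilon/2)^n \mathrm{vol}(B_n) \;\le\; (1+\varepsilon/2)^n \mathrm{vol}(B_n),
\]
which rearranges to
\[
|\mathcal N| \;\le\; \left(\frac{1+\varepsilon/2}{\varepsilon/2}\right)^n \;=\; \left(1+\frac{2}{\varepsilon}\right)^n.
\]
Since $N(B_n, \varepsilon)$ is the infimum of cardinalities of $\varepsilon$-nets, this gives the desired bound.

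There is no genuine obstacle here; this is a textbook argument and the only delicate point is conceptual, namely the choice to work with a maximal $\varepsilon$-separated set rather than attempting to produce an $\varepsilon$-net directly. The separation property is what enables the disjointness of the shrunken balls and hence the volume comparison, while maximality supplies the covering property for free.
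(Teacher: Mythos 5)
Your argument is correct and is exactly the standard volumetric proof of this bound; the paper itself does not prove the lemma but cites it from \citet{Foucart2017Mathematical} (Proposition C.3), where the same maximal-separated-set plus volume-comparison argument is used. The only point worth noting is that finiteness of the maximal $\varepsilon$-separated set (required by the paper's definition of an $\varepsilon$-net) is itself supplied by your volume bound, since any $\varepsilon$-separated subset of $B_n$ has cardinality at most $\left(1+\frac{2}{\varepsilon}\right)^n$, so a maximal one of finite cardinality exists without any Zorn-type appeal.
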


$\varepsilon$- net allows us to evaluate the spectral norm of a square matrix ${\bm A}$
by only investigating a discrete set.
\begin{lemma}\citep{Vershynin2010Introduction}\label{lemma:covering-approximation}
Suppose $\mathcal N$ is a $\frac14$-net of $\bm S^{n-1}$.
Let ${\bm A}$ be an $n\times n$ matrix.
We have
    \begin{equation*}
        \max_{{\bm x}\in\bm{S}^{n-1}}\left|{\bm x}^{\rm T}{\bm A}{\bm x}\right|\le 2\max_{\x\in\mathcal N}\left|{\bm x}^{\rm T}{\bm A}{\bm x}\right|.
    \end{equation*}
\end{lemma}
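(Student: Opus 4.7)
The plan is to argue by contradiction of the usual sort: suppose $x^\star\in\bm S^{n-1}$ attains the maximum $M:=\max_{x\in\bm S^{n-1}}|x^{\mathrm T}{\bm A}x|$, find a close net point $z\in\mathcal N$, and show that $|z^{\mathrm T}{\bm A}z|$ cannot be much smaller than $M$, so that $M \le 2\max_{z\in\mathcal N}|z^{\mathrm T}{\bm A}z|$ follows.

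First, I would observe that replacing ${\bm A}$ by its symmetric part $\tfrac12({\bm A}+{\bm A}^{\mathrm T})$ does not change any quadratic form $x^{\mathrm T}{\bm A}x$, so without loss of generality I may assume ${\bm A}$ is symmetric. For symmetric ${\bm A}$ the numerical radius coincides with the operator norm, i.e.\ $\|{\bm A}\|_{\mathrm{op}}=M$; this identity is the crucial input that converts bounds on arbitrary bilinear forms $y^{\mathrm T}{\bm A}x$ into bounds in terms of $M$ itself.

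Next, by the definition of a $\tfrac14$-net I pick $z\in\mathcal N$ with $\|x^\star-z\|\le\tfrac14$, and use the algebraic identity
\begin{equation*}
x^{\star\mathrm T}{\bm A}x^\star - z^{\mathrm T}{\bm A}z
= x^{\star\mathrm T}{\bm A}(x^\star-z) + (x^\star-z)^{\mathrm T}{\bm A}z.
\end{equation*}
Applying the Cauchy--Schwarz bound together with $\|{\bm A}y\|\le\|{\bm A}\|_{\mathrm{op}}\|y\|=M\|y\|$ to each of the two terms, and using $\|x^\star\|=1$, $\|z\|\le 1$, $\|x^\star-z\|\le\tfrac14$, gives
\begin{equation*}
\bigl|x^{\star\mathrm T}{\bm A}x^\star - z^{\mathrm T}{\bm A}z\bigr|
\le M\cdot\tfrac14 + M\cdot\tfrac14 = \tfrac{M}{2}.
\end{equation*}
Hence $|z^{\mathrm T}{\bm A}z|\ge M-\tfrac{M}{2}=\tfrac{M}{2}$, which rearranges to the claimed inequality $M\le 2\max_{z\in\mathcal N}|z^{\mathrm T}{\bm A}z|$.

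The only subtle step is the reduction to symmetric ${\bm A}$ and the use of the numerical-radius/operator-norm identity; without it one only controls $|x^{\star\mathrm T}{\bm A}(x^\star-z)|$ by the operator norm of ${\bm A}$, which in general is larger than the numerical radius. I do not anticipate any further obstacle: all the remaining estimates are one-line applications of the triangle inequality and Cauchy--Schwarz, and the numerical factor $2$ in the conclusion is a direct consequence of choosing a $\tfrac14$-net (two terms, each of size at most $M/4$).
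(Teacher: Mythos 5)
Your proof is correct: reducing to the symmetric part of ${\bm A}$, using that the numerical radius equals the operator norm for symmetric matrices, and splitting $x^{\star\rm T}{\bm A}x^\star - z^{\rm T}{\bm A}z$ into two bilinear terms each bounded by $M/4$ is exactly the standard argument. The paper itself states this lemma without proof, citing \citet{Vershynin2010Introduction}, and your argument is essentially Vershynin's proof (there stated for symmetric ${\bm A}$; your symmetrization step correctly extends it to the general statement given here).
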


\vskip 0.2in
\bibliography{refs}

\end{document}